\title{\textbf{Optimal Comparator Adaptive Online Learning with Switching Cost}}
\author{
  Zhiyu Zhang \\
  Boston University\\
  \texttt{zhiyuz@bu.edu}\\
  \and
  Ashok Cutkosky \\
  Boston University\\
  \texttt{ashok@cutkosky.com}\\
  \and
  Ioannis Ch. Paschalidis\\
  Boston University\\
  \texttt{yannisp@bu.edu}\\
}
\date{\vspace{-5ex}}
\begin{document}
\maketitle

\begin{abstract}
Practical online learning tasks are often naturally defined on unconstrained domains, where optimal algorithms for general convex losses are characterized by the notion of \emph{comparator adaptivity.} In this paper, we design such algorithms in the presence of switching cost -- the latter penalizes the typical optimism in adaptive algorithms, leading to a delicate design trade-off. Based on a novel \emph{dual space scaling} strategy discovered by a continuous-time analysis, we propose a simple algorithm that improves the existing comparator adaptive regret bound \cite{zhang2022adversarial} to the optimal rate. The obtained benefits are further extended to the expert setting, and the practicality of the proposed algorithm is demonstrated through a sequential investment task.
\end{abstract}

\section{Introduction}\label{section:intro}

Online learning \cite{cesa2006prediction,hazan2016introduction,orabona2019modern} is a powerful framework for modeling sequential decision making tasks, such as neural network training, financial investment and robotic control. In each round, an agent picks a prediction $x_t$ in a convex domain $\calX$, receives a convex and Lipschitz loss function $l_t$ that depends on $x_1,\ldots,x_t$, and suffers the loss $l_t(x_t)$. The goal is to ensure that in any environment, the cumulative loss of the agent is never much worse than that of any fixed decision $u\in\calX$. That is, one aims to upper-bound the regret
\begin{equation*}
\sum_{t=1}^T\spar{l_t(x_t)-l_t(u)},
\end{equation*}
for all time horizon $T\in\N_+$, comparator $u\in\calX$ and loss sequence $l_1,\ldots,l_T$. 

If there exists a best fixed decision $u^*=\max_x\sum_{t=1}^Tl_t(x)$ in hindsight, then the regret with respect to any $u\in\calX$ is dominated by the regret with respect to $u^*$. In the context of training machine learning models, $u^*$ corresponds to the model parameter that minimizes the training error. Hence, intuitively, the regret bound characterizes how fast the algorithm finds $u^*$ through training. 

Most classical online learning algorithms are \emph{minimax} in nature, only tuned to optimize the worst-case regret. For example, if the domain $\calX$ is bounded, then the maximum distance between the best fixed decision $u^*$ and the initialization $x_1$ of the algorithm is the diameter $D$ of $\calX$. Only considering this worst case, it suffices to use \emph{Online Gradient Descent} (OGD) \cite{zinkevich2003online} with learning rate $\eta_t\propto D/\sqrt{t}$. The result is a $O(D\sqrt{T})$ regret bound that holds uniformly for all comparators $u\in\calX$. Such a minimax reasoning is prevalent, but limited in two substantial ways. 

\begin{enumerate}
\item It requires a bounded domain. Many practical problems are naturally unconstrained, making such arguments inapplicable. 
\item Practical applications are usually not the worst case, which means that the minimax bound $O(D\sqrt{T})$ is typically loose. Think about the situation where we have a prior guess of $u^*$, from either domain knowledge or pre-training. Using this prior as the initial prediction $x_1$, we should expect a provable gain over arbitrarily initializing the algorithm -- specifically, had we known the \emph{correct} distance $\norm{u^*-x_1}$, we could have picked $\eta_t\propto\norm{u^*-x_1}/\sqrt{t}$ in OGD, resulting in $O(\norm{u^*-x_1}\sqrt{T})$ regret. Achieving this goal \emph{without the prior knowledge of $\norm{u^*-x_1}$} is of both theoretical and practical importance. 
\end{enumerate}

Recent studies of \emph{comparator adaptive} online learning\footnote{Also called \emph{parameter-free} online learning due to historical reasons.} \cite{luo2015achieving,orabona2016coin,cutkosky2018black} aim to address these issues. The domain does not need to be bounded, and the regret bound is $\tilde O(d(u,x_1)\sqrt{T})$, where $d(\cdot,\cdot)$ is some suitable distance measure. Intuitively, these algorithms are both \emph{optimistic} and \emph{robust}: given prior information on $u^*$, we can pick $x_1$ such that $d(u^*,x_1)$, and consequently the regret bound, are both low. Meanwhile, even when our initialization $x_1$ is wrong (i.e., $d(u^*,x_1)$ is large), the regret bound is still \emph{almost as good} (up to logarithmic factors) as that of the minimax algorithm with the best tuning in hindsight. Such properties have shown benefits in diverse applications, e.g., \cite{orabona2017training,jun2019parameter,van2019user}. 

In this paper, we extend the design of these algorithms to a classical setting with switching costs. Here the agent is penalized not only by its loss, but also by how fast it changes its predictions. Practically, switching costs are useful whenever the smooth operation of a system is favored, such as in network routing, control of electrical grid, portfolio management with transaction costs, etc. Recently they also naturally show up in online decision problems with \emph{long term effects}, such as \emph{nonstochastic control} \cite{agarwal2019online}. With a given weight $\lambda\geq 0$ and a norm $\norms{\cdot}$, our goal is to guarantee a comparator adaptive bound for the \emph{augmented regret}
\begin{equation*}
\sum_{t=1}^T\spar{l_t(x_t)-l_t(u)}+\lambda\sum_{t=1}^{T-1}\norm{x_t-x_{t+1}}.
\end{equation*}

While gradient descent can incorporate switching costs by simply scaling its learning rate, extending comparator adaptive algorithms is a lot harder. Just like other adaptive algorithms \cite{duchi2011adaptive,daniely2015strongly}, the key idea of comparator adaptivity is to quickly respond to the incoming information and hedge aggressively. Switching costs, on the other hand, encourage the agent to stay still. Therefore, achieving our goal requires a delicate balance between these two opposite considerations. 

Similar trade-offs between adaptivity and switching costs have led to intriguing results in the past. For example, Gofer \cite{gofer2014higher} showed that the gradient variance adaptivity well-studied in the switching-free setting is impossible with normed switching costs, thus establishing a clear separation caused by the latter. Daniely and Mansour \cite{daniely2019competitive} showed that a common analytical technique for switching costs is incompatible to the so-called \emph{``strong adaptivity''} (i.e., a form of adaptivity w.r.t. nonstationary comparators). Regarding comparator adaptivity, our prior work \cite{zhang2022adversarial} proposed the first comparator adaptive algorithm with switching costs, but the obtained regret bound does not achieve the optimality criterion of the switching-free setting. The present paper closes this gap. 

\subsection{Contribution}

We develop comparator adaptive algorithms for two classical settings: ($i$) \emph{Online Linear Optimization} (OLO) with switching cost; ($ii$) \emph{Learning with Expert Advice} (LEA) with switching cost.

\begin{enumerate}
\item For one-dimensional unconstrained OLO with switching costs, assuming loss gradients $\abs{g_t}\leq 1$ and initial prediction\footnote{For general $x_1$, we can replace $\abs{u}$ in the regret bound by $\abs{u-x_1}$.} $x_1=0$, we propose an algorithm that guarantees
\begin{equation*}
\sum_{t=1}^Tg_t(x_t-u)+\lambda\sum_{t=1}^{T-1}\abs{x_t-x_{t+1}}\leq C\sqrt{\lambda T}+\abs{u}O\rpar{\sqrt{\lambda T\log(C^{-1}\abs{u})}},
\end{equation*}
where $C>0$ is any hyperparameter chosen by the user (Section~\ref{section:olo}). Our bound achieves several forms of optimality with respect to $\lambda$, $\abs{u}$ and $T$, improving the prior work \cite{zhang2022adversarial}. Extensions to bounded domains and general dimensional domains are presented, which include some new technical components (Appendix~\ref{section:extension}).

\item Converting the above result from OLO to LEA, we demonstrate how classical conversion techniques \cite{luo2015achieving,orabona2016coin} are \emph{designed} to have large switching costs, and then propose a fix with a clear geometric interpretation. This leads to the first comparator adaptive algorithm for LEA with switching costs (Section~\ref{section:lea}). 
\end{enumerate}

Technically, our improvement over \cite{zhang2022adversarial} relies on a novel \emph{dual space scaling} strategy. This is actually not guessed, but \emph{systematically discovered} by a continuous-time analysis (Section~\ref{subsection:continuous}), whose procedure follows another prior work of ours \cite{zhang2022pde}. In the continuous-time limit, it becomes evident what kinds of algorithmic structures from the switching-free setting are transferable to the setting with switching costs. Indeed, revealing generalizable knowledge is a key benefit of the continuous-time analysis, which was not demonstrated in \cite{zhang2022pde}. As an added bonus, both our OLO algorithm and its analysis are considerably simpler than those from \cite{zhang2022adversarial}. 

Concluding these theoretical results, our OLO algorithm is applied to a portfolio management task with transaction costs (Appendix~\ref{section:financial}). Numerical results support its superiority over the existing approach \cite{zhang2022adversarial}. 

\subsection{Related work}

\paragraph{Online learning basics}Throughout this paper we will only consider linear losses. The generality of our setting is preserved, since convex losses can be reduced to linear losses through the relation $\sum_{t=1}^T[l_t(x_t)-l_t(u)]\leq \sum_{t=1}^T\inner{\nabla l_t(x_t)}{x_t-u}$ \cite{hazan2016introduction,orabona2019modern}. Online learning with linear losses is called \emph{Online Linear Optimization} (OLO). As its important special case, \emph{Learning with Expert Advice} (LEA) considers OLO on a probability simplex, but aims at a different form of regret bound due to its different geometry.

Classical minimax approaches in online learning include \emph{Online Mirror Descent} (OMD) and \emph{Follow the Regularized Leader} (FTRL), with \emph{Online Gradient Descent} (OGD) being their most well-known special case. We write ``gradient descent'' as the minimax baseline for the ease of exposition. Moreover, both OMD and FTRL have elegant duality interpretations \cite[Section 6.4.1 and 7.3]{orabona2019modern}, involving simultaneous updates on the primal space (the domain $\calX$) and the dual space (the space of gradients). We will exploit this duality in our analysis.

\paragraph{Comparator adaptive online learning} Also known as \emph{parameter-freeness}, comparator adaptive online learning aims at matching the performance of the optimally-tuned gradient descent in hindsight, without knowing the correct tuning parameter. The associated regret bound can appear in different forms, depending on the specific learning setting. 
\begin{enumerate}
\item For LEA, a comparator adaptive bound has the form $O\rpar{\sqrt{T\cdot \kl(u||\pi)}}$, where $u$ and $\pi$ are distributions on the expert space representing the comparator and a user-chosen prior. Such an idea was initiated in \cite{chaudhuri2009parameter}, and the analysis was improved and extended by a series of works \cite{chernov2010prediction,luo2015achieving,koolen2015second,chen2021impossible,negrea2021minimax,portella2022continuous}. Notably, a comparator adaptive LEA algorithm naturally induces a bound on the \emph{$\eps$-quantile regret} -- the regret with respect to the $\eps$-quantile best expert. The latter is particularly meaningful when the number of experts is large. Lower bounds were considered in \cite{negrea2021minimax}.

We will present a nonasymptotic improvement of the $\sqrt{\kl}$ divergence in this paper. Frameworks that generalize root KL to \emph{$f$-divergences} have been studied in \cite{alquier2021non,negrea2021minimax}, but to our knowledge, no existing algorithm guarantees a better divergence term than root KL, even without switching costs. 

\item For OLO, typical comparator adaptive regret bounds are $C+\norm{u}O\rpar{\sqrt{T\log(C^{-1}\norm{u}_*T)}}$ or $C\sqrt{T}+\norm{u}O\rpar{\sqrt{T\log(C^{-1}\norm{u}_*)}}$, where a prior $x_1$ can be incorporated by letting $u\leftarrow u-x_1$. These two bounds are both \emph{Pareto-optimal} (see \cite{zhang2022pde} for a detailed explanation), as they represent different trade-offs on the \emph{loss} (the regret at $u=x_1$) and the \emph{asymptotic regret} (when $\norm{u-x_1}$ is large). Existing works \cite{mcmahan2014unconstrained,cutkosky2018black,foster2018online,mhammedi2020lipschitz,jacobsen2022parameter} were mostly independent of the LEA setting, but unified views were presented in \cite{foster2015adaptive,orabona2016coin}. Lower bounds were studied in \cite{streeter2012no,orabona2013dimension,zhang2022pde}. 
\end{enumerate}

\paragraph{Switching cost} Motivated by numerous applications, switching costs in online decision making have been studied from many different angles. For example, beside online learning, the online algorithm community has investigated settings like \emph{smoothed online optimization} \cite{chen2018smoothed,goel2019beyond,li2020online} and \emph{convex body chasing} \cite{bubeck2019competitively,sellke2020chasing}, where the loss function $l_t$ is observed \emph{before} the agent picks the prediction $x_t$. There, the switching cost is the key consideration that prevents the trivial strategy $x_t\in\argmin_x l_t(x)$. As for online learning, an additional complication is that $x_t$ (e.g., the investment portfolio) should be selected without knowing $l_t$ (e.g., tomorrow's stock price). 

Even within online learning, there are several ways to model switching costs. In cases like network routing, every switch means changing the packet route, which can be costly. Therefore, one needs a \emph{lazy} agent whose amount of switches (or its expectation) \cite{kalai2005efficient,geulen2010regret,altschuler2018online,chen2020minimax,sherman2021lazy} is as low as possible -- a good modeling candidate is $\bm{1}[x_t\neq x_{t+1}]$. Alternatively, one could take a \emph{smooth} view \cite{andrew2013tale,bhaskara2021power,wang2021online,zhang2021revisiting} where the agent can perform as many switches as it wishes, as long as the cumulative distance of switching is low -- in this view, switching cost can be a norm $\norm{x_t-x_{t+1}}$ or its smoothed variant $\norm{x_t-x_{t+1}}^2$. The present work primarily considers the $L_1$ norm switching cost motivated by the transaction cost in some financial applications. Notably, for LEA, the $L_1$ norm unifies the lazy view and the smooth view \cite[Section 5.2]{daniely2019competitive}. 

Although switching costs have been extensively studied, existing works on the combination of adaptivity and switching cost are quite sparse. As one should carefully trade off these two opposite requirements, there have been interesting impossibility results \cite{gofer2014higher,daniely2019competitive}, highlighted in our introduction. In this regard, one should not believe that every classical adaptivity can be naturally achieved with switching costs. The present paper shows that the optimal comparator adaptivity can indeed be achieved, thus improving the suboptimal result from \cite{zhang2022adversarial}.

\paragraph{Relation to downstream problems} More generally, incorporating switching costs amounts to considering a \emph{history-dependent} adversary: it can pick loss functions that depend not only on the instantaneous prediction $x_t$, but also on the previous prediction $x_{t-1}$. One could further generalize this setting to \emph{online learning with memory} \cite{cesa2013online,anava2015online}, where the loss depends on a fixed-length prediction history, and finally to \emph{dynamical systems} \cite{agarwal2019online,simchowitz2020improper,simchowitz2020making}, where the entire history matters. In fact, a common procedure in \emph{nonstochastic control} \cite{agarwal2019online} is to bound the risk in the future by a properly scaled switching cost. Achieving comparator adaptivity with switching costs may benefit these important downstream problems as well, by making algorithms \emph{easy to combine} \cite{cutkosky2019combining,cutkosky2020parameter,zhang2022adversarial}.

\paragraph{Continuous-time approach to online learning} Finally, on the technical side, our methodology builds on an emerging continuous-time perspective of online learning. From a theoretical angle, Kohn and Serfaty \cite{kohn2010deterministic} demonstrated a rigorous connection between \emph{Partial Differential Equations} (PDE) and discrete-time repeated games. More recently, such a connection has led to \emph{algorithmic benefits} in minimax LEA \cite{zhu2014two,rokhlin2017pde,drenska2020prediction,kobzar2020a_new,harvey2020optimal,bayraktar2020finite,bayraktar2020asymptotic,greenstreet2022efficient}, $\eps$-quantile bounds \cite{portella2022continuous} and comparator adaptive OLO \cite{zhang2022pde}. The key idea is that online learning algorithms in the continuous-time limit can be more easily parameterized and analyzed. In this paper we will show an additional bonus: generalizing algorithmic insights is also easier in the continuous-time limit. 

\subsection{Notation}

Let $f^*$ be the Fenchel conjugate of a function $f$. $\Delta(d)$ represents the $d$-dimensional probability simplex; $\kl$ and $\tv$ denote the KL divergence and the total variation distance, respectively. For two integers $a\leq b$, $[a:b]$ is the set of all integers $c$ such that $a\leq c\leq b$. $\log$ represents the natural logarithm when the base is omitted. Throughout this paper, ``increasing'' and ``positive'' are not strict (i.e., include equality as well). 

For a twice differentiable function $V(t,S)$ where $t$ represents time and $S$ represents a spatial variable, let $\nabla_tV$, $\nabla_{tt}V$, $\nabla_SV$ and $\nabla_{SS}V$ be the first and second order partial derivatives. In addition, we define discrete derivatives as
\begin{equation*}
\bar \nabla_t V(t,S)\defeq V(t,S)-V(t-1,S),
\end{equation*}
\begin{equation}\label{eq:def_deriv}
\bar \nabla_{S} V(t,S)\defeq \frac{1}{2}\spar{V(t,S+1)-V(t,S-1)},
\end{equation}
\begin{equation*}
\bar \nabla_{SS} V(t,S)\defeq V(t,S+1)+V(t,S-1)-2V(t,S).
\end{equation*}

\section{OLO with switching cost}\label{section:olo}

This section presents our main result, a comparator adaptive OLO algorithm with switching costs. We will focus on the 1D unconstrained setting. Extensions to general settings are deferred to Appendix~\ref{section:extension}.

\subsection{The 1D unconstrained setting}\label{subsection:1dsetting}

We consider the domain $\calX=\R$, a Lipschitz constant $G>0$ for the loss gradients, and a weight $\lambda\geq 0$ for switching costs. In the $t$-th round, the agent predicts $x_t\in\R$, receives a loss gradient $g_t\in[-G,G]$ that depends on past predictions $x_{1:t}$, and suffers an augmented loss $g_tx_t+\lambda\abs{x_t-x_{t-1}}$ (w.l.o.g., let $x_0=x_1=0$). The performance metric is the augmented regret for all $u\in\R$ and $T\in\N_+$:
\begin{equation}\label{eq:def}
\reg^\lambda_T(u)\defeq \sum_{t=1}^Tg_t(x_t-u)+\lambda\sum_{t=1}^{T-1}\abs{x_t-x_{t+1}}.
\end{equation}
Ignoring the dependence on $G$ for now, our goal is to show a comparator adaptive bound $\tilde O\rpar{\abs{u}\sqrt{\lambda T}}$, more specifically the optimal rates $C+\abs{u}O\rpar{\sqrt{\lambda T\log(C^{-1}\lambda\abs{u}T)}}$ or $C\sqrt{\lambda T}+\abs{u}O\rpar{\sqrt{\lambda T\log(C^{-1}\abs{u})}}$ for any hyperparameter $C>0$. These two cases are equivalent via the standard doubling trick \cite{shalev2011online}, as discussed in \cite{zhang2022pde}. 

For minimax algorithms like bounded domain gradient descent \cite{zinkevich2003online}, one can use scaled learning rates $\eta_t\propto 1/\sqrt{\lambda t}$ to ensure that both sums in (\ref{eq:def}) are $O\rpar{\sqrt{\lambda T}}$, thus obtaining a combined $O\rpar{\sqrt{\lambda T}}$ regret bound. However, such a divide-and-conquer approach does not apply to comparator adaptive algorithms, as one cannot \emph{separately} show the desirable bound on the two sums in (\ref{eq:def}). To see this, suppose one could guarantee the second sum alone is at most $1+\abs{u}O\rpar{\sqrt{T\log(\abs{u}T)}}$; here we only focus on the dependence on $\abs{u}$ and $T$. Since this cumulative switching cost is an algorithmic quantity \emph{independent of the comparator}, we can take infimum with respect to $u$ and obtain a ``budget'' of 1 for this sum. Following this argument, $\abs{x_T}\leq \abs{x_1}+\sum_{t=1}^{T-1}\abs{x_t-x_{t+1}}=O(1)$. That is, the algorithm should only predict around the origin, which clearly leads to large regret with respect to far-away comparators, under certain loss sequences. 

The challenge can be motivated in another way. As shown in \cite[Figure 9.1]{orabona2019modern}, the one-step switching cost $\abs{x_t-x_{t+1}}$ of comparator adaptive algorithms can grow exponentially with respect to $t$, whereas such a quantity is uniformly bounded in gradient descent. In fact, the exponential growth is the key mechanism for comparator adaptive algorithms to cover an unconstrained domain fast enough (thus improving minimax algorithms). This is however problematic when switching is also penalized, as one can no longer control the switching cost by uniformly scaling $\abs{x_t-x_{t+1}}$.

\subsection{Switching-adjusted potential}\label{subsection:dual_scale}

To address these issues, one should bound the switching cost and the standard OLO regret in a \emph{unified} framework, instead of treating them separately. Our prior work \cite{zhang2022adversarial} used the classical coin-betting approach from \cite{orabona2016coin,cutkosky2018black}, which is included in Appendix~\ref{subsection:suboptimal} for completeness. In the $t$-th round, the algorithm maintains a quantity $\wel_{t-1}$; by picking a \emph{betting fraction} $\beta_{t}\in[0,1]$, the prediction is set to $x_t=\beta_t\wel_{t-1}$. To further ensure low switching costs, the betting fraction $\beta_t$ is capped by a decreasing upper bound $O(1/\sqrt{t})$. Although it is analytically nontrivial, such a hard threshold is conservative, which could be the reason of our previous suboptimal result. 

In contrast, the present paper follows the \emph{potential framework} explored by a parallel line of works \cite{mcmahan2014unconstrained,foster2018online,mhammedi2020lipschitz,zhang2022pde}. Generally, these algorithms are defined by a potential function $V(t,S)$, where $t$ represents the time index, and $S$ represents a ``sufficient statistic'' that summarizes the history. In each round, the algorithm computes $S_{t-1}=-\sum_{i=1}^{t-1}g_i/G$, and the prediction $x_t$ is the derivative $\nabla_SV$ evaluated at $(t,S_{t-1})$. We will specifically consider Algorithm~\ref{alg:1d}, which is a variant based on the discrete derivative $\bar\nabla_S V$, cf. (\ref{eq:def_deriv}). 

\begin{algorithm*}[ht]
\caption{One-dimensional unconstrained OLO with switching costs.\label{alg:1d}}
\begin{algorithmic}[1]
\REQUIRE A hyperparameter $C>0$, the Lipschitz constant $G$, and a potential function $V(t,S)$ that implicitly depends on $\lambda$ and $G$. Initialize $S_0=0$.
\FOR{$t=1,2,\ldots$}
\STATE Predict $x_t=\bar\nabla_SV(t,S_{t-1})$, and receive the loss gradient $g_t$. Let $S_t=S_{t-1}-g_t/G$.
\ENDFOR
\end{algorithmic}
\end{algorithm*}

One could think of the potential framework as the dual approach of FTRL -- the potential function and the regularizer are naturally Fenchel conjugates. While the FTRL analysis relies on a one-step regret bound on the \emph{primal space} (the domain $\calX$, cf. \cite[Lemma 7.1]{orabona2019modern}), the potential framework constructs a similar one-step relation on the \emph{dual space} (the space of $S_t$, cf. \cite[Lemma 3.1]{zhang2022pde}). Along this interpretation, \textbf{our key idea is to incorporate switching costs by scaling on the dual space, rather than only on the primal space.} That is, given a potential function that works without switching costs, we scale the sufficient statistic sent to its second argument by a function of $\lambda$.

To better demonstrate this idea, let us first consider a quadratic potential $V(t,S)=(1/2)\cdot CGS^2$. The potential method suggests the prediction $x_t=\nabla_SV(t,S_{t-1})=C\sum_{i=1}^{t-1}g_i=x_{t-1}-Cg_{t-1}$, which is simply gradient descent with learning rate $C$. Scaling on the primal space means scaling $V$ directly, while scaling on the dual space means scaling the sufficient statistic $S$. It is clear that both cases are \emph{equivalent} to scaling the effective learning rate, which is the standard way to incorporate switching costs in bounded domain gradient descent. In other words, for this gradient descent potential, the two types of scaling are essentially the same. 

Now, to achieve optimal comparator adaptivity, we need a better potential where scaling on the dual space actually makes a difference. With a parameter $\alpha$ that will eventually depend on $\lambda$, we consider Algorithm~\ref{alg:1d} induced by the potential
\begin{equation}\label{eq:alpha}
V_\alpha(t,S)=C\sqrt{\alpha t}\spar{2\int_0^{S/\sqrt{4\alpha t}}\rpar{\int_0^u\exp(x^2)dx}du-1}. 
\end{equation}

When the Lipschitz constant $G=1$, it has been shown \cite{zhang2022pde} that $\alpha=1/2$ leads to comparator adaptivity without switching costs. Here we use $\alpha=4\lambda G^{-1}+2$, which amounts to scaling \emph{both} the primal space and the dual space: on the primal space, we scale up the overall prediction by $\Theta(\sqrt{\lambda G^{-1}+1})$, and on the dual space we scale down the sufficient statistic $S$ by $\Theta(1/\sqrt{\lambda G^{-1}+1})$. The latter gives us the optimal comparator adaptive bound (i.e., Pareto-optimal rate in $\abs{u}$ and $T$), while the former helps us obtain the optimal rate in $\lambda$. Due to incorporating $\lambda$ into the potential function $V_\alpha$, we call our approach the \emph{switching-adjusted potential method}. 

Although the dual space scaling strategy and the particular structure of $V_\alpha$ may seem mysterious at first glance, they are actually \emph{derived} from a continuous-time analysis. To proceed, we will first present the performance guarantee in the next subsection, and then revisit the derivation of this strategy in Section~\ref{subsection:continuous}.

\subsection{Optimal comparator adaptive bound}\label{subsection:sketch}

Despite its simplicity, our approach improves the result from our prior work \cite{zhang2022adversarial} by a considerable margin. 

\begin{restatable}{theorem}{main}\label{thm:main}
If $\alpha=4\lambda G^{-1}+2$, then Algorithm~\ref{alg:1d} induced by the potential $V_{\alpha}$ guarantees
\begin{equation*}
\mathrm{Regret}^\lambda_T(u)\leq \sqrt{(4\lambda G+2G^2) T}\spar{C+\abs{u}\rpar{\sqrt{4\log\rpar{1+\frac{\abs{u}}{C}}}+2}},
\end{equation*}
for all $u\in\R$ and $T\in\N_+$. 
\end{restatable}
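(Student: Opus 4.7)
My plan is to follow the potential-method template: establish a one-step inequality that jointly controls the linear loss $g_t x_t$ and the switching cost $\lambda|x_t - x_{t+1}|$, telescope over $t = 1, \ldots, T$, and apply Fenchel duality to convert the algorithm-side bound into a $u$-dependent regret bound. The key role of the switching-adjusted choice $\alpha = 4\lambda G^{-1} + 2$ is to make the residual error in the one-step inequality vanish in aggregate, so that the telescoping bound is clean.

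\textbf{One-step inequality.} Write $\Phi(z) = 2\int_0^z\bigl(\int_0^v e^{x^2}dx\bigr)dv - 1$, so that $V_\alpha(t, S) = C\sqrt{\alpha t}\,\Phi(S/\sqrt{4\alpha t})$. Convexity of $V_\alpha(t, \cdot)$ in $S$ (inherited from $\Phi''(z) = 2e^{z^2} > 0$) together with $|g_t|/G \leq 1$ yields, via the two-point Jensen inequality between $S_{t-1}-1$ and $S_{t-1}+1$,
\begin{equation*}
g_t x_t \leq G[V_\alpha(t-1, S_{t-1}) - V_\alpha(t, S_t)] + G\bar\nabla_t V_\alpha(t, S_{t-1}) + (G/2)\bar\nabla_{SS}V_\alpha(t, S_{t-1}).
\end{equation*}
To handle the switching cost, I would decompose
\begin{equation*}
x_{t+1} - x_t = [\bar\nabla_S V_\alpha(t+1, S_t) - \bar\nabla_S V_\alpha(t, S_t)] + [\bar\nabla_S V_\alpha(t, S_t) - \bar\nabla_S V_\alpha(t, S_{t-1})]
\end{equation*}
and bound the two pieces in terms of $\bar\nabla_t\bar\nabla_S V_\alpha$ and of a multiple of $\bar\nabla_{SS}V_\alpha$ (using $|g_t/G|\leq 1$ and convexity), respectively. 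The key identity is the backward heat equation $\nabla_t V_\alpha + \alpha\nabla_{SS}V_\alpha = 0$, which one verifies through the ODE $\Phi(z) - z\Phi'(z) + e^{z^2} = 0$. This identity forces the aggregate residue of the combined one-step inequality to reduce to $(G/2 + c\lambda - G\alpha)\bar\nabla_{SS}V_\alpha \cdot (1 + o(1))$ for an explicit constant $c$, and the choice $\alpha = 4\lambda G^{-1} + 2$ makes this nonpositive with enough margin to absorb the discrete-vs.-continuous correction.

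\textbf{Telescoping and Fenchel conjugate.} Summing the one-step inequality telescopes to
\begin{equation*}
\sum_{t=1}^T g_t x_t + \lambda \sum_{t=1}^{T-1}|x_t - x_{t+1}| \leq G[V_\alpha(0, 0) - V_\alpha(T, S_T)] = -GV_\alpha(T, S_T),
\end{equation*}
since $V_\alpha(0, 0) = 0$. Using $\sum_t g_t = -GS_T$ and Fenchel duality,
\begin{equation*}
\mathrm{Regret}^\lambda_T(u) \leq G\sup_S[uS - V_\alpha(T, S)] = GC\sqrt{\alpha T}\,\Phi^*(2u/C),
\end{equation*}
where the last equality uses the self-similar scaling of $V_\alpha(T, \cdot)$. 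The integration-by-parts identity $\Phi(z) = z\Phi'(z) - e^{z^2}$ shows that the maximizer $z^*$ in $\Phi^*(y) = \sup_z[yz - \Phi(z)]$ satisfies $\Phi^*(y) = e^{(z^*)^2}$ where $y = 2\int_0^{z^*}e^{x^2}dx$. Elementary estimates on this implicit relation give $\Phi^*(y) \leq 1 + |y| + (|y|/2)\sqrt{4\log(1+|y|/2)}$; together with $G\sqrt{\alpha T} = \sqrt{(4\lambda G + 2G^2)T}$, this yields the stated bound.

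\textbf{Expected obstacle.} The most delicate step is the aggregate residue bound in the one-step inequality. The continuous PDE is clean, but the discrete operators $\bar\nabla_t V_\alpha$ and $\bar\nabla_{SS}V_\alpha$ carry corrections, and the switching-cost contribution does not reduce to a single $\bar\nabla_{SS}V_\alpha$ term---it also involves the cross derivative $\bar\nabla_t\bar\nabla_S V_\alpha$. Verifying that $\alpha = 4\lambda G^{-1} + 2$ is precisely the right choice, so that the leading constant $\sqrt{(4\lambda G + 2G^2)T}$ is not weakened, will likely require a careful pointwise argument exploiting monotonicity and log-convexity of $\Phi'$ along with the explicit self-similar form of $V_\alpha$.
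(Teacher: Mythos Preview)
Your high-level template---discrete It\^{o} one-step bound, switching-cost control, residual vanishing under the right $\alpha$, telescoping, then Fenchel duality---is exactly the paper's route. The Fenchel step is also essentially the same: your integration-by-parts identity $\Phi(z)=z\Phi'(z)-e^{z^2}$ is correct and gives $\Phi^*(y)=e^{(z^*)^2}$, but to turn this into the stated bound you will still need an inequality equivalent to $\erfi^{-1}(x)\leq 1+\sqrt{\log(1+x)}$; the paper short-circuits this by the cruder bound $V^*_{\alpha,T}(u)\leq uS^*-V_\alpha(T,0)=C\sqrt{\alpha T}+|u|\sqrt{4\alpha T}\,\erfi^{-1}(|u|/C)$ and then invokes that inequality directly.

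The one substantive difference is in how the switching cost is handled. You propose splitting $x_{t+1}-x_t$ into a time piece $\bar\nabla_S V_\alpha(t{+}1,S_t)-\bar\nabla_S V_\alpha(t,S_t)$ and a space piece $\bar\nabla_S V_\alpha(t,S_t)-\bar\nabla_S V_\alpha(t,S_{t-1})$. The paper instead proves the single clean bound
\[
|x_t-x_{t+1}|\leq \bar\nabla_S V_\alpha(t,S_{t-1}+1)-\bar\nabla_S V_\alpha(t,S_{t-1}-1),
\]
which is established (Lemma~\ref{lemma:switch}) by a case analysis using monotonicity and convexity of $\bar\nabla_S V_\alpha$ in both $t$ and $S$. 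The payoff is that the resulting residual $\Delta_t$ depends only on $(t,S_{t-1})$, so the vanishing argument (Lemma~\ref{lemma:vanish}) is a purely pointwise inequality $g(t,S)\leq 0$, proved by showing $\nabla_{SS}g\leq 0$ and $g(t,0)\leq 0$. Your decomposition mixes in $(t{+}1,S_t)$ via the cross term $\bar\nabla_t\bar\nabla_S V_\alpha$, so the residual is no longer a function of a single point and the pointwise strategy does not apply directly; you would have to either re-anchor the cross term back to $(t,S_{t-1})$ (essentially rediscovering Lemma~\ref{lemma:switch}) or run a more delicate two-point argument. Both routes can be made to work, but the paper's anchoring is what makes the exact constant $\alpha=4\lambda G^{-1}+2$ fall out cleanly.
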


Theorem~\ref{thm:main} simultaneously achieves several forms of optimality. 
\begin{enumerate}
\item Pareto-optimal loss-regret trade-off: considering the dependence on $u$ and $T$, $\reg^\lambda_T(u)=O\rpar{\abs{u}\sqrt{T\log\abs{u}}}$, while the \emph{cumulative loss} satisfies $\reg^\lambda_T(0)=O(\sqrt{T})$. An existing lower bound \cite[Theorem 10]{zhang2022pde} shows that even without switching costs, all algorithms satisfying a $O(\sqrt{T})$ loss bound must suffer a $\Omega\rpar{\abs{u}\sqrt{T\log\abs{u}}}$ regret bound. In this sense, our algorithm attains a \emph{Pareto-optimal} loss-regret trade-off, in a strictly generalized setting with switching costs. 
\item On $T$ alone: $\reg^\lambda_T(u)=O(\sqrt{T})$. Despite achieving comparator adaptivity, the asymptotic rate on $T$ is still the optimal one, matching the well-known minimax lower bound. 
\item On $\lambda$ alone: $\reg^\lambda_T(u)=O(\sqrt{\lambda})$. Our bound has the optimal dependence on the switching cost weight \cite[Theorem 5]{geulen2010regret}.
\end{enumerate}

To compare Theorem~\ref{thm:main} to \cite{zhang2022adversarial}, we have to convert them to the same loss-regret trade-off, i.e., both guaranteeing $\reg^\lambda_T(0)=O(1)$ or $\reg^\lambda_T(0)=O(\sqrt{T})$. Here we take the first approach -- details are presented in Appendix~\ref{subsection:conversion}. Let us only consider the dependence on $u$ and $T$.\footnote{Comparing the dependence on $\lambda$ is more subtle, as discussed in Appendix~\ref{subsection:suboptimal}.} By a doubling trick, our bound can be converted to $C+\abs{u}O\rpar{\sqrt{T\log(C^{-1}\abs{u}T)}}$, which improves the rate $C+\abs{u}O\rpar{\sqrt{T}\log(C^{-1}\abs{u}T)}$ from \cite[Theorem 1]{zhang2022adversarial}. Specifically, our converted upper bound also attains Pareto-optimality in this regime (i.e., matching the lower bound $\Omega\rpar{\abs{u}\sqrt{T\log(\abs{u}T)}}$ in \cite{orabona2013dimension}), whereas the existing approach does not. 

The proof of Theorem~\ref{thm:main} is sketched below, with the formal analysis deferred to Appendix~\ref{subsection:proof_main}. It mostly follows a standard potential argument, which is another benefit over the existing approach -- the idea of this proof is easier to interpret and generalize. 

\paragraph{Proof sketch of Theorem~\ref{thm:main}} To begin with, the first step is to show a one-step bound on the growth rate of the potential. If there is no switching cost, then the \emph{Discrete It\^{o} formula} can serve this purpose, which applies to any convex potential $V$. It is an established result in the probability literature \cite{kudvzma1982ito,fujita2008random,klenke2013probability}, and Harvey et al. \cite{harvey2020optimal} first applied it to minimax LEA. The version below is from our prior work, which is a small variant that removes the LEA context.

\begin{lemma}[Lemma 3.1 of \cite{zhang2022pde}]\label{lemma:ito}
If the potential function $V(t,S)$ is convex in $S$, then against any adversary, Algorithm~\ref{alg:1d} guarantees for all $t\in\N_+$,
\begin{equation*}
V(t,S_{t})-V(t-1,S_{t-1})\leq -G^{-1}g_{t}x_{t}+\bar\nabla_tV(t,S_{t-1})+(1/2)\cdot\bar\nabla_{SS}V(t,S_{t-1}).
\end{equation*}
\end{lemma}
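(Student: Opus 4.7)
The plan is to decompose the left side $V(t,S_t) - V(t-1,S_{t-1})$ into a purely temporal change and a purely spatial change, handle the temporal change by definition, and handle the spatial change by convexity in $S$. Concretely, I would add and subtract $V(t,S_{t-1})$ to split the difference as
\begin{equation*}
V(t,S_t) - V(t-1,S_{t-1}) = \bigl[V(t,S_{t-1}) - V(t-1,S_{t-1})\bigr] + \bigl[V(t,S_t) - V(t,S_{t-1})\bigr].
\end{equation*}
The first bracket is exactly $\bar\nabla_t V(t,S_{t-1})$ by definition, so the whole argument reduces to bounding the spatial increment $V(t,S_t) - V(t,S_{t-1})$ by $-G^{-1}g_t x_t + (1/2)\bar\nabla_{SS}V(t,S_{t-1})$.

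Writing $s := -g_t/G \in [-1,1]$, we have $S_t = S_{t-1} + s$, and the key observation is the identity
\begin{equation*}
S_{t-1} + s = \tfrac{1+s}{2}(S_{t-1}+1) + \tfrac{1-s}{2}(S_{t-1}-1),
\end{equation*}
which expresses $S_t$ as a convex combination of $S_{t-1}\pm 1$ because $(1\pm s)/2 \in [0,1]$. Applying convexity of $V(t,\cdot)$ therefore yields
\begin{equation*}
V(t,S_t) \leq \tfrac{1+s}{2}V(t,S_{t-1}+1) + \tfrac{1-s}{2}V(t,S_{t-1}-1).
\end{equation*}
Subtracting $V(t,S_{t-1})$ from both sides and regrouping terms into the symmetric and antisymmetric parts,
\begin{equation*}
V(t,S_t) - V(t,S_{t-1}) \leq s \cdot \tfrac{1}{2}\bigl[V(t,S_{t-1}+1) - V(t,S_{t-1}-1)\bigr] + \tfrac{1}{2}\bigl[V(t,S_{t-1}+1) + V(t,S_{t-1}-1) - 2V(t,S_{t-1})\bigr].
\end{equation*}
By the definitions in (\ref{eq:def_deriv}), the first bracket is $\bar\nabla_S V(t,S_{t-1}) = x_t$ and the second is $\bar\nabla_{SS} V(t,S_{t-1})$, so the right side equals $s\,x_t + (1/2)\bar\nabla_{SS}V(t,S_{t-1}) = -G^{-1}g_t x_t + (1/2)\bar\nabla_{SS}V(t,S_{t-1})$.

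Combining this with the temporal identity gives the lemma. The only real step requiring care is verifying that the convex combination coefficients $(1\pm s)/2$ are genuinely in $[0,1]$, which is where the Lipschitz assumption $|g_t|\leq G$ (equivalently $|s|\leq 1$) is used; everything else is bookkeeping. There is no significant obstacle — the whole argument is essentially a one-dimensional Jensen inequality tailored to the finite differences $\bar\nabla_S$ and $\bar\nabla_{SS}$, which is the discrete analogue of the Itô expansion $V(t,S_t) \approx V(t-1,S_{t-1}) + \nabla_t V \,dt + \nabla_S V \,dS + (1/2)\nabla_{SS}V\,(dS)^2$ when $(dS)^2 = 1$.
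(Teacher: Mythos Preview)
Your proof is correct. Note, however, that the paper does not actually supply its own proof of this lemma: it is quoted verbatim as Lemma~3.1 of \cite{zhang2022pde} and attributed to the discrete It\^{o} formula literature, so there is nothing in the present paper to compare against. Your argument---write $S_t$ as a convex combination of $S_{t-1}\pm 1$ using the weights $(1\pm s)/2$ with $s=-g_t/G$, apply Jensen, and split into the antisymmetric part $s\,\bar\nabla_S V$ and the symmetric part $\tfrac{1}{2}\bar\nabla_{SS}V$---is exactly the standard one-line derivation of the discrete It\^{o} inequality, and it is complete as written.
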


Our key observation is the following lemma, which incorporates switching costs into $V_\alpha$. Note that the structure of $V_\alpha$ is important here. 

\begin{restatable}{lemma}{switch}\label{lemma:switch}
For all $\alpha>0$, consider Algorithm~\ref{alg:1d} induced by the potential $V_\alpha$. For all $t\in\N_+$, 
\begin{equation*}
\abs{x_t-x_{t+1}}\leq \bar\nabla_S V_{\alpha}(t,S_{t-1}+1)-\bar\nabla_S V_{\alpha}(t,S_{t-1}-1).
\end{equation*}
\end{restatable}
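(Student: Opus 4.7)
My plan is to prove the lemma by establishing two intermediate inequalities on $\phi_t(S) := \bar\nabla_S V_\alpha(t, S)$ and then combining them. The first (\textbf{time-Lipschitz comparison}) is $\phi_{t+1}(S-1) \leq \phi_t(S) \leq \phi_{t+1}(S+1)$ for every $S$ in the algorithm's reachable range $|S|\leq t-1$. The second (\textbf{spatial-difference contraction in time}) is $\phi_{t+1}(S+1) - \phi_{t+1}(S-1) \leq \phi_t(S+1) - \phi_t(S-1)$ for every $S$.

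Given these, the lemma follows quickly. Since $V_\alpha(t,\cdot)$ is convex, $\phi_{t+1}$ is nondecreasing, and $|g_t|\leq G$ gives $S_t\in[S_{t-1}-1, S_{t-1}+1]$, so $x_{t+1}=\phi_{t+1}(S_t)\in[\phi_{t+1}(S_{t-1}-1), \phi_{t+1}(S_{t-1}+1)]$. Applying the time-Lipschitz inequality at $S=S_{t-1}$ (which satisfies $|S_{t-1}|\leq t-1$ since $|g_i/G|\leq 1$) places $x_t=\phi_t(S_{t-1})$ in the same interval. Hence $|x_t-x_{t+1}|\leq \phi_{t+1}(S_{t-1}+1)-\phi_{t+1}(S_{t-1}-1)$, and the spatial contraction then converts this bound to $\phi_t(S_{t-1}+1)-\phi_t(S_{t-1}-1)$, which is the stated right-hand side.

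To establish the two intermediate inequalities, I would exploit the self-similar form $V_\alpha(t,S)=C\sqrt{\alpha t}[f(S/\sqrt{4\alpha t})-1]$, from which $\partial_S V_\alpha(t,S)=CF(S/\sqrt{4\alpha t})$ with $F(y)=\int_0^y e^{x^2}dx$ monotone increasing, and $\phi_t(S)=(C/2)\int_{S-1}^{S+1}F(s/\sqrt{4\alpha t})\,ds$. The time-Lipschitz inequality then reduces, at leading order, to the scalar comparison $(S+1)/\sqrt{t+1}\geq S/\sqrt{t}$, equivalent after squaring to $t(2S+1)\geq S^2$ and hence implied by the algorithmic invariant $|S|\leq t-1$; the symmetric lower bound is analogous. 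The spatial contraction is equivalent to the decrease in $t$ of $\partial_{SS}V_\alpha\propto t^{-1/2}\exp(S^2/(4\alpha t))$ at fixed $S$, a direct calculus check.

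The main obstacle is lifting these clean continuous-time statements to the genuinely discrete quantities in the lemma, since $\phi_t$ averages $\partial_S V$ over a width-$2$ interval in $S$ and the comparisons are across integer steps in $t$. I expect this to be manageable in one of two ways. One is to write the time-differences directly, e.g.\ $\phi_{t+1}(S)-\phi_t(S)=\int_t^{t+1}\partial_\tau \phi_\tau(S)\,d\tau$, and use the sign and magnitude of $\partial_t\phi$ controlled by the backward heat equation $\partial_tV_\alpha+\alpha\partial_{SS}V_\alpha=0$ that $V_\alpha$ satisfies exactly. The other is to invoke the induced heat-semigroup identity $\phi_t(S)=\mathbb{E}_{W\sim\mathcal{N}(0,2\alpha)}[\phi_{t+1}(S+W)]$, which represents $\phi_t$ as a Gaussian-smoothed version of $\phi_{t+1}$ and makes the required monotonicity comparisons transparent since $W$ is symmetric and the spatial profile $F$ is odd and strictly monotone.
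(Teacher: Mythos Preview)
Your decomposition into (i) a time-Lipschitz sandwich $\phi_{t+1}(S-1)\le\phi_t(S)\le\phi_{t+1}(S+1)$ and (ii) a spatial contraction $\phi_{t+1}(S+1)-\phi_{t+1}(S-1)\le\phi_t(S+1)-\phi_t(S-1)$ is natural, and the spatial contraction is exactly the paper's ``Case~2'' (shown by checking that $\nabla_t[\phi_t(S+1)-\phi_t(S-1)]\le 0$). The lower time-Lipschitz bound $\phi_{t+1}(S-1)\le\phi_t(S)$ is also easy, since for $S\ge 0$ one has $\phi_{t+1}(S-1)\le\phi_{t+1}(S)\le\phi_t(S)$ by monotonicity in $S$ and the decrease in $t$ (Lemma~\ref{lemma:discrete_deriv}).

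The genuine gap is the \emph{upper} time-Lipschitz inequality $\phi_t(S)\le\phi_{t+1}(S+1)$, and neither of your two proposed routes closes it. The heat-semigroup representation $\phi_t(S)=\mathbb{E}_W[\phi_{t+1}(S+W)]$ with $W\sim\mathcal N(0,2\alpha)$ does not make the comparison ``transparent'': since $\phi_{t+1}$ is convex on $\{S\ge 0\}$ and grows like $\exp\!\big(s^2/(4\alpha(t+1))\big)$, Jensen points the wrong way, and there is no reason the contribution from $\{W>1\}$ is dominated without using the constraint $|S|\le t-1$. Indeed the inequality \emph{fails} once $S$ is large enough (the exponent comparison gives $t(2S+1)<S^2$, roughly $S\gtrsim 2t$), so any valid argument must exploit $|S|\le t-1$, which the semigroup identity by itself does not. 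Your alternative route via $\partial_\tau\phi_\tau=-\alpha\,\partial_{SS}\phi_\tau$ reduces to a quantitative comparison of a time-integrated $\alpha\,\partial_{SS}\phi$ against a space-integrated $\partial_S\phi$; carrying this out requires exactly the kind of explicit derivative estimates (and the bound $S\le t$) that constitute the paper's ``Case~1''.

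The paper avoids your upper time-Lipschitz claim altogether. It first bounds $|x_t-x_{t+1}|$ by $\max_{c=\pm 1}|\phi_t(S)-\phi_{t+1}(S+c)|$ and then, for $S\ge 0$, shows this max is at most $\max\{\phi_t(S)-\phi_{t+1}(S-1),\ \phi_{t+1}(S+1)-\phi_{t+1}(S-1)\}$, treating separately the case $\phi_t(S)\le\phi_{t+1}(S+1)$ and its complement. The second branch is your spatial contraction. The first branch is handled directly: using the convexity properties of $\phi$ in $t$ and $S$ (Lemma~\ref{lemma:discrete_deriv}), it suffices to show $-\nabla_t\phi_t(S-1)\le\nabla_S\phi_t(S)$, which follows from the explicit formulas for $\nabla_t\phi$ and $\nabla_S\phi$ together with $S-1\le t$. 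So the paper proves a \emph{weaker} intermediate inequality ($\phi_t(S)-\phi_{t+1}(S-1)\le\phi_t(S+1)-\phi_t(S-1)$) than your $\phi_t(S)\le\phi_{t+1}(S+1)$, and that weaker statement is what the explicit calculus actually delivers.
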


Combining the above, if we define
\begin{equation}\label{eq:residual}
\Delta_t\defeq \bar\nabla_tV_\alpha(t,S_{t-1})+\frac{1}{2}\bar\nabla_{SS}V_\alpha(t,S_{t-1})+G^{-1}\lambda\spar{\bar\nabla_S V_{\alpha}(t,S_{t-1}+1)-\bar\nabla_S V_{\alpha}(t,S_{t-1}-1)},
\end{equation}
then a telescopic sum yields a \emph{cumulative loss bound}
\begin{equation*}
\reg^\lambda_T(0)\leq \sum_{t=1}^T\rpar{g_tx_t+\lambda\abs{x_t-x_{t+1}}}\leq -G \cdot V_\alpha(T,S_T)+G\sum_{t=1}^T\Delta_t.
\end{equation*}

To proceed, we need to control the residual term $\Delta_t$, which may seem problematic due to its complicated form. Fortunately, a careful analysis shows that $\Delta_t$ \emph{vanishes} with a proper choice of $\alpha$.

\begin{restatable}{lemma}{vanish}\label{lemma:vanish}
If $\alpha\geq 4\lambda G^{-1}+2$, then for all $t$ and against any adversary, $\Delta_t\leq 0$. 
\end{restatable}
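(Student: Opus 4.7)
The plan is to exploit the fact that $V_\alpha$ solves a backward heat equation exactly in continuous time, and then control the discretization errors in $\Delta_t$ using the slack provided by the choice $\alpha \geq 4\lambda G^{-1} + 2$.

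First, I would verify by direct differentiation that $V_\alpha$ satisfies the PDE $\nabla_t V_\alpha + \alpha\, \nabla_{SS} V_\alpha = 0$. Writing $y = S/\sqrt{4\alpha t}$ and $F(y) = 2\int_0^y\int_0^u e^{x^2}\,dx\,du - 1$, so that $V_\alpha(t,S) = C\sqrt{\alpha t}\, F(y)$, a single integration by parts on the double integral defining $F$ yields the key identity $F(y) - y F'(y) = -e^{y^2}$. Combined with $F''(y) = 2e^{y^2}$, this produces the closed forms
\begin{equation*}
\nabla_t V_\alpha(t,S) = -\frac{C\sqrt{\alpha}\, e^{y^2}}{2\sqrt{t}}, \qquad \nabla_{SS} V_\alpha(t,S) = \frac{C\, e^{y^2}}{2\sqrt{\alpha t}},
\end{equation*}
which makes the PDE immediate.

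Next, I would rewrite each discrete operator in the definition~(\ref{eq:residual}) of $\Delta_t$ as an integral of the continuous second space derivative $\nabla_{SS} V_\alpha$. Using the PDE, $\bar\nabla_t V_\alpha(t,S) = \int_{t-1}^{t}\nabla_s V_\alpha(s,S)\, ds = -\alpha \int_{t-1}^{t}\nabla_{SS} V_\alpha(s,S)\, ds$, while two applications of Taylor's theorem with integral remainder give $\bar\nabla_{SS} V_\alpha(t,S) = \int_{-1}^{1}(1-|r|)\, \nabla_{SS} V_\alpha(t, S+r)\, dr$ and $\bar\nabla_S V_\alpha(t, S+1) - \bar\nabla_S V_\alpha(t, S-1) = \frac{1}{2}\int_{-2}^{2}(2-|r|)\, \nabla_{SS} V_\alpha(t, S+r)\, dr$. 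After substitution, $\Delta_t$ becomes a linear combination of evaluations of the strictly positive function $\nabla_{SS} V_\alpha$: a single negative contribution with coefficient $-\alpha$ concentrated in the time integral, together with positive spatial contributions whose weights sum in the continuous limit to $\frac{1}{2} + 2\lambda G^{-1}$.

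The final step is to show that the negative contribution dominates uniformly over admissible $(t, S_{t-1})$. The explicit form $\nabla_{SS} V_\alpha(t,S) = \frac{C\, e^{S^2/(4\alpha t)}}{2\sqrt{\alpha t}}$ is decreasing in $t$ and convex and increasing in $|S|$, so each shifted evaluation $\nabla_{SS} V_\alpha(t, S_{t-1}+r)$ or $\nabla_{SS} V_\alpha(s, S_{t-1})$ compares to $\nabla_{SS} V_\alpha(t, S_{t-1})$ through explicit multiplicative factors $e^{(2 S_{t-1} r + r^2)/(4\alpha t)}$ and time ratios $\sqrt{t/s}\,\cdot\, e^{S_{t-1}^2/(4\alpha s) - S_{t-1}^2/(4\alpha t)}$. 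A purely continuous argument only requires $\alpha \geq \frac{1}{2} + 2\lambda G^{-1}$; the fourfold-larger threshold $\alpha \geq 2 + 4\lambda G^{-1}$ injects enough extra negative weight to absorb these perturbation factors together with the $\sqrt{t} - \sqrt{t-1}$ mismatch in the time integration. The main obstacle I anticipate is precisely this last bookkeeping step: the perturbation factors depend nontrivially on $S_{t-1}$, so proving $\Delta_t \leq 0$ uniformly in $(t, S_{t-1})$ will likely require reducing the post-substitution inequality to a single scalar inequality in the normalized variable $y = S_{t-1}/\sqrt{4\alpha t}$ and verifying it on the admissible range $|y| \leq (t-1)/\sqrt{4\alpha t}$.
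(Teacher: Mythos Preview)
Your approach is correct and genuinely different from the paper's. The paper never invokes the integral representations or the PDE; instead it writes $\Delta_t=g(t,S_{t-1})$ for an explicit function $g$, handles $t=1$ by direct evaluation at $S=0$, and for $t\geq 2$ proves $g(t,S)\leq g(t,0)\leq 0$ by showing $\nabla_{SS}g(t,S)\leq 0$ via an auxiliary function $h(t,S)$ built from closed-form derivatives of $V_\alpha$. Your route is more conceptual: it makes transparent why the continuous threshold is $\tfrac12+2\lambda G^{-1}$ and why the slack factor compensates discretization, whereas the paper's concavity-in-$S$ reduction is more computational but delivers the inequality without needing to track the $S$-dependent perturbation ratios you worry about. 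Regarding your flagged obstacle, it is in fact milder than you suggest: after factoring out $\nabla_{SS}V_\alpha(t,S_{t-1})$, the spatial perturbation ratios are $\exp\big((2S_{t-1}r+r^2)/(4\alpha t)\big)$, and since the adversary constraint gives $|S_{t-1}|\leq t-1$, these are bounded by $e^{1/(2\alpha)}$ for $|r|\leq 1$ and $e^{1/\alpha}$ for $|r|\leq 2$, while the time integral trivially contributes at least $\alpha$; the resulting scalar inequality $\alpha\geq \tfrac12 e^{1/(2\alpha)}+\tfrac{2\lambda}{G}e^{1/\alpha}$ follows from $e^{1/\alpha}<2$ for $\alpha\geq 2$ together with $\tfrac12+\tfrac{2\lambda}{G}\leq \tfrac{\alpha-1}{2}$. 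So no delicate reduction to a normalized variable $y$ is actually needed.
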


Finally, with the updated loss bound $\reg^\lambda_T(0)\leq -G\cdot V_\alpha(T,S_T)$, our regret bound follows from the classical loss-regret duality \cite{mcmahan2014unconstrained,orabona2019modern}.

\subsection{Continuous-time derivation}\label{subsection:continuous}

Now let us derive our dual space scaling strategy from a continuous-time perspective. Technically, the procedure is analogous to another prior work of ours \cite{zhang2022pde}, which studies optimal potential functions for the standard OLO setting without switching costs. Before starting, we need a generalized definition of the discrete derivative, with a tunable gap increment $\delta$. 
\begin{equation*}
\bar \nabla^\delta_{S} V(t,S)=\frac{1}{2\delta}\spar{V(t,S+\delta)-V(t,S-\delta)}.
\end{equation*}
Note that the choice of $\delta=1$ recovers $\bar \nabla_SV(t,S)$ in Algorithm~\ref{alg:1d}. The Lipschitz constant $G$ will be set to 1 for the ease of exposition. 

\paragraph{Step 1: discrete-time recursive inequality}

First, let us consider the following inequality that characterizes ``admissible'' potentials for Algorithm~\ref{alg:1d}. For all $t$ and $S$, 
\begin{equation}\label{eq:admissible}
V(t-1,S)\geq \max_{g\in[-1,1]}\left\{V(t,S-g)+g\bar \nabla^1_S V(t,S)+\lambda\abs{\bar \nabla^1_S V(t,S)-\bar \nabla^1_S V(t+1,S-g)}\right\}.
\end{equation}

Finding solutions of this inequality is sufficient for constructing regret bounds. To see this, suppose the above holds for some $V$. We can then plug in $S=S_{t-1}$ and guarantee that for all $g_t\in[-1,1]$,
\begin{equation*}
g_tx_t+\lambda\abs{x_t-x_{t+1}}\leq V(t-1,S_{t-1})-V(t,S_{t}).
\end{equation*}
A telescopic sum further leads to a cumulative loss bound $\reg^\lambda_T(0)\leq V(0,0)-V(T,S_T)$, and a regret bound on $\reg^\lambda_T(u)$ then follows from the standard loss-regret duality \cite{mcmahan2014unconstrained}. 

\paragraph{Step 2: $\eps$-scaled recursion} Since we ideally need \emph{optimal potential functions} that satisfy the inequality (\ref{eq:admissible}) without any slack, let us turn (\ref{eq:admissible}) into an \emph{equality} and try to approximately solve it. Intuitively this is a challenging task, as there is no natural way to parameterize the dependence of $V$ on the discrete time $t$. However, if we decrease the discrete time interval, solutions $V$ will be ``smoother'' and easier to describe. Concretely, let $\eps>0$ be a parameter that will later approach 0. On (\ref{eq:admissible}), we scale
\begin{enumerate}
\item the unit time by $\eps^2$;
\item the loss gradient $g$, the switching cost weight $\lambda$ and the gap increment by $\eps$.
\end{enumerate}

Both scaling factors are justified in the switching-free setting \cite[Appendix A.2]{zhang2022pde}. Notably, since $g$ and $\lambda$ have the same ``unit'', it is natural that they are scaled by the same rate. With that, we obtain a \emph{scaled recursion}
\begin{equation}\label{eq:scaled}
V(t-\eps^2,S)=\max_{g\in[-1,1]}\left\{V(t,S-\eps g)+\eps g\bar \nabla^\eps_S V(t,S)+\eps\lambda\abs{\bar \nabla^\eps_S V(t,S)-\bar \nabla^\eps_S V(t+\eps^2,S-\eps g)}\right\}.
\end{equation}

\paragraph{Step 3: continuous-time PDE} To proceed, we take the second-order Taylor approximation on all components of (\ref{eq:scaled}). Calculations are simple, and we defer the details to Appendix~\ref{subsection:detail_continuous}. Both the zeroth and the first order terms of $\eps$ naturally vanish. Only keeping the second order terms, we have
\begin{equation*}
\nabla_tV(t,S)+\max_{g\in[-1,1]}\rpar{\frac{1}{2}g^2\nabla_{SS}V(t,S)+\lambda\abs{g\nabla_{SS}V(t,S)}}=0.
\end{equation*}

As typical potential functions are convex in the sufficient statistic $S$, it is reasonable to impose an additional condition $\nabla_{SS}V(t,S)\geq 0$. Then, the above becomes the 1D \emph{backward heat equation}
\begin{equation*}
\nabla_tV+\alpha\nabla_{SS}V=0,
\end{equation*}
where $\alpha=\lambda+1/2$. Compared to the switching-free setting \cite[Eq.~5]{zhang2022pde}, we obtain the same PDE, but change the \emph{negative thermal diffusivity} $\alpha$ from $1/2$ to $1/2+\lambda$. This concisely characterizes the effect of switching costs on the structure of the online learning problem. 

\paragraph{Step 4: solving the PDE} The final step is to solve the backward heat equation. With a hyperparameter $c$, consider solutions of the form
\begin{equation*}
V_\alpha(t,S)=t^cg\rpar{\frac{S}{\sqrt{4\alpha t}}}.
\end{equation*}
Plug it in, the backward heat equation reduces to the Hermite \emph{Ordinary Differential Equation} (ODE)
\begin{equation*}
g''(z)-2zg'(z)+4cg(z)=0,
\end{equation*}
which is \emph{independent of $\alpha$}. This is a crucial observation, as it reveals the correct way to generalize the knowledge from the switching-free setting to the setting with switching costs. More specifically, 
\begin{itemize}
\item In the switching-free setting, we can take a solution $g(z)$ of the Hermite ODE, plug in the argument $z=S/\sqrt{2t}$ and obtain a potential function $V_\alpha$.
\item When switching costs are considered, the above derivation suggests us to take the \emph{same} function $g(z)$ as before, and plug in a scaled argument $z=S/\sqrt{4\alpha t}$. \textbf{This is precisely dual space scaling.}
\end{itemize}

Finally, as shown in \cite{zhang2022pde}, a particularly good choice of $c$ is $1/2$. Using this choice yields the switching-adjusted potential (\ref{eq:alpha}). 

\paragraph{Remark} To summarize, through this derivation we aim to demonstrate a key benefit of the continuous-time analysis: it makes the generalization of algorithmic structures easier. This was not presented in our prior work \cite{zhang2022pde}, but could be useful in the broader online learning context. 

Meanwhile, we do not intend to overclaim its strength -- although the continuous-time analysis provides useful intuition, we ultimately care about discrete-time regret bounds. Discretizing such arguments relies on an obscure argument that has not been made concrete yet: ``$V_\alpha$ derived in the continuous time also serves as a good potential in the discrete time.'' Indeed, verifying this property is technically nontrivial (Section~\ref{subsection:sketch}), and doing so requires a slightly more conservative choice of $\alpha$ (i.e., $4\lambda+2$) than what is suggested above.

\subsection{Extension beyond the 1D unconstrained setting}

So far we have only considered the 1D unconstrained setting. Our results can be extended to higher dimensional domains and bounded domains, which is deferred to Appendix~\ref{section:extension}. 

Most notably, we present an algorithm (Algorithm~\ref{alg:constraint}) for 1D bounded domain: if the domain has diameter $D$, then the \emph{switching cost alone} of this algorithm is bounded by $\tilde O(D\sqrt{\tau})$ on \emph{any time interval of length $\tau$}. Such a property is crucial in \cite{zhang2022adversarial} for the construction of a \emph{strongly adaptive OCO with memory} algorithm. However, the proof in \cite{zhang2022adversarial} critically relies on hard-thresholding a betting fraction, which, as we demonstrated in Section~\ref{subsection:dual_scale}, is suboptimal. In contrast, our new result simultaneously achieves this property and the optimal augmented regret bound. 

\section{LEA with switching cost}\label{section:lea}

Our improved results can also be applied to \emph{LEA with switching cost}, leading to the first comparator adaptive algorithm there. Conversion techniques (from OLO to LEA) without switching costs were studied in \cite{luo2015achieving,orabona2016coin}, and since then, they have become standard tools for the online learning community. Here we present a different view on this conversion problem, based on its connection to the well-known constrained domain reduction \cite{cutkosky2018black} (surveyed in Appendix~\ref{section:extension}). In particular, it leads to a mechanism for incorporating switching costs, with a clear geometric interpretation. 

The setting of \emph{LEA with switching cost} is a special case of the high-dimensional OLO problem (Appendix~\ref{section:extension}). Let $d$ be the number of experts, and we define the domain $\calX$ as the $d$-dimensional probability simplex $\Delta(d)$. Loss gradients $g_t$ satisfy $\norm{g_t}_\infty\leq G$, and switching costs are measured by the $L_1$ norm. The performance metric is still the augmented regret, now defined as
\begin{equation*}
\reg^\lambda_T(u)\defeq \sum_{t=1}^T\inner{g_t}{x_t-u}+\lambda\sum_{t=1}^{T-1}\norm{x_t-x_{t+1}}_1.
\end{equation*}
However, the main difference with OLO is the form of comparator adaptive bounds -- here we aim at $\reg^\lambda_T(u)=O(\sqrt{T\cdot\kl(u||\pi)})$, where $\pi\in\Delta(d)$ is a prior chosen at the beginning. Achieving such a root KL bound relies on techniques different from the OLO setting. 

Existing approaches \cite{luo2015achieving,orabona2016coin} have the following procedure. Given a 1D OLO algorithm that predicts on $\R_+$, independent copies are created for each coordinate and updated using certain surrogate losses. A meta-algorithm queries the coordinate-wise predictions $\{w_{t,i};i\in[1:d]\}$, collects them into a weight vector $w_{t}=[w_{t,1},\ldots,w_{t,d}]$, and finally predicts the scaled weight $x_t=w_t/\norm{w_t}_1$ on $\Delta(d)$. Despite its general success, such an approach has a discontinuity problem when switching costs are incorporated -- if two consecutive weights $w_{t}$ and $w_{t+1}$ are both close to the origin, then simply scaling them to $\Delta(d)$ can lead to a large switching cost, even when $\norm{w_t-w_{t+1}}_1$ is small. This problem is exacerbated by the typical setting\footnote{When $w_t=0$, $x_t$ can be arbitrary on $\Delta(d)$ by definition. However, as $w_t$ changes continuously w.r.t. the observed information, it could hover around $0$ at some point, thus experiencing the sketched problem.} of $w_1=0$, due to the associated analysis. A graphical demonstration is provided in Figure~\ref{fig:projection} (Left).

\begin{figure}[ht]
    \centering
    \includegraphics[width=0.8\textwidth]{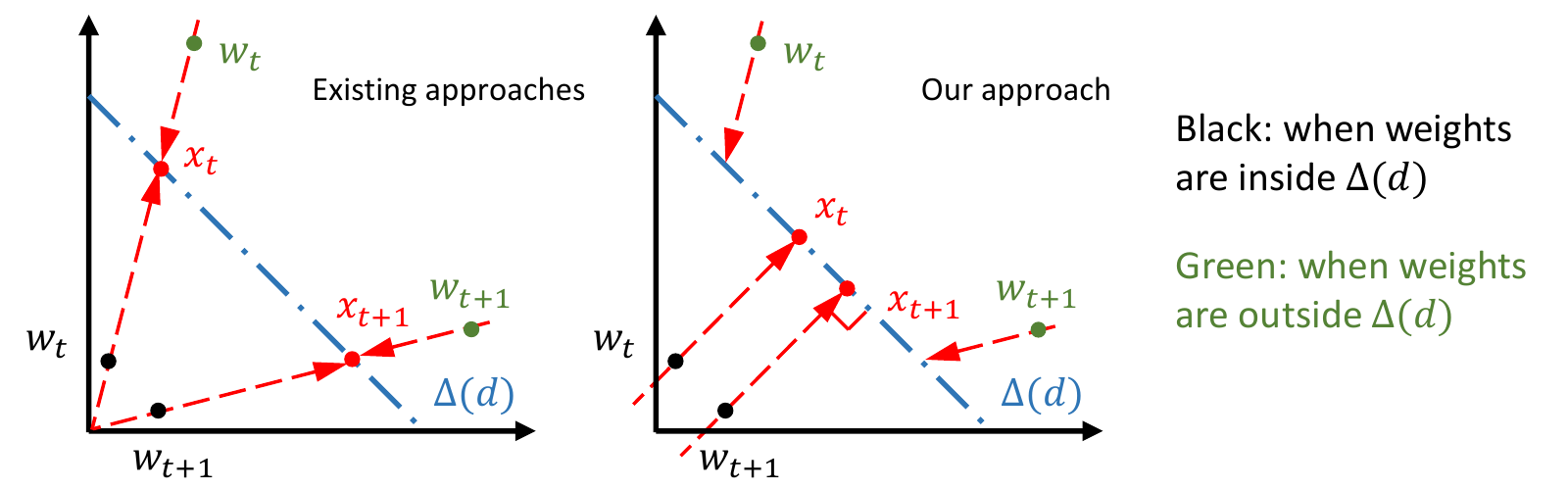}
    \caption{Switching costs in LEA-OLO reductions. Left: existing approaches. Right: ours, where the projection of $w_t$ contains two cases. ($i$) $\norms{w_t}_1\geq 1$, shown in green; ($ii$) $\norms{w_t}_1<1$, shown in black.}
    \label{fig:projection}
\end{figure}

In contrast, our solution is based on \emph{a unified view} of the LEA-OLO reduction and the constrained domain reduction \cite{cutkosky2018black}. Starting without switching costs, we observe that the general Banach version of the latter can also convert OLO to LEA, therefore specialized techniques are not required for this task. Algorithmically, we set $x_t\in\argmin_{x\in\Delta(d)}\norms{x-w_t}_1$ as opposed to $x_t=w_t/\norms{w_t}_1$. The surrogate losses for the base algorithms are also different, which we elaborate in Appendix~\ref{subsection:lea_discussion}. 

A major benefit of this unified view is the non-uniqueness of the $L_1$ norm projection -- if $\norm{w_t}< 1$, then any $x_t\in\Delta(d)$ satisfying $\{x_{t,i}\geq w_{t,i}; \forall i\}$ minimizes $\norm{x-w_t}_1$ on $\Delta(d)$. This brings more flexibility to the algorithm design. Specifically, we adopt
\begin{enumerate}
\item the orthogonal projection $x_t=w_t+d^{-1}(1-\norms{w_t}_1)$ when $\norms{w_t}_1\leq 1$; 
\item the scaling $x_t=w_t/\norms{w_t}_1$ when $\norms{w_t}_1>1$.
\end{enumerate}
The orthogonal projection is better for controlling switching costs, as shown in Figure~\ref{fig:projection} (Right). Concretely, this leads to the first comparator adaptive algorithm for LEA with switching costs. 

\begin{restatable}{theorem}{lea}\label{thm:lea}
For LEA with switching cost, given any prior $\pi$ in the relative interior of $\Delta(d)$, Algorithm~\ref{alg:lea} from Appendix~\ref{subsection:lea_analysis} guarantees
\begin{equation*}
\sum_{t=1}^T\inner{g_t}{x_t-u}+\lambda\sum_{t=1}^{T-1}\norm{x_t-x_{t+1}}_1= \spar{\sqrt{\tv(u||\pi)\cdot \kl(u||\pi)}+1}\cdot O\rpar{\sqrt{(\lambda G+G^2)T}},
\end{equation*}
for all $u\in\Delta(d)$ and $T\in\N_+$. 
\end{restatable}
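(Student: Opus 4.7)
The plan is to instantiate a Banach-space version of the constrained-domain reduction of \cite{cutkosky2018black}, using the $L_1$ projection of Section~\ref{section:lea} in place of the usual scale-to-simplex map $w_t\mapsto w_t/\norm{w_t}_1$. I would run $d$ independent copies of Algorithm~\ref{alg:1d}, one per expert, with the $i$-th copy shifted so that its first prediction equals the prior mass $\pi_i>0$, producing $w_{t,i}\geq 0$. The assembled vector $w_t$ is then pushed onto $\Delta(d)$ via the orthogonal lift $x_t=w_t+d^{-1}(1-\norm{w_t}_1)\mathbf{1}$ when $\norm{w_t}_1\leq 1$, and via the scaling $x_t=w_t/\norm{w_t}_1$ when $\norm{w_t}_1>1$. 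The $i$-th base algorithm is fed a surrogate gradient $\tilde g_{t,i}=g_{t,i}+\nu_t$, where the scalar $\nu_t\in\{-G,+G\}$ is aligned with $\mathrm{sign}(\norm{w_t}_1-1)$ so that $\inner{g_t+\nu_t\mathbf{1}}{w_t-x_t}\geq 0$; since $\nu_t\mathbf{1}\in N_{\Delta(d)}(x_t)$, a short check then gives $\inner{\tilde g_t}{w_t-u}\geq\inner{g_t}{x_t-u}$ for every $u\in\Delta(d)$, while $\abs{\tilde g_{t,i}}\leq 2G$.

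Given this reduction, I would apply Theorem~\ref{thm:main} to each coordinate with hyperparameter $C=\pi_i$ and Lipschitz constant $2G$. The $i$-th contribution is bounded by
\begin{equation*}
\pi_i\cdot O\rpar{\sqrt{(\lambda G+G^2)T}}+\abs{u_i-\pi_i}\cdot O\rpar{\sqrt{(\lambda G+G^2)T\log(1+\abs{u_i-\pi_i}/\pi_i)}}.
\end{equation*}
Summing the first terms over $i\in[1:d]$ gives $\sum_i\pi_i=1$, which produces the $+1$ in the theorem. For the comparator-dependent part, Cauchy--Schwarz yields
\begin{equation*}
\sum_i\abs{u_i-\pi_i}\sqrt{\log\rpar{1+\tfrac{\abs{u_i-\pi_i}}{\pi_i}}}\leq\sqrt{2\tv(u||\pi)}\cdot\sqrt{\sum_i\abs{u_i-\pi_i}\log\rpar{1+\tfrac{\abs{u_i-\pi_i}}{\pi_i}}},
\end{equation*}
and I would bound the second radical by $O(\sqrt{\kl(u||\pi)}+1)$ through the elementary inequality $(x-1)\log x\leq x\log x-x+1$ (applied with $x=u_i/\pi_i$ on coordinates with $u_i\geq\pi_i$, where it telescopes into $\kl$) together with $\log(1+\abs{u_i-\pi_i}/\pi_i)\leq\log 2$ on coordinates with $u_i<\pi_i$ (which contributes at most $\log 2\cdot\tv(u||\pi)\leq\log 2$). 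Combined with $\tv\leq 1$, this delivers the claimed $\sqrt{\tv\cdot\kl}+1$ form.

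The remaining piece is to show that the projection is $L_1$-Lipschitz with constant $2$, so that the LEA switching cost inherits Theorem~\ref{thm:main}'s coordinatewise bound. On $\{\norm{w}_1\leq 1\}$ the orthogonal lift differs from $w$ by a scalar multiple of $\mathbf{1}$, so $\norm{x_t-x_{t+1}}_1\leq\norm{w_t-w_{t+1}}_1+\abs{\norm{w_t}_1-\norm{w_{t+1}}_1}\leq 2\norm{w_t-w_{t+1}}_1$; on $\{\norm{w}_1>1\}$ the scaling $w\mapsto w/\norm{w}_1$ is also $2$-Lipschitz in $L_1$ by a short direct computation; and the mixed case reduces to these two by splitting the segment at the crease $\norm{w}_1=1$, where the two projections coincide. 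Hence $\sum_t\norm{x_t-x_{t+1}}_1\leq 2\sum_i\sum_t\abs{w_{t,i}-w_{t+1,i}}$.

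I expect the main technical obstacle to lie in the surrogate-gradient construction --- verifying that the single scalar $\nu_t$ simultaneously absorbs the normal-cone correction into the 1D regret inequality and keeps $\abs{\tilde g_{t,i}}\leq 2G$ uniformly across both projection regimes and their boundary --- together with the mixed-region Lipschitz argument for the switching cost, both of which come down to careful local analysis at the crease $\norm{w}_1=1$ where the projection is continuous but not smooth.
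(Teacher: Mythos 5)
Your proposal follows essentially the same architecture as the paper's proof (Appendix~\ref{subsection:lea_analysis}): view the LEA--OLO conversion as a Banach constrained-domain reduction, use the orthogonal lift on $\{\norm{w}_1<1\}$ and the scaling on $\{\norm{w}_1>1\}$, prove a coordinatewise surrogate-loss domination and an $L_1$-Lipschitz bound for the projection, then sum coordinates, apply Cauchy--Schwarz to extract $\sqrt{\tv}$, and control the residual $f$-divergence by $\kl$. Your surrogate $\nu_t\in\{-G,0,+G\}$ is a harmless variant of the paper's $\pm\norm{g_t}_\infty$, and the domination check you sketch ($\inner{\tilde g_t}{w_t-u}\geq\inner{g_t}{x_t-u}$ via $(1-\norm{w_t}_1)(G-\bar g_t)\geq 0$ and its analogue on the other side) is correct; the crease-splitting Lipschitz argument also works and in fact gives a factor $2$, mildly tighter than the factor $4$ the paper obtains without splitting.

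There are, however, two genuine gaps. First, you instantiate $d$ copies of the \emph{unconstrained} Algorithm~\ref{alg:1d} ``shifted so that its first prediction equals $\pi_i$'' and assert this yields $w_{t,i}\geq 0$. A shift does not enforce nonnegativity -- the unconstrained predictor can and will cross zero on adversarial losses -- yet nonnegativity of $w_t$ is load-bearing throughout: the orthogonal lift lands on $\Delta(d)$ only when $w_t\geq 0$, and your verification of $\inner{\tilde g_t}{w_t-u}\geq\inner{g_t}{x_t-u}$ in the $\norm{w_t}_1>1$ case explicitly uses $\inner{g_t}{w_t}\geq -G\norm{w_t}_1$, which fails if some $w_{t,i}<0$. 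The paper therefore runs $\A_i$ as Algorithm~\ref{alg:constraint} with domain $\calX=\R_+$ and offset $x^*=\pi_i$ -- i.e., there is a \emph{second}, per-coordinate constrained reduction nested inside the LEA reduction -- and invokes Theorem~\ref{thm:constraint}, not Theorem~\ref{thm:main}. Your proof needs this layer as well.

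Second, the ``elementary inequality'' $(x-1)\log x\leq x\log x-x+1$ you use on coordinates with $u_i\geq\pi_i$ is false: $(x-1)\log x-(x\log x-x+1)=x-1-\log x\geq 0$, so the inequality is reversed for all $x\neq 1$. What is true, and what the paper proves as Lemma~\ref{lemma:f_div}, is $\abs{x-1}\log(1+\abs{x-1})\leq 2\bigl(x\log x-x+1\bigr)$ for all $x\geq 0$; the factor $2$ is not optional (e.g.\ at $x=2$, LHS $=\log 2\approx 0.693$ exceeds $2\log 2-1\approx 0.386$). With the factor $2$ restored your telescoping into $\kl$ goes through, but as written this step is incorrect. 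The $u_i<\pi_i$ branch of your argument (bounding the logarithm by $\log 2$ and summing against $\tv$) is fine.
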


We emphasize two strengths of this bound.
\begin{enumerate}
\item Since it is comparator adaptive, such a bound only implicitly depends on $d$ through the divergence term $\sqrt{\tv\cdot\kl}$. In favorable cases we may have a good prior $\pi$ such that $\tv(u||\pi)\cdot\kl(u||\pi)=O(1)$; this will save us a $\sqrt{\log d}$ factor compared to minimax algorithms (with switching costs), such as \emph{Follow the Lazy Leader} \cite{kalai2005efficient} and \emph{Shrinking Dartboard} \cite{geulen2010regret}. 
\item Even without switching costs, we improve the $\sqrt{\kl}$ divergence term in existing comparator adaptive bounds \cite{chaudhuri2009parameter,luo2015achieving,orabona2016coin} to $\sqrt{\tv\cdot\kl}$. The latter is better since ($i$) $\tv$ is always less than 1, and ($ii$) there exist $p,q\in\Delta(d)$ such that $\tv(p||q)\cdot\kl(p||q)\leq 1$ but $\kl(p||q)\geq \sqrt{\log d}-o(1)$ (cf. Appendix~\ref{subsection:lea_discussion}). In other words, compared to $\sqrt{\kl}$, the $\sqrt{\tv\cdot\kl}$ bound is never worse (up to constants), and can save at least a $(\log d)^{1/4}$ factor in certain cases. Generalizations of root KL to \emph{f-divergences} have been considered in \cite{alquier2021non,negrea2021minimax}, but to our knowledge, no existing algorithm guarantees a better divergence term than root KL. 
\end{enumerate}

\paragraph{Experiment} We complement our theoretical results by experiments on a portfolio selection task, presented in Appendix~\ref{section:financial}.

\section{Conclusion}

The present work investigates the design of comparator adaptive algorithms in the presence of switching costs. By carefully trading off these two opposite considerations, we propose a simple algorithm for OLO with switching costs, improving the suboptimal bound from our prior work \cite{zhang2022adversarial} to the optimal rate. Notably, the key idea of this algorithm is not guessed, but derived from a continuous-time analysis. Extensions lead to new results for comparator adaptive LEA. 

More generally, through this paper we aim to demonstrate a key strength of the continuous-time PDE analysis -- it makes the generalization of algorithmic structures much easier. Such an observation could open up exciting possibilities. For example, 
\begin{itemize}
\item Does this approach apply to other variants of the online learning problem? 
\item Can we use it to generalize other forms of adaptivity? 
\item Continuous-time potentials have been extensively studied under the framework of \emph{potential theory} \cite{doob1984classical}. Can we borrow techniques from there to further improve the workflow of algorithm design? 
\end{itemize}

\section*{Acknowledgement}
We thank the anonymous NeurIPS reviewers for their feedback. This research was partially supported by the NSF under grants IIS-1914792, DMS-1664644, and CNS-1645681, by the ONR under grants N00014-19-1-2571 and N00014-21-1-2844, by the DOE under grants DE-AR-0001282 and DE-EE0009696, by the NIH under grants R01 GM135930 and UL54 TR004130, and by Boston University. 

\bibliography{Switching}

\newpage
\section*{Appendix}
\appendix

\paragraph{Organization} Appendix~\ref{section:detail_olo} presents omitted details of our 1D OLO results. Section~\ref{section:extension} extends such results to more general OLO settings. Section~\ref{section:detail_lea} presents details on LEA. Section~\ref{section:financial} numerically tests our approach in a portfolio selection problem. 

\section{Details on 1D OLO}\label{section:detail_olo}

This section presents detailed discussions and omitted proofs for our 1D unconstrained OLO algorithm. 

\subsection[]{The suboptimal algorithm from \cite{zhang2022adversarial}}\label{subsection:suboptimal}

Let us start by summarizing the existing result from our prior work \cite[Algorithm~1]{zhang2022adversarial}, which is the first comparator adaptive algorithm for 1D unconstrained OLO with switching costs. The original version in \cite{zhang2022adversarial} considers a bounded domain and an extra regularization term, which are removed below for a clear comparison. $\Pi$ denotes the absolute value projection. 

\begin{algorithm*}[ht]
\caption{The suboptimal algorithm from \cite{zhang2022adversarial}\label{algorithm:1d}}
\begin{algorithmic}[1]
\REQUIRE A hyperparameter $C>0$, the Lipschitz constant $G$, and the switching cost weight $\lambda$.
\STATE Define $K=G+\lambda$. Initialize internal variables as $\wel_0=C\cdot K$, and $\beta_1,x_1=0$. 
\FOR{$t=1,2,\ldots$}
\STATE Make a prediction $x_t$, observe a loss gradient $g_t$. 
\STATE Define an \emph{unprojected} betting fraction as $\hat\beta_{t+1}=-\sum_{i=1}^t g_i/(2K^2t)$. 
\STATE Define a hard threshold for the betting fraction, $\B_{t+1}=[-1/(K\sqrt{2t}),1/(K\sqrt{2t})]$.
\STATE Update the \emph{projected} betting fraction as $\beta_{t+1}=\Pi_{\B_{t+1}}(\hat\beta_{t+1})$.
\STATE Assign $\wel_t$ as the solution to the following equation (uniqueness can be proved),
\begin{equation}
\wel_t=(1- g_t\beta_t)\wel_{t-1}-\lambda|\beta_t\wel_{t-1}-\beta_{t+1}\wel_t|.\label{eq:wel_update}
\end{equation}

\STATE Calculate the next prediction, $x_{t+1}=\beta_{t+1}\wel_{t}$.
\ENDFOR
\end{algorithmic}
\end{algorithm*}

Both the algorithm and its analysis are analogous to \cite{cutkosky2018black}, which recasts the selection of betting fractions as an ``inner'' online learning problem. Nonetheless, incorporating switching costs requires extra components (Line 5-7), making the whole analysis nontrivial. 

\begin{theorem}[Theorem~1 of \cite{zhang2022adversarial}, adapted]\label{theorem:1d}
Algorithm~\ref{algorithm:1d} guarantees for all $T\in\N_+$ and $u\in\R$,
\begin{equation*}
\reg^\lambda_T(u)\leq (G+\lambda)\spar{C+\abs{u}\sqrt{2T}\rpar{\frac{3}{2}+\log\frac{\sqrt{2}\abs{u}T^{5/2}}{C}}}.
\end{equation*}
\end{theorem}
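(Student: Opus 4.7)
The plan is to follow a standard potential-function argument, using the switching-adjusted potential $V_\alpha$ as a Lyapunov-like quantity whose one-step growth is controlled uniformly against the adversary. The proof proposal has four main steps, most of which are already scaffolded in Section~\ref{subsection:sketch} and just need to be executed carefully.

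First, I would apply the Discrete It\^{o} formula (Lemma~\ref{lemma:ito}) to $V_\alpha$, which gives a per-round inequality relating $V_\alpha(t, S_t) - V_\alpha(t-1, S_{t-1})$ to the instantaneous loss $g_t x_t$ plus the time and spatial discrete derivatives of $V_\alpha$. Then I would invoke Lemma~\ref{lemma:switch} to bound the switching cost $\lambda |x_t - x_{t+1}|$ by the spatial gap $G^{-1}\lambda[\bar\nabla_S V_\alpha(t, S_{t-1}+1) - \bar\nabla_S V_\alpha(t, S_{t-1}-1)]$. Adding these two inequalities and telescoping over $t = 1, \ldots, T$, the left-hand side becomes the cumulative augmented loss, and the right-hand side becomes $-G\cdot V_\alpha(T, S_T) + G \cdot V_\alpha(0, 0) + G \sum_{t=1}^T \Delta_t$, where $\Delta_t$ is the residual defined in~(\ref{eq:residual}).

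Second, I would establish the key vanishing property of Lemma~\ref{lemma:vanish}: with the choice $\alpha = 4\lambda G^{-1} + 2$, the residual $\Delta_t \leq 0$ for every $t$ and every history. This is the main analytical obstacle. The continuous-time derivation in Section~\ref{subsection:continuous} only guarantees $\alpha = \lambda + 1/2$ works in the PDE limit, and passing from this PDE-level calculation to a genuine discrete inequality requires the slightly more conservative $\alpha = 4\lambda G^{-1} + 2$. Concretely, I would expand $\bar\nabla_t V_\alpha$, $\bar\nabla_{SS} V_\alpha$, and the switching-cost gap using the explicit integral form of $V_\alpha$ in~(\ref{eq:alpha}); the integrand $\exp(x^2)$ allows one to replace finite differences by integrals of its derivative and regroup the expression so that the $\lambda$-dependent switching contribution is absorbed into the $\alpha$-dependent $\bar\nabla_{SS}$ term. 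The factor of $4$ in $\alpha$ should emerge from bounding finite differences of $\exp(x^2)$ over unit intervals, where one needs to control the growth of the integrand across the step of width one.

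Third, combining the telescoping bound with $\Delta_t \leq 0$ and noting that $V_\alpha(0, 0)$ vanishes (or is accounted for by the $C$ in the statement) yields the cumulative loss bound $\sum_{t=1}^T g_t x_t + \lambda \sum_{t=1}^{T-1}|x_t - x_{t+1}| \leq -G \cdot V_\alpha(T, S_T)$. This is the switching-augmented analogue of the standard potential-based loss bound.

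Finally, I would convert this loss bound into the claimed comparator-adaptive regret bound via Fenchel duality, following the template in \cite{mcmahan2014unconstrained,orabona2019modern}. Writing $\mathrm{Regret}^\lambda_T(u) = \sum_t g_t x_t + \lambda \sum_t |x_t - x_{t+1}| + G u S_T$ and applying the inequality above, the right-hand side is bounded by $\sup_{S}[G u S - G V_\alpha(T, S)]$, i.e., $G$ times the Fenchel conjugate of $V_\alpha(T, \cdot)$ evaluated at $u$. The remaining task is to compute or upper bound this conjugate. Since $V_\alpha(T, S) = C\sqrt{\alpha T}\cdot F(S/\sqrt{4\alpha T})$ for an explicit function $F$ built from iterated integrals of $\exp(x^2)$, standard estimates on the inverse of $F'$ (which grows like $\sqrt{\log(\cdot)}$ asymptotically) yield a conjugate of the form $\sqrt{\alpha T}\bigl[C + |u|(\sqrt{4\log(1 + |u|/C)} + 2)\bigr]$. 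Pulling in the factor $G$ and substituting $\alpha = 4\lambda G^{-1} + 2$ produces the stated prefactor $\sqrt{(4\lambda G + 2G^2)T}$, completing the proof.
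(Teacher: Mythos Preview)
Your proposal is a proof of the wrong theorem. The statement in question is Theorem~\ref{theorem:1d}, which is the guarantee for Algorithm~\ref{algorithm:1d}---the \emph{suboptimal betting-fraction algorithm} surveyed from \cite{zhang2022adversarial}. That algorithm maintains a wealth quantity $\wel_t$ and a hard-thresholded betting fraction $\beta_t$; it does not use the potential $V_\alpha$ at all. The bound has the form $(G+\lambda)\bigl[C+\abs{u}\sqrt{2T}\bigl(\tfrac{3}{2}+\log(\sqrt{2}\abs{u}T^{5/2}/C)\bigr)\bigr]$, with the logarithm \emph{outside} the square root. Your argument instead proves Theorem~\ref{thm:main}, which is the guarantee for Algorithm~\ref{alg:1d} induced by the switching-adjusted potential $V_\alpha$, and produces the improved bound $\sqrt{(4\lambda G+2G^2)T}\bigl[C+\abs{u}\bigl(\sqrt{4\log(1+\abs{u}/C)}+2\bigr)\bigr]$. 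These are different algorithms, different bounds, and different theorems.

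Moreover, the paper does not prove Theorem~\ref{theorem:1d} at all: it is simply a restatement (adapted to remove the bounded domain and regularization) of \cite[Theorem~1]{zhang2022adversarial}, included in Appendix~\ref{subsection:suboptimal} purely for comparison. Any proof of this statement would have to analyze the coin-betting recursion~(\ref{eq:wel_update}) and the projection onto $\B_{t+1}$, not the Discrete It\^{o} formula or the residual $\Delta_t$. None of Lemma~\ref{lemma:ito}, Lemma~\ref{lemma:switch}, or Lemma~\ref{lemma:vanish} applies to Algorithm~\ref{algorithm:1d}, since they all presuppose predictions of the form $x_t=\bar\nabla_S V_\alpha(t,S_{t-1})$.
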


For now, let us only consider the dependence on $u$ and $T$. Compared to typical results on comparator adaptivity, the above bound has two limitations. First, the bound does not achieve the optimal loss-regret trade-off \cite{zhang2022adversarial} -- the constraint $\reg^\lambda_T(0)\leq O(1)$ on the \emph{cumulative loss} of the algorithm is too harsh, such that the leading regret term ($\reg^\lambda_T(u)$ with large $\abs{u}$) suffers a logarithmic penalty on $T$ (relative to the usual $O(\sqrt{T})$ minimax rate). Second, even if we only consider this particular loss-regret trade-off, i.e., $\reg^\lambda_T(0)\leq O(1)$, the logarithmic terms are not optimal (being outside the square root). In other words, the bound is not Pareto-optimal. The present paper simultaneously improves these two suboptimalities. 

On a separate note, let us consider its dependence on $G$ and $\lambda$, which is more subtle.\footnote{We thank the NeurIPS reviewer KR3f for insightful comments.} In its vanilla form, the above bound has the leading term $\tilde O(\max\{G,\lambda\}\abs{u}\sqrt{T})$, but we can run a meta-algorithm \cite[Algorithm~3]{zhang2022adversarial} on the top to improve it to $\tilde O\rpar{\abs{u}\sqrt{\max\{G,\lambda\}\sum_{t=1}^T\abs{g_t}}}$. The pseudo-code is presented as Algorithm~\ref{algorithm:meta}. Its main idea is to adaptively slow down the update of the base algorithm, depending on the observed gradients. 

\begin{algorithm*}[ht]
\caption{Meta-algorithm \cite[Algorithm~3]{zhang2022adversarial}, adapted\label{algorithm:meta}}
\begin{algorithmic}[1]
\REQUIRE The Lipschitz constant $G$ and the switching cost weight $\lambda$.
\STATE Initialize a base algorithm $\A$ as a copy of Algorithm~\ref{algorithm:1d}.
\STATE Initialize $i=1$ and an accumulator $Z_i=0$. Query the first output of $\A$ and assign it to $w_i$. 
\FOR{$t=1,2,\ldots$}
\STATE Predict $x_t\leftarrow w_i$, observe $g_t$, let $Z_i\leftarrow Z_i+g_t$. 
\IF{$\norms{Z_i}> \max\{\lambda,G\}$}\label{line:ada_threshold}
\STATE Send $Z_i$ to $\A$ as the $i$-th loss. Let $i\leftarrow i+1$. 
\STATE Set $Z_i=0$. Query the $i$-th output of $\A$ and assign it to $w_i$. 
\ENDIF
\ENDFOR
\end{algorithmic}
\end{algorithm*}

\begin{theorem}[Theorem~6 of \cite{zhang2022adversarial}, adapted]
Algorithm~\ref{algorithm:meta} guarantees for all $T\in\N_+$ and $u\in\R$,
\begin{equation*}
\reg^\lambda_T(u)\leq (G+\lambda)C+\abs{u}\tilde O\rpar{\max\{G,\lambda\}+\sqrt{\max\{G,\lambda\}\sum_{t=1}^T\abs{g_t}}}.
\end{equation*}
\end{theorem}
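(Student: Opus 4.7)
The plan is to reduce Algorithm~\ref{algorithm:meta} to a single invocation of Theorem~\ref{theorem:1d} on the base $\A$ over appropriately batched ``super-rounds.'' First, I would partition the real time horizon $[1:T]$ into $N$ consecutive blocks indexed by the meta-algorithm's counter $i$, and let $Z_i$ denote the final value of $\A$'s $i$-th accumulator (so for $i<N$ it is the super-loss actually sent to $\A$, while $Z_N$ is the possibly-incomplete last accumulator). Because the meta prediction $x_t$ is the constant $w_i$ throughout block $i$, the decomposition
\begin{equation*}
\sum_{t=1}^T g_t(x_t-u)=\sum_{i=1}^{N-1} Z_i(w_i-u)+Z_N(w_N-u), \qquad \lambda\sum_{t=1}^{T-1}\abs{x_t-x_{t+1}}=\lambda\sum_{i=1}^{N-1}\abs{w_i-w_{i+1}}
\end{equation*}
is immediate, since the switching cost only accrues at inter-block transitions.

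Second, I would observe the key identity: the right-hand side above is exactly the augmented regret that the deterministic base $\A$ would incur on a hypothetical $N$-round execution with losses $Z_1,\ldots,Z_{N-1},Z_N$ (treating the partial final accumulator as a virtual $N$-th super-loss). Because $\A$'s first $N$ outputs $w_1,\ldots,w_N$ depend only on the first $N-1$ of these losses, they coincide with its actual outputs during the run of Algorithm~\ref{algorithm:meta}. The super-losses are bounded by $\abs{Z_i}\leq\max\{G,\lambda\}+G\defeq K'=O(\max\{G,\lambda\})$, since the test at Line~\ref{line:ada_threshold} controls $\abs{Z_i}$ prior to the last added gradient. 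Applying Theorem~\ref{theorem:1d} to $\A$ initialized with Lipschitz constant $K'$ and switching weight $\lambda$ then yields
\begin{equation*}
\reg^\lambda_T(u)\leq (K'+\lambda)\spar{C+\abs{u}\sqrt{2N}\rpar{\tfrac{3}{2}+\log\tfrac{\sqrt{2}\abs{u}N^{5/2}}{C}}}.
\end{equation*}

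Third, I would convert the factor $\sqrt{N}$ into $\sqrt{\sum_{t=1}^T\abs{g_t}/\max\{G,\lambda\}}$. Since the test at Line~\ref{line:ada_threshold} fires only when $\abs{Z_i}>\max\{G,\lambda\}$, and $\abs{Z_i}\leq\sum_{t\in\text{block }i}\abs{g_t}$ by the triangle inequality, summing over the first $N-1$ blocks gives $(N-1)\max\{G,\lambda\}\leq\sum_{t=1}^T\abs{g_t}$; hence $\sqrt{N}\leq 1+\sqrt{\sum_t\abs{g_t}/\max\{G,\lambda\}}$. Substituting, $(K'+\lambda)\abs{u}\sqrt{N}$ up to polylog factors in $T$ matches the target $\abs{u}\tilde O\rpar{\max\{G,\lambda\}+\sqrt{\max\{G,\lambda\}\sum_t\abs{g_t}}}$, and the $\abs{u}$-independent term $(K'+\lambda)C=O((G+\lambda)C)$ recovers the first summand of the claim.

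The main obstacle is getting the bookkeeping right for the key identity. One has to verify that the meta's switching-cost ledger (summing over $N-1$ inter-block transitions) coincides \emph{exactly} with $\A$'s ledger over its first $N-1$ updates, so that absorbing the partial final accumulator $Z_N$ as a virtual $N$-th super-loss introduces no boundary residual term that would otherwise require a separate magnitude bound on $\abs{w_N}$. Once this identity is stated precisely, the remaining ingredients -- the effective Lipschitz constant $K'$, the block count $N$, and Theorem~\ref{theorem:1d} itself -- are routine to assemble.
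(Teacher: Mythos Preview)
The paper does not supply its own proof of this statement: it is quoted verbatim as an adaptation of \cite[Theorem~6]{zhang2022adversarial} and left unproved, serving only as context for why the dependence on $\lambda$ and gradient adaptivity are set aside in the comparison to the new algorithm. Consequently there is nothing in the paper to compare your argument against line by line.

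That said, your proposal is the natural and essentially correct reduction. The block decomposition, the identification of the meta's augmented regret with $\A$'s augmented regret over $N$ super-rounds (including the virtual final loss $Z_N$), the Lipschitz bound $|Z_i|\le \max\{G,\lambda\}+G$, and the counting estimate $(N-1)\max\{G,\lambda\}\le\sum_t|g_t|$ are all sound. The ``main obstacle'' you flag is not really an obstacle: since $w_1,\ldots,w_N$ depend only on $Z_1,\ldots,Z_{N-1}$, feeding $Z_N$ as a virtual last loss changes neither the predictions nor the switching ledger, and Theorem~\ref{theorem:1d} applies to any admissible loss sequence. One small point of care: Algorithm~\ref{algorithm:meta} as written instantiates $\A$ as ``a copy of Algorithm~\ref{algorithm:1d}'' without re-specifying its internal Lipschitz parameter, so you are implicitly assuming $\A$ is configured with the effective constant $K'=\max\{G,\lambda\}+G$ (or, equivalently, that Algorithm~\ref{algorithm:1d}'s own $K=G+\lambda$ already dominates the super-loss magnitude, which it does since $|Z_i|\le G+\max\{G,\lambda\}\le 2(G+\lambda)$ up to a constant absorbed by the $\tilde O$). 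This also explains why your leading term comes out as $(K'+\lambda)C\le 2(G+\lambda)C$ rather than exactly $(G+\lambda)C$; the discrepancy is a harmless constant in an ``adapted'' statement.
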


When $\lambda$ is large, the $O(\sqrt{\lambda})$ rate on the leading term is optimal. Moreover, in the presence of switching costs, the gradient adaptivity term $\sqrt{\sum_{t=1}^T\abs{g_t}}$ is a strong one, since second-order gradient adaptivity $\sqrt{\sum_{t=1}^T\abs{g_t}^2}$ is not achievable \cite{gofer2014higher}. Note that we can also run this meta-algorithm on top of our improved base algorithm (Algorithm~\ref{alg:1d}), such that the latter achieves gradient adaptivity as well. Due to this reason, when comparing the results of the present work to \cite{zhang2022adversarial}, we mostly leave the dependence on $\lambda$ and the gradient adaptivity out of the comparison. 

\subsection{A few basic lemmas}

Before proving our main result (Theorem~\ref{thm:main}), we present a few basic lemmas on Algorithm~\ref{alg:1d} and the potential function $V_\alpha$ (\ref{eq:alpha}), which will be useful later on. The first lemma shows the monotonicity of the discrete derivative strategy, which is quite intuitive. 

\begin{lemma}\label{lemma:mono}
If the potential $V(t,S)$ is even and convex in $S$, then $\bar\nabla_SV(t,S)$ is odd and monotonically increasing in $S$. 
\end{lemma}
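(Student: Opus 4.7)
The lemma has two parts---oddness and monotonicity---and both come directly out of the definition $\bar\nabla_S V(t,S) = \tfrac{1}{2}[V(t,S+1)-V(t,S-1)]$ combined with the assumed symmetry and convexity of $V$ in its second argument. My plan is to treat them separately, taking oddness first (a one-line symbolic manipulation) and then monotonicity (a convexity argument).

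For oddness, I would substitute $-S$ directly into the definition and use the hypothesis $V(t,-y)=V(t,y)$ to rewrite $V(t,-S+1)=V(t,S-1)$ and $V(t,-S-1)=V(t,S+1)$. This flips the sign of $\bar\nabla_S V(t,S)$, giving $\bar\nabla_S V(t,-S)=-\bar\nabla_S V(t,S)$.

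For monotonicity, the key auxiliary fact I would isolate and prove is that for any convex function $f\colon\R\to\R$ and any fixed shift $h>0$, the map $S\mapsto f(S+h)-f(S)$ is nondecreasing in $S$. Applying this with $f(\cdot)=V(t,\cdot)$ and $h=2$ immediately yields $V(t,S_2+1)-V(t,S_2-1)\geq V(t,S_1+1)-V(t,S_1-1)$ whenever $S_1\leq S_2$, which is exactly monotonicity of $\bar\nabla_S V(t,\cdot)$. The auxiliary fact itself has a short proof in two cases depending on whether the intervals $[S_1,S_1+h]$ and $[S_2,S_2+h]$ overlap: if they are disjoint, three-point monotonicity of chord slopes (applied to $S_1<S_1+h<S_2<S_2+h$) gives the inequality at once; if they overlap, I would write $S_2$ as a convex combination of $S_1$ and $S_1+h$, and $S_1+h$ as the complementary convex combination of $S_2$ and $S_2+h$, apply convexity to both, and add the two resulting inequalities so that the mixed terms cancel.

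There is no real obstacle here. The only minor subtlety I would be careful about is handling the two overlap cases cleanly in the auxiliary fact, since the ``shift by $2$'' in the discrete derivative definition means one cannot simply assume the intervals are disjoint. Once that convex-shift lemma is stated, both monotonicity and oddness of $\bar\nabla_S V$ follow without further work.
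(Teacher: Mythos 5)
Your argument is correct and matches the paper's proof: oddness follows by substituting $-S$ and using the evenness of $V(t,\cdot)$, and monotonicity is obtained by reducing to a shift inequality for the convex function $V(t,\cdot)$. The one small difference is that the paper avoids your overlap/disjoint case split by noting that, for every $\delta\ge 0$, both $S+1$ and $S-1+\delta$ lie in the interval $[S-1,\ S+1+\delta]$ and can be written as complementary convex combinations of the endpoints with weights $2/(2+\delta)$ and $\delta/(2+\delta)$; adding the two resulting convexity inequalities gives the shift inequality in a single stroke, so no case distinction is actually needed.
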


\begin{proof}[Proof of Lemma~\ref{lemma:mono}]
$\bar\nabla_SV(t,S)$ is odd due to the simple relation
\begin{equation*}
\bar \nabla_{S} V(t,-S)=\frac{1}{2}\spar{V(t,-S+1)-V(t,-S-1)}=\frac{1}{2}\spar{V(t,S-1)-V(t,S+1)}=-\bar \nabla_{S} V(t,S).
\end{equation*}

As for the monotonicity, it is equivalent to showing for all $\delta\geq 0$, 
\begin{equation*}
V(t,S+1+\delta)-V(S-1+\delta)\geq V(t,S+1)-V(S-1).
\end{equation*}
This follows from the convexity of $V(t,\cdot)$, as
\begin{equation*}
V(t,S+1)\leq \frac{2}{2+\delta}V(t,S+1+\delta)+\frac{\delta}{2+\delta}V(t,S-1),
\end{equation*}
\begin{equation*}
V(t,S-1+\delta)\leq \frac{\delta}{2+\delta}V(t,S+1+\delta)+\frac{2}{2+\delta}V(t,S-1).\qedhere
\end{equation*}
\end{proof}

Now, for the potential function $V_\alpha$, we compute its continuous partial derivatives. The proof is straightforward calculation, therefore omitted. 

\begin{lemma}\label{lemma:derivs}
For any $\alpha>0$, $V_\alpha$ defined in (\ref{eq:alpha}) is even and convex. Moreover, 
\vspace{-20pt}
\begin{multicols}{2}
\begin{equation*}
\nabla_{S}V_{\alpha}(t,S)=C\int_{0}^{S/\sqrt{4\alpha t}}\exp\rpar{x^2}dx,
\end{equation*}
\begin{equation*}
\nabla_{SS}V_{\alpha}(t,S)=\frac{C}{2\sqrt{\alpha t}}\exp\rpar{\frac{S^2}{4\alpha t}},
\end{equation*}

\begin{equation*}
\nabla_{SSS}V_{\alpha}(t,S)=\frac{CS}{4(\alpha t)^{3/2}}\exp\rpar{\frac{S^2}{4\alpha t}},
\end{equation*}
\begin{equation*}
\nabla_{t}V_{\alpha}(t,S)=-\frac{C\sqrt{\alpha}}{2\sqrt{t}}\exp\rpar{\frac{S^2}{4\alpha t}}.
\end{equation*}
\end{multicols}
\end{lemma}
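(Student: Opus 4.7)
The plan is to reduce the multivariate differentiations to a one-variable problem via the similarity variable $z \defeq S/\sqrt{4\alpha t}$. I will introduce the auxiliary functions $G(z) = \int_0^z \exp(x^2)\,dx$ and $F(z) = \int_0^z G(u)\,du$, so that the potential factors as $V_\alpha(t,S) = C\sqrt{\alpha t}\,[2F(z) - 1]$. Since $\exp(x^2)$ is even, $G$ is odd; since $G$ is odd, $F$ is even. Evenness of $V_\alpha$ in $S$ then follows immediately, and convexity in $S$ will come for free once the second spatial derivative is shown to be positive.

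For the three spatial derivatives, I will repeatedly apply the chain rule with $\partial z/\partial S = (4\alpha t)^{-1/2}$. One differentiation gives $\nabla_S V_\alpha = C\sqrt{\alpha t}\cdot 2G(z)\cdot(2\sqrt{\alpha t})^{-1} = C\,G(z)$, matching the claimed first formula. A second application uses $G'(z) = \exp(z^2)$ to yield $\nabla_{SS}V_\alpha = (C/(2\sqrt{\alpha t}))\exp(z^2)$; this is strictly positive, establishing convexity. One more chain-rule step, together with $\partial z^2/\partial S = S/(2\alpha t)$, produces the stated $\nabla_{SSS}V_\alpha$.

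The only step that needs a bit of care is $\nabla_t V_\alpha$. Product rule on $C\sqrt{\alpha t}\,[2F(z)-1]$ combined with $\partial z/\partial t = -z/(2t)$ gives
\[
\nabla_t V_\alpha = \frac{C\sqrt{\alpha}}{2\sqrt{t}}\bigl[2F(z) - 1 - 2zG(z)\bigr].
\]
To match the claimed formula I then need the one-variable identity $\exp(z^2) + 2F(z) - 2zG(z) - 1 \equiv 0$. I would verify this by checking that both sides agree at $z=0$ (both equal $0$) and that their $z$-derivatives coincide, since differentiating the left-hand side yields $2z\exp(z^2) + 2G(z) - 2G(z) - 2z\exp(z^2) = 0$. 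Substituting this identity into the expression above gives $\nabla_t V_\alpha = -(C\sqrt{\alpha}/(2\sqrt{t}))\exp(z^2)$, as claimed.

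Overall the proof is entirely routine; the one mild obstacle is recognizing that the time derivative admits the simplification via the Hermite-type identity above rather than remaining in a form involving $F$ and $G$. As a sanity check, this identity is precisely the statement that $V_\alpha$ solves the backward heat equation $\nabla_t V_\alpha + \alpha\,\nabla_{SS}V_\alpha = 0$ derived in Section~\ref{subsection:continuous}, so one could alternatively verify the PDE by inspection and read off $\nabla_t V_\alpha$ from the already-computed $\nabla_{SS}V_\alpha$.
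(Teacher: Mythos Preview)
Your proposal is correct and is precisely the ``straightforward calculation'' the paper alludes to; the paper omits the proof entirely, so there is nothing to compare against beyond noting that your similarity-variable substitution and the Hermite-type identity for $\nabla_t V_\alpha$ are the natural way to carry it out.
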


Based on the above, the discrete derivative $\bar\nabla_S V_\alpha$ has the following properties. 

\begin{lemma}\label{lemma:discrete_deriv}
For all $\alpha>0$, $t\geq 0$ and $S\geq 0$, 
\begin{enumerate}
\item $\bar\nabla_S V_\alpha(t,S)$ as a function of $t$ is decreasing and convex;
\item $\bar\nabla_S V_\alpha(t,S)$ as a function of $S$ is convex. 
\end{enumerate}
\end{lemma}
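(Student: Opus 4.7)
The plan is to reduce each claim to a pointwise comparison of the closed-form derivatives supplied by Lemma~\ref{lemma:derivs}, by writing $\bar\nabla_S V_\alpha(t,S) = \tfrac{1}{2}[V_\alpha(t,S+1) - V_\alpha(t,S-1)]$ and passing continuous partial derivatives through this difference. Concretely, for any linear partial differential operator $\mathcal{D}$ in $t$ and $S$, we have $\mathcal{D}\,\bar\nabla_S V_\alpha(t,S) = \tfrac{1}{2}[\mathcal{D} V_\alpha(t,S+1) - \mathcal{D} V_\alpha(t,S-1)]$, so each monotonicity/convexity assertion reduces to showing that $\mathcal{D} V_\alpha(t,\cdot)$ is no smaller at $S+1$ than at $S-1$. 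The crucial bookkeeping fact is that $S \geq 0$ implies $(S+1)^2 \geq (S-1)^2$, equivalently $|S+1| \geq |S-1|$, so any function that is even and nondecreasing in $|\cdot|$ (viewed as a function of the second argument) automatically respects the comparison we need.

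For decreasing in $t$, I would plug in $\nabla_t V_\alpha(t,S) = -\tfrac{C\sqrt{\alpha}}{2\sqrt{t}}\exp(S^2/(4\alpha t))$ from Lemma~\ref{lemma:derivs}. The only nontrivial factor is $\exp(S^2/(4\alpha t))$, which is even and strictly increasing in $|S|$, so the above comparison applies and the overall minus sign delivers $\partial_t \bar\nabla_S V_\alpha(t,S) \leq 0$. For convexity in $S$, the same move with $\nabla_{SS} V_\alpha(t,S) = \tfrac{C}{2\sqrt{\alpha t}}\exp(S^2/(4\alpha t))$ (also even and increasing in $|S|$) immediately yields $\partial_{SS}\bar\nabla_S V_\alpha(t,S) \geq 0$. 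For convexity in $t$, one additional computation is required: differentiating $\nabla_t V_\alpha$ once more in $t$ gives $\nabla_{tt} V_\alpha(t,S) = \tfrac{C\sqrt{\alpha}}{4t^{3/2}}\exp(S^2/(4\alpha t))\bigl(1 + \tfrac{S^2}{2\alpha t}\bigr)$, and the remaining task is to verify that, for fixed $t$, this is even and nondecreasing in $|S|$, after which the $S+1$ versus $S-1$ comparison goes through as before.

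The main obstacle, insofar as there is one, is the last step: showing that $x\mapsto \exp(x^2/(4\alpha t))(1 + x^2/(2\alpha t))$ is increasing on $[0,\infty)$. Its derivative is the exponential multiplied by a polynomial in $x$ with nonnegative coefficients, so the claim reduces to a one-line check rather than a genuine analytical difficulty. Apart from that, the proof is entirely mechanical once one makes the opening observation that the discrete operator $\bar\nabla_S$ commutes with the continuous derivatives in $t$ and $S$; I expect the final write-up to be short, largely a chain of substitutions from Lemma~\ref{lemma:derivs} together with the elementary inequality $|S+1|\geq|S-1|$ for $S\geq 0$.
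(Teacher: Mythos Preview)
Your proposal is correct. Both you and the paper pass continuous derivatives through the discrete difference $\bar\nabla_S V_\alpha(t,S)=\tfrac12[V_\alpha(t,S+1)-V_\alpha(t,S-1)]$ and then appeal to the closed forms in Lemma~\ref{lemma:derivs}, so the overall strategy is the same. The tactical execution differs slightly: the paper collapses $\nabla_t[\bar\nabla_S V_\alpha]$ to the explicit form $-\tfrac{C\sqrt{\alpha}}{4\sqrt t}\exp\bigl(\tfrac{S^2+1}{4\alpha t}\bigr)\sinh\bigl(\tfrac{S}{2\alpha t}\bigr)$ and reads off both ``negative'' and ``increasing in $t$'' directly from this product (three positive factors each decreasing in $t$), whereas you instead compute $\nabla_{tt}V_\alpha$ and reuse your unifying comparison principle $|S+1|\ge|S-1|$. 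Likewise, for convexity in $S$ the paper writes $\nabla_S[\bar\nabla_S V_\alpha]$ as the integral $\tfrac{C}{2}\int_{(S-1)/\sqrt{4\alpha t}}^{(S+1)/\sqrt{4\alpha t}}\exp(x^2)\,dx$ and observes it is increasing in $S$, while you go one derivative further and apply the same comparison principle to $\nabla_{SS}V_\alpha$. Your framing is a bit more uniform (one lemma-like observation handles all three claims), and the paper's version avoids the extra $\nabla_{tt}$ computation; either route is a few lines.
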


\begin{proof}[Proof of Lemma~\ref{lemma:discrete_deriv}]
Considering the first part of the lemma, 
\begin{equation*}
\nabla_t\spar{\bar\nabla_SV_\alpha(t,S)}=\frac{1}{2}\spar{\nabla_tV_\alpha(t,S+1)-\nabla_tV_\alpha(t,S-1)}=-\frac{C\sqrt{\alpha}}{4\sqrt{t}}\exp\rpar{\frac{S^2+1}{4\alpha t}}\sinh\rpar{\frac{S}{2\alpha t}},
\end{equation*}
which, when $S\geq 0$, is negative and increasing in $t$. Therefore, $\bar\nabla_SV_{\alpha}(t,S)$ as a function of $t$ is decreasing and convex. Similarly, 
\begin{equation*}
\nabla_S\spar{\bar\nabla_SV_{\alpha}(t,S)}=\frac{1}{2}\spar{\nabla_S V_{\alpha}(t,S+1)-\nabla_S V_{\alpha}(t,S-1)}=\frac{C}{2}\int_{(S-1)/\sqrt{4\alpha t}}^{(S+1)/\sqrt{4\alpha t}}\exp\rpar{x^2}dx,
\end{equation*}
which is increasing in $S$. Therefore, $\bar\nabla_SV_{\alpha}(t,S)$ as a function of $S$ is convex.
\end{proof}

\subsection{Proof of Theorem~\ref{thm:main}}\label{subsection:proof_main}

In this subsection, we prove Theorem~\ref{thm:main}, the regret bound of our 1D OLO algorithm with switching costs. As sketched in Section~\ref{subsection:sketch}, our proof relies on two important lemmas, Lemma~\ref{lemma:switch} and \ref{lemma:vanish}. We prove them first. 

\switch*

\begin{proof}[Proof of Lemma~\ref{lemma:switch}]
First, since $\bar \nabla_S V_\alpha(t,S)$ is monotonic in $S$ due to Lemma~\ref{lemma:mono}, we have
\begin{equation*}
\abs{x_t-x_{t+1}}=\abs{\bar\nabla_S V_\alpha(t,S_{t-1})-\bar\nabla_S V_\alpha(t+1,S_{t})}\leq \max_{c=\pm 1}\abs{\bar\nabla_S V_\alpha(t,S_{t-1})-\bar\nabla_S V_\alpha(t+1,S_{t-1}+c)}.
\end{equation*}

For clarity, from the RHS we define
\begin{equation*}
f(t,S)\defeq\max_{c=\pm 1}\abs{\bar\nabla_S V_\alpha(t,S)-\bar\nabla_S V_\alpha(t+1,S+c)}. 
\end{equation*}
It is even in $S$, as
\begin{align*}
f(t,-S)&=\max_{c=\pm 1}\abs{\bar\nabla_S V_\alpha(t,-S)-\bar\nabla_S V_\alpha(t+1,-S+c)}\\
&=\max_{c=\pm 1}\abs{-\bar\nabla_S V_\alpha(t,S)+\bar\nabla_S V_\alpha(t+1,S-c)}\tag{Lemma~\ref{lemma:mono}}\\
&=\max_{c=\pm 1}\abs{\bar\nabla_S V_\alpha(t,S)-\bar\nabla_S V_\alpha(t+1,S-c)}=f(t,S).
\end{align*}
Therefore, we can restrict the rest of the proof to $S\geq 0$, and the remaining task is to upper bound $f(t,S)$ for all $0\leq S\leq t-1$.

From Lemma~\ref{lemma:mono} and \ref{lemma:discrete_deriv}, 
\begin{equation*}
\bar\nabla_S V_{\alpha}(t+1,S-1)\leq \bar\nabla_S V_{\alpha}(t+1,S)\leq \bar\nabla_S V_{\alpha}(t,S), 
\end{equation*}
\begin{equation*}
\bar\nabla_S V_{\alpha}(t+1,S-1)\leq \bar\nabla_S V_{\alpha}(t+1,S+1).
\end{equation*}
Therefore, if $\bar\nabla_S V_{\alpha}(t+1,S-1)\leq \bar\nabla_S V_{\alpha}(t,S)\leq \bar\nabla_S V_{\alpha}(t+1,S+1)$, then
\begin{align*}
f(t,S)&=\max\left\{\abs{\bar\nabla_S V_{\alpha}(t,S)-\bar\nabla_S V_{\alpha}(t+1,S-1)}, \abs{\bar\nabla_S V_{\alpha}(t,S)-\bar\nabla_S V_{\alpha}(t+1,S+1)}\right\}\\
&\leq \bar\nabla_S V_{\alpha}(t+1,S+1)-\bar\nabla_S V_{\alpha}(t+1,S-1). 
\end{align*}
On the other hand, if $\bar\nabla_S V_{\alpha}(t+1,S+1)\leq \bar\nabla_S V_{\alpha}(t,S)$, then
\begin{equation*}
f(t,S)=\bar\nabla_S V_{\alpha}(t,S)-\bar\nabla_S V_{\alpha}(t+1,S-1).
\end{equation*}
Combining the above, 
\begin{equation*}
f(t,S)\leq \max\left\{\bar\nabla_S V_{\alpha}(t,S)-\bar\nabla_S V_{\alpha}(t+1,S-1), \bar\nabla_S V_{\alpha}(t+1,S+1)-\bar\nabla_S V_{\alpha}(t+1,S-1)\right\}. 
\end{equation*}

Our goal next is to upper bound $f(t,S)$ by $\bar\nabla_S V_{\alpha}(t,S+1)-\bar\nabla_S V_{\alpha}(t,S-1)$, which can be divided into two cases. 

\paragraph{Case 1} We aim to show that
\begin{equation*}
\bar\nabla_S V_\alpha(t,S)-\bar\nabla_S V_\alpha(t+1,S-1)\leq \bar\nabla_S V_\alpha(t,S+1)-\bar\nabla_S V_\alpha(t,S-1),
\end{equation*}
which is equivalent to
\begin{equation}\label{eq:caseone}
\bar\nabla_S V_\alpha(t,S-1)-\bar\nabla_S V_\alpha(t+1,S-1)\leq \bar\nabla_S V_\alpha(t,S+1)-\bar\nabla_S V_\alpha(t,S).
\end{equation}
Note that this trivially holds when $0\leq S<1$: due to Lemma~\ref{lemma:discrete_deriv}, the RHS is always positive; however, the LHS is negative due to $\bar\nabla_S V_\alpha(t,S-1)$ being increasing in $t$ (Lemma~\ref{lemma:mono} and \ref{lemma:discrete_deriv} Part 1). Therefore, we only need to show (\ref{eq:caseone}) for all $S\geq 1$. 

To this end, with $S\geq 1$, we apply the convexity of $\bar\nabla_S V_\alpha$ from Lemma~\ref{lemma:discrete_deriv}:
\begin{equation*}
\bar\nabla_S V_\alpha(t,S+1)-\bar\nabla_S V_\alpha(t,S)\geq \nabla_S\spar{\bar\nabla_S V_\alpha(t,S)},
\end{equation*}
\begin{equation*}
\bar\nabla_S V_\alpha(t,S-1)-\bar\nabla_S V_\alpha(t+1,S-1)\leq -\nabla_t\spar{\bar\nabla_S V_\alpha(t,S-1)}. 
\end{equation*}
Consequently, it suffices to show that
\begin{equation*}
-\nabla_t\spar{\bar\nabla_S V_\alpha(t,S-1)}\leq \nabla_S\spar{\bar\nabla_S V_\alpha(t,S)}.
\end{equation*}

Now it is time to invoke the specific form of $V_\alpha$. We may reuse $\nabla_S\spar{\bar\nabla_SV_\alpha(t,S)}$ and $\nabla_t\spar{\bar\nabla_SV_\alpha(t,S)}$ calculated from the proof of Lemma~\ref{lemma:discrete_deriv}. 
\begin{equation*}
\nabla_S\spar{\bar\nabla_S V_\alpha(t,S)}=\frac{C}{2}\int_{(S-1)/\sqrt{4\alpha t}}^{(S+1)/\sqrt{4\alpha t}}\exp\rpar{x^2}dx\geq \frac{C}{2\sqrt{\alpha t}}\exp\rpar{\frac{S^2}{4\alpha t}},
\end{equation*}
and for all $1\leq S\leq t-1$, 
\begin{align*}
-\nabla_t\spar{\bar\nabla_S V_\alpha(t,S-1)}&=\frac{C\sqrt{\alpha}}{4\sqrt{t}}\exp\rpar{\frac{(S-1)^2+1}{4\alpha t}}\sinh\rpar{\frac{S-1}{2\alpha t}}\\
&=\frac{C\sqrt{\alpha}}{8\sqrt{t}}\exp\rpar{\frac{S^2}{4\alpha t}}\spar{1-\exp\rpar{\frac{-S+1}{\alpha t}}}\\
&\leq \frac{C\sqrt{\alpha}}{8\sqrt{t}}\exp\rpar{\frac{S^2}{4\alpha t}}\spar{1-\exp\rpar{-\frac{1}{\alpha}}}\tag{$S-1\leq t$}\\
&\leq \frac{C}{8\sqrt{\alpha t}}\exp\rpar{\frac{S^2}{4\alpha t}}.\tag{$\exp(x)\geq x+1$}
\end{align*}
Therefore, $-\nabla_t\spar{\bar\nabla_S V_\alpha(t,S-1)}\leq \nabla_S\spar{\bar\nabla_S V_\alpha(t,S)}$, which proves (\ref{eq:caseone}) and concludes Case 1. 

\paragraph{Case 2} We aim to show that
\begin{equation*}
\bar\nabla_S V_\alpha(t+1,S+1)-\bar\nabla_S V_\alpha(t+1,S-1)\leq \bar\nabla_S V_\alpha(t,S+1)-\bar\nabla_S V_\alpha(t,S-1).
\end{equation*}
This is straightforward, as
\begin{align*}
\nabla_t\spar{\bar\nabla_S V_\alpha(t,S+1)-\bar\nabla_S V_\alpha(t,S-1)}
&=\frac{1}{2}\spar{\nabla_tV_{\alpha}(t,S+2)+\nabla_tV_{\alpha}(t,S-2)-2\nabla_tV_{\alpha}(t,S)}\\
&=-\frac{C\sqrt{\alpha}}{4\sqrt{t}}\spar{\exp\rpar{\frac{(S+2)^2}{4\alpha t}}+\exp\rpar{\frac{(S-2)^2}{4\alpha t}}-2\exp\rpar{\frac{S^2}{4\alpha t}}}\\
&\leq 0.\tag{convexity}
\end{align*}

Combining the two cases, we can upper bound $f(t,S)$ by $\bar\nabla_S V_{\alpha}(t,S+1)-\bar\nabla_S V_{\alpha}(t,S-1)$, which completes the proof. 
\end{proof}

Next, we present the proof of Lemma~\ref{lemma:vanish}, which bounds the residual term $\Delta_t$ defined in (\ref{eq:residual}). 

\vanish*

\begin{proof}[Proof of Lemma~\ref{lemma:vanish}]
We restate the definition of $\Delta_t$ for easier reference. 
\begin{equation*}
\Delta_t=\bar\nabla_tV_\alpha(t,S_{t-1})+\frac{1}{2}\bar\nabla_{SS}V_\alpha(t,S_{t-1})+G^{-1}\lambda\spar{\bar\nabla_S V_{\alpha}(t,S_{t-1}+1)-\bar\nabla_S V_{\alpha}(t,S_{t-1}-1)}.
\end{equation*}
Let us define a function $g(t,S)$ as
\begin{equation*}
g(t,S)\defeq\frac{1}{2}V_{\alpha}(t,S+1)+\frac{1}{2}V_{\alpha}(t,S-1)-V_{\alpha}(t-1,S)+\frac{\lambda}{2G}\spar{V_{\alpha}(t,S+2)+V_{\alpha}(t,S-2)-2V_{\alpha}(t,S)},
\end{equation*}
then from the definition of discrete derivatives, $\Delta_t=g(t,S_{t-1})$. Also note that $g(t,S)$ is even in $S$, so we can only focus on positive values of $S$. The rest of the proof will show $g(t,S)\leq 0$ for all $t\in\N_+$ and $S\geq 0$. 

Let us start from the special case, $t=1$. $S$ can only take the value $0$, therefore $g(1,S)=g(1,0)$. We now present a general result that upper bounds $g(t,0)$ for all $t\in\N_+$:
\begin{align*}
g(t,0)&=V_{\alpha}(t,1)-V_{\alpha}(t-1,0)+G^{-1}\lambda V_{\alpha}(t,2)-G^{-1}\lambda V_{\alpha}(t,0)\\
&=C\sqrt{\alpha t}\spar{2\int_0^{1/\sqrt{4\alpha t}}\rpar{\int_0^u\exp(x^2)dx}du+\frac{2\lambda}{G}\int_0^{1/\sqrt{\alpha t}}\rpar{\int_0^u\exp(x^2)dx}du+\sqrt{\frac{t-1}{t}}-1}\\
&\leq C\sqrt{\alpha t}\spar{2\cdot\frac{1}{2}\cdot\frac{1}{\sqrt{4\alpha t}}\int_0^{1/\sqrt{4\alpha t}}\exp(x^2)dx+\frac{2\lambda}{G}\cdot\frac{1}{2}\cdot\frac{1}{\sqrt{\alpha t}}\int_0^{1/\sqrt{\alpha t}}\exp(x^2)dx+\sqrt{\frac{t-1}{t}}-1}\tag{erfi($x$) is increasing and convex on $\R_+$}\\
&\leq C\sqrt{\alpha t}\spar{\frac{1}{4\alpha t}\exp\rpar{\frac{1}{4\alpha t}}+\frac{\lambda}{G\alpha t}\exp\rpar{\frac{1}{\alpha t}}+\sqrt{\frac{t-1}{t}}-1}.
\end{align*}
Since $\sqrt{1+x}\leq 1+x/2$ for all $x\geq -1$, we have $\sqrt{(t-1)/t}-1\leq -1/(2t)$. Therefore, if $\alpha\geq 4\lambda G^{-1}+2$, then
\begin{equation}\label{eq:gzero}
g(t,0)\leq C\sqrt{\alpha t}\spar{\frac{\lambda G^{-1}+(1/4)}{\alpha t}\exp\rpar{\frac{1}{\alpha t}}-\frac{1}{2t}}\leq \frac{C\sqrt{\alpha}}{\sqrt{t}}\spar{\frac{\lambda G^{-1}+(1/4)}{\alpha }\exp\rpar{\frac{1}{2}}-\frac{1}{2}}\leq 0. 
\end{equation}
As its special case, we have $g(1,0)\leq 0$, which concludes the proof of the special case ($t=1$). 

Next, we prove $g(t,S)\leq 0$ for general $t$, i.e., $t\geq 2$. Our overall strategy is to show that for all $0\leq S\leq t-1$, $g(t,S)\leq g(t,0)$, and then using the argument above we have $g(t,0)\leq 0$. Concretely, let us calculate the first and second order derivatives of $g(t,S)$, using Lemma~\ref{lemma:derivs}.
\begin{multline*}
\nabla_S g(t,S)=\frac{C}{2}\spar{\int_0^{(S+1)/\sqrt{4\alpha t}}\exp(x^2)dx+\int_0^{(S-1)/\sqrt{4\alpha t}}\exp(x^2)dx-2\int_0^{S/\sqrt{4\alpha (t-1)}}\exp(x^2)dx}\\
+\frac{\lambda C}{2G}\spar{\int_0^{(S+2)/\sqrt{4\alpha t}}\exp(x^2)dx+\int_0^{(S-2)/\sqrt{4\alpha t}}\exp(x^2)dx-2\int_0^{S/\sqrt{4\alpha t}}\exp(x^2)dx},
\end{multline*}
\begin{align}
\nabla_{SS}g(t,S)\nonumber
&=\frac{C}{4\sqrt{\alpha t}}\Bigg[\frac{\lambda}{G}\exp\rpar{\frac{(S+2)^2}{4\alpha t}}+\exp\rpar{\frac{(S+1)^2}{4\alpha t}}-\frac{2\lambda}{G}\exp\rpar{\frac{S^2}{4\alpha t}}\nonumber\\
&\hspace{40pt}+\exp\rpar{\frac{(S-1)^2}{4\alpha t}}+\frac{\lambda}{G}\exp\rpar{\frac{(S-2)^2}{4\alpha t}}\Bigg]-\frac{C}{2\sqrt{\alpha(t-1)}}\exp\rpar{\frac{S^2}{4\alpha (t-1)}}\nonumber\\
&=\frac{C}{2\sqrt{\alpha t}}\exp\rpar{\frac{S^2}{4\alpha t}}\Bigg[\frac{\lambda}{G}\exp\rpar{\frac{1}{\alpha t}}\cosh\rpar{\frac{S}{\alpha t}}+\exp\rpar{\frac{1}{4 \alpha t}}\cosh\rpar{\frac{S}{2\alpha t}}\nonumber\\
&\hspace{40pt}-\frac{\lambda}{G}-\sqrt{\frac{t}{t-1}}\exp\rpar{\frac{S^2}{4\alpha t(t-1)}}\Bigg].\label{eq:defh}
\end{align}

Notice that $\nabla_Sg(t,0)=0$. To proceed, we aim to prove $\nabla_{SS}g(t,S)\leq 0$ for all $S\geq 0$, which then shows $g(t,S)\leq g(t,0)$. To this end, we will show the sum inside the bracket in (\ref{eq:defh}) is negative. Denote it as $h(t,S)$, and more specifically, 
\begin{equation*}
h(t,S)\defeq \frac{\lambda}{G}\exp\rpar{\frac{1}{\alpha t}}\cosh\rpar{\frac{S}{\alpha t}}+\exp\rpar{\frac{1}{4 \alpha t}}\cosh\rpar{\frac{S}{2\alpha t}}-\frac{\lambda}{G}-\sqrt{\frac{t}{t-1}}\exp\rpar{\frac{S^2}{4\alpha t(t-1)}}.
\end{equation*}
The rest of the proof is divided into two steps: we first prove ($i$) $h(t,0)\leq 0$; and then prove ($ii$) $\nabla_S h(t,S)\leq 0$ for all $S\geq 0$.

\paragraph{Step 1: prove $h(t,0)\leq 0$.} From the definition of $h(t,S)$,
\begin{equation*}
h(t,0)=\frac{\lambda}{G}\exp\rpar{\frac{1}{\alpha t}}+\exp\rpar{\frac{1}{4 \alpha t}}-\frac{\lambda}{G}-\sqrt{\frac{t}{t-1}}. 
\end{equation*}
Letting $x=1/t$, then to prove $h(t,0)\leq 0$ for all $t\geq 2$, it suffices to prove
\begin{equation*}
\psi(x)\defeq \frac{\lambda}{G}\exp\rpar{\frac{x}{\alpha}}+\exp\rpar{\frac{x}{4\alpha}}-\frac{\lambda}{G}-\sqrt{\frac{1}{1-x}}\leq 0,
\end{equation*}
on the range $x\in(0,1/2]$. $\psi(0)=0$, and
\begin{equation*}
\nabla_x\psi(x)=\frac{\lambda}{\alpha G}\exp\rpar{\frac{x}{\alpha}}+\frac{1}{4\alpha}\exp\rpar{\frac{x}{4\alpha}}-\frac{1}{2}(1-x)^{-3/2}\leq \frac{4\lambda G^{-1}+1}{4\alpha}\exp\rpar{\frac{1}{2\alpha}}-\frac{1}{2},
\end{equation*}
which is negative when $\alpha\geq 4\lambda G^{-1}+2$. Therefore, $h(t,0)\leq 0$ for all $t\geq 2$. 

\paragraph{Step 2: prove $\nabla_S h(t,S)\leq 0$.} Taking the derivative of $h(t,S)$, 
\begin{align*}
\nabla_S h(t,S)&=\frac{\lambda}{\alpha t G}\exp\rpar{\frac{1}{\alpha t}}\sinh\rpar{\frac{S}{\alpha t}}+\frac{1}{2\alpha t}\exp\rpar{\frac{1}{4 \alpha t}}\sinh\rpar{\frac{S}{2\alpha t}}-\sqrt{\frac{t}{t-1}}\cdot\frac{S}{2\alpha t(t-1)}\exp\rpar{\frac{S^2}{4\alpha t(t-1)}}\\
&\leq \rpar{\frac{\lambda}{\alpha t G}+\frac{1}{2\alpha t}}\exp\rpar{\frac{1}{\alpha t}}\sinh\rpar{\frac{S}{\alpha t}}-\frac{S}{2\alpha t^2}\sqrt{\frac{t}{t-1}}.
\end{align*}
Note that for all $x$, $\exp(-x)\geq 1-x$, therefore for all $0\leq x< 1$, $\exp(x/2)\leq \sqrt{1/(1-x)}$. Assigning $x$ to $1/t$ which is less than 1, we have for all $\alpha\geq 2$, 
\begin{equation*}
\exp\rpar{\frac{1}{\alpha t}}\leq \exp\rpar{\frac{1}{2 t}}\leq \sqrt{\frac{t}{t-1}}.
\end{equation*}
Moreover, for all $0\leq x\leq 1$, $\sinh(x)\leq 2x$. Therefore, 
\begin{equation*}
\nabla_S h(t,S)\leq \sqrt{\frac{t}{t-1}}\spar{\frac{\lambda G^{-1}+(1/2)}{\alpha t}\sinh\rpar{\frac{S}{\alpha t}}-\frac{S}{2\alpha t^2}}\leq \frac{S}{\alpha ^2t^2}\sqrt{\frac{t}{t-1}}\spar{2\lambda G^{-1}+1-\frac{\alpha}{2}}.
\end{equation*}
When $\alpha\geq 4\lambda G^{-1}+2$, $\nabla_S h(t,S)\leq 0$ for all $t\geq 2$ and $S\geq 0$. 

Concluding the above two steps, we have shown $h(t,S)\leq 0$. Plugging it back into (\ref{eq:defh}), we have $\nabla_{SS}g(t,S)\leq 0$, which shows that for all $t\geq 2$ and $S\geq 0$, $g(t,S)\leq g(t,0)$. Finally, $g(t,0)\leq 0$ following (\ref{eq:gzero}).
\end{proof}

Now, given the two important lemmas above (Lemma~\ref{lemma:switch} and \ref{lemma:vanish}), our Theorem~\ref{thm:main} follows from a standard loss-regret duality. Details are presented below.

\main*

\begin{proof}[Proof of Theorem~\ref{thm:main}]
Combining Lemma~\ref{lemma:ito}, \ref{lemma:switch} and \ref{lemma:vanish}, we have
\begin{equation*}
\sum_{t=1}^T\rpar{g_tx_t+\lambda\abs{x_t-x_{t+1}}}\leq -G \cdot V_\alpha(T,S_T).
\end{equation*}
Consider $V_\alpha(T,S_T)$ as a function of $S_T$; let us write $V^*_{\alpha,T}(\cdot)$ as its Fenchel conjugate. Then, the augmented regret can be bounded as
\begin{align*}
\reg^\lambda_T(u)&=\sum_{t=1}^Tg_t(x_t-u)+\lambda\sum_{t=1}^{T-1}\abs{x_t-x_{t+1}}\\
&\leq G\cdot uS_T+\sum_{t=1}^T\rpar{g_tx_t+\lambda\abs{x_t-x_{t+1}}}\\
&\leq G\spar{uS_T-V_\alpha(T,S_T)}\leq G\cdot V^*_{\alpha,T}(u).
\end{align*}

The last step is to bound $V^*_{\alpha,T}(u)$, which also follows from a standard proof strategy. 
\begin{equation*}
V^*_{\alpha,T}(u)=\sup_{S\in\R} uS-V_{\alpha}(T,S).
\end{equation*}
It is clear that the supremum is uniquely achieved; let $S^*$ be the maximizing argument. Then, 
\begin{equation*}
u=\nabla_S V_\alpha(T,S^*)=C\int_{0}^{S^*/\sqrt{4\alpha T}}\exp\rpar{x^2}dx.
\end{equation*}
If we define $\erfi(z)=\int_0^z\exp(x^2)dx$ (note that it a scaled version of the conventional \emph{imaginary error function}), then $S^*=\sqrt{4\alpha T}\cdot \erfi^{-1}\rpar{ uC^{-1}}$. 
\begin{equation*}
V^*_{\alpha,T}(u)=uS^*-V_\alpha(T,S^*)\leq uS^*-V_\alpha(T,0)=C\sqrt{\alpha T}+\abs{u}\sqrt{4\alpha T}\cdot \erfi^{-1}\rpar{uC^{-1}}.
\end{equation*}

Finally, as shown in \cite[Theorem~4]{zhang2022pde}, $\erfi^{-1}(x)\leq 1+\sqrt{\log(1+x)}$. Combining the above completes the proof. 
\end{proof}

\subsection{Conversion of loss-regret trade-offs}\label{subsection:conversion}

In this subsection we discuss the conversion of loss-regret trade-offs in unconstrained OLO. Our Theorem~\ref{thm:main} guarantees a loss bound $\reg^\lambda_T(0)=O(\sqrt{T})$ and an accompanying regret bound $\reg^\lambda_T(u)=O\rpar{\abs{u}\sqrt{T\log\abs{u}}}$. By a doubling trick (effectively, a meta-algorithm), we can turn such guarantees into $\reg^\lambda_T(0)=O(1)$ and $\reg^\lambda_T(u)=O\rpar{\abs{u}\sqrt{T\log(\abs{u}T)}}$. These can be directly compared to \cite{zhang2022adversarial}. Concretely, we present the classical doubling trick as Algorithm~\ref{alg:doubling}. 

\begin{algorithm*}[ht]
\caption{Conversion of loss-regret trade-offs. \label{alg:doubling}}
\begin{algorithmic}[1]
\REQUIRE A hyperparameter $C>0$, and a base unconstrained OLO algorithm $\A$. Here we define $\A$ as the algorithm considered in Theorem~\ref{thm:main}, with $\alpha=8\lambda G^{-1}+2$.
\FOR{$m=0,1,2,\ldots$}
\STATE Initialize a copy of $\A$ as $\A_m$, whose hyperparameter is set to $C\cdot 2^{-m}$. 
\STATE Run $\A_m$ for $2^m$ rounds: $t=2^m,2^m+1,\ldots,2^{m+1}-1$. 
\ENDFOR
\end{algorithmic}
\end{algorithm*}

\begin{theorem}\label{thm:doubling}
Let $\alpha=8\lambda G^{-1}+2$. With any hyperparameter $C>0$, Algorithm~\ref{alg:doubling} guarantees for all $u\in\R$ and $T\in\N_+$, 
\begin{equation*}
\mathrm{Regret}^\lambda_T(u)\leq \frac{\sqrt{2\alpha}G}{\sqrt{2}-1}\spar{C+\abs{u}\sqrt{T}\rpar{\sqrt{8\log\rpar{1+\frac{\abs{u}T}{C}}}+2\sqrt{2}}}.
\end{equation*}
\end{theorem}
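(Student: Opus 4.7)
The plan is to exploit the fact that $\alpha = 8\lambda G^{-1}+2$ is precisely what Theorem~\ref{thm:main} prescribes when the switching-cost weight is doubled from $\lambda$ to $2\lambda$. Each base instance $\A_m$ in the doubling schedule therefore satisfies not merely the $\lambda$-augmented regret bound, but the stronger $2\lambda$-augmented version -- providing an extra $\lambda\sigma_m$ of slack per epoch (where $\sigma_m$ is $\A_m$'s within-epoch cumulative switching cost), and this slack is precisely what is needed to absorb the boundary switches incurred by restarting each $\A_m$ from $0$.

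Concretely, let $\mathcal{T}_m = [2^m : \min(2^{m+1}-1, T)]$, $T_m = |\mathcal{T}_m| \leq 2^m$, $M = \lfloor \log_2 T\rfloor$, and $z_m \defeq x^{(m)}_{T_m}$ be the final prediction of $\A_m$. Since each $\A_m$ is initialized with $x^{(m)}_1 = 0$, I would decompose
\begin{equation*}
\mathrm{Regret}^\lambda_T(u) = \sum_{m=0}^M \Bigl[\sum_{t\in\mathcal{T}_m} g_t(x_t - u) + \lambda\sigma_m\Bigr] + \lambda\sum_{m=0}^{M-1} |z_m|.
\end{equation*}
Applying Theorem~\ref{thm:main} to $\A_m$ with $\lambda$ replaced by $2\lambda$ (matched to the algorithm's $\alpha$) and hyperparameter $C_m = C/2^m$ yields the per-epoch bound $\sum_{t\in\mathcal{T}_m} g_t(x_t-u) + 2\lambda\sigma_m \leq B_m(u)$, where $B_m(u) \defeq G\sqrt{\alpha\cdot 2^m}\bigl[C/2^m + |u|(\sqrt{4\log(1+|u|\cdot 2^m/C)}+2)\bigr]$. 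The triangle inequality $\sigma_m \geq |z_m|$ donates $\lambda |z_m|$ of slack per epoch, so subtracting $\lambda\sigma_m$ from both sides of the per-epoch bound and summing over $m$ gives $\sum_m [\sum g_t(x_t-u) + \lambda\sigma_m] \leq \sum_m B_m(u) - \lambda\sum_{m=0}^M |z_m|$. The boundary contribution $\lambda\sum_{m=0}^{M-1} |z_m|$ is then strictly dominated by the negative term, leaving $\mathrm{Regret}^\lambda_T(u) \leq \sum_{m=0}^M B_m(u)$.

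The remaining work is a routine geometric-series computation. The constant-in-$T$ piece comes from $\sum_{m=0}^M CG\sqrt{\alpha}\cdot 2^{-m/2} \leq CG\sqrt{2\alpha}/(\sqrt{2}-1)$, while the $|u|$-scaled pieces use $\sum_{m=0}^M 2^{m/2} \leq \sqrt{2T}/(\sqrt{2}-1)$ combined with the monotonicity $\log(1+|u|\cdot 2^m/C) \leq \log(1+|u|T/C)$ for $m\leq M$. Collecting the three contributions reproduces the stated prefactor $\sqrt{2\alpha}G/(\sqrt{2}-1)$ and the displayed dependence on $C$, $|u|$, and $T$.

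The main obstacle I anticipate is the conceptual step of the second paragraph: recognizing that choosing $\alpha$ to match $2\lambda$ (rather than $\lambda$) in the base theorem is exactly what generates the per-epoch surplus $\lambda\sigma_m$, and that this surplus transfers -- via $\sigma_m\geq |z_m|$ -- to cover the restart cost $\lambda|z_{m-1}|$ at the next epoch boundary. Without this trick, the boundary switching costs cannot be controlled in the worst case, since the predictions of a comparator-adaptive algorithm can grow exponentially between restarts; once the trick is identified, the remainder is a bookkeeping exercise.
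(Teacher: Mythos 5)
Your proposal is correct and, modulo phrasing, identical to the paper's own proof: the paper also applies Theorem~\ref{thm:main} at switching weight $2\lambda$ (hence $\alpha = 8\lambda G^{-1}+2$) and absorbs the boundary switch $\lambda|z_m|$ into the doubled within-epoch switching cost via the triangle inequality $|z_m| \leq \sigma_m$ (stated there as ``$x_{2^{m+1}}=x_{2^m}=0$; triangle inequality''), followed by the same geometric-series bookkeeping. The only inessential difference is that the paper sums $m$ up to $\lceil\log_2 T\rceil$ rather than $\lfloor\log_2 T\rfloor$, which is why its stated bound carries an extra $\sqrt{2}$ on the $|u|$-term; your tighter computation still implies the statement.
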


\begin{proof}[Proof of Theorem~\ref{thm:doubling}]
Algorithm~\ref{alg:doubling} divides the time axis into intervals of doubling lengths. On the $m$-th interval, following Theorem~\ref{thm:main}, Algorithm~\ref{alg:doubling} guarantees
\begin{align*}
\sum_{t=2^m}^{2^{m+1}-1}\spar{g_t(x_t-u)+\lambda\abs{x_{t}-x_{t+1}}}
&\leq \sum_{t=2^m}^{2^{m+1}-1}g_t(x_t-u)+2\lambda\sum_{t=2^m}^{2^{m+1}-2}\abs{x_t-x_{t+1}}\tag{$x_{2^{m+1}}=x_{2^m}=0$; Triangle inequality}\\
&\leq \sqrt{\alpha}G\spar{\frac{C}{\sqrt{2^m}}+\abs{u}\sqrt{2^m}\rpar{\sqrt{4\log\rpar{1+\frac{\abs{u}\cdot 2^m}{C}}}+2}}.
\end{align*}

Now consider any time horizon $T$.
\begin{align*}
\reg^\lambda_T(u)&\leq\sum_{m=0}^{\ceil{\log_2 T}}\sum_{t=2^m}^{2^{m+1}-1}\spar{g_t(x_t-u)+\lambda\abs{x_t-x_{t+1}}}\\
&\leq \sqrt{\alpha}G\sum_{m=0}^{\ceil{\log_2 T}}\spar{\frac{C}{\sqrt{2^m}}+\abs{u}\sqrt{2^m}\rpar{\sqrt{4\log\rpar{1+\frac{\abs{u}\cdot 2^m}{C}}}+2}}\\
&\leq \sqrt{\alpha}G\spar{C\sum_{m=0}^{\ceil{\log_2 T}}\rpar{\frac{1}{\sqrt{2}}}^m+\abs{u}\rpar{\sqrt{4\log\rpar{1+\frac{\abs{u}T}{C}}}+2}\sum_{m=0}^{\ceil{\log_2 T}}\sqrt{2^m}}\\
&\leq \frac{\sqrt{2\alpha}G}{\sqrt{2}-1}\spar{C+\abs{u}\sqrt{T}\rpar{\sqrt{8\log\rpar{1+\frac{\abs{u}T}{C}}}+2\sqrt{2}}}.\qedhere
\end{align*}
\end{proof}

Let us compare it to Theorem~\ref{theorem:1d}, i.e., \cite[Theorem 1]{zhang2022adversarial}, which guarantees
\begin{equation*}
\reg^\lambda_T(u)\leq (G+\lambda)\spar{C+\abs{u}\sqrt{2T}\rpar{\frac{3}{2}+\log\frac{\sqrt{2}\abs{u}T^{5/2}}{C}}}.
\end{equation*}
If we only care about the dependence on $\abs{u}$ and $T$, then with the same loss bound $\reg^\lambda_T(0)=O(1)$, our algorithm improves the regret bound $\reg^\lambda_T(u)$ from $O(\abs{u}\sqrt{T}\log(\abs{u}T))$ to $O(\abs{u}\sqrt{T \log(\abs{u}T)})$. The latter matches a lower bound \cite{orabona2013dimension}, therefore achieves Pareto-optimality in this regime.

\subsection[]{Details on the continuous-time derivation}\label{subsection:detail_continuous}

In Step 3 of Section~\ref{subsection:continuous}, we need to perform second-order Taylor approximations on the scaled recursion (\ref{eq:scaled}). Details are included here for completeness.  
\begin{equation*}
V(t-\eps^2,S)=V(t,S)-\eps^2\nabla_tV(t,S)+o(\eps^2),
\end{equation*}
\begin{equation*}
V(t,S-\eps g)=V(t,S)-\eps g \nabla_SV(t,S)+\frac{1}{2}\eps^2g^2\nabla_{SS}V(t,S)+o(\eps^2),
\end{equation*}
\begin{equation*}
\bar \nabla^\eps_{S} V(t,S)=\frac{1}{2\eps}\spar{V(t,S+\eps)-V(t,S-\eps)}=\nabla_SV(t,S)+o(\eps),
\end{equation*}
\begin{equation*}
\bar \nabla^\eps_S V(t+\eps^2,S-\eps g)=\frac{1}{2\eps}\spar{V(t+\eps^2,S-\eps g+\eps)-V(t+\eps^2,S-\eps g-\eps)},
\end{equation*}
where
\begin{equation*}
V(t+\eps^2,S-\eps g+\eps)=V(t,S)+\eps^2\nabla_tV(t,S)+(1-g)\eps\nabla_S V(t,S)+\frac{1}{2}(1-g)^2\eps^2\nabla_{SS}V(t,S)+o(\eps^2),
\end{equation*}
\begin{equation*}
V(t+\eps^2,S-\eps g-\eps)=V(t,S)+\eps^2\nabla_tV(t,S)+(-1-g)\eps\nabla_S V(t,S)+\frac{1}{2}(1+g)^2\eps^2\nabla_{SS}V(t,S)+o(\eps^2).
\end{equation*}

\section{Extension to general OLO settings}\label{section:extension}

This section presents extensions of our 1D unconstrained OLO algorithm to more general settings.

\subsection{Constrained domain}\label{subsection:extension}

First, consider a constrained domain $\calX\subset\R$. We can use a well-known black-box reduction \cite{cutkosky2018black,cutkosky2020parameter} on top of our 1D unconstrained algorithm (Algorithm~\ref{alg:1d}), such that the \emph{exact} bound in Theorem~\ref{thm:main} carries over (w.r.t. any constrained comparator $u\in\calX$). Concretely, the pseudo-code is shown as Algorithm~\ref{alg:constraint}, where $\Pi$ denotes the absolute value projection in 1D.

\begin{algorithm*}[ht]
\caption{1D constrained OLO with switching costs. \label{alg:constraint}}
\begin{algorithmic}[1]
\REQUIRE A hyperparameter $C>0$, a closed and convex domain $\calX\subset\R$, and an unconstrained algorithm $\A$ (Algorithm~\ref{alg:1d} induced by $V_{4\lambda G^{-1}+2}$ and the hyperparameter $C$). Let $x^*$ be an arbitrary point in $\calX$.
\FOR{$t=1,2,\ldots$}
\STATE Query $\A$ for its prediction $\tilde x_t$. 
\STATE Predict $x_t=\Pi_\calX(\tilde x_t+x^*)$ and receive a loss gradient $g_t$. 
\STATE Define a surrogate loss gradient $\tilde g_t$ as
\begin{equation*}
\tilde g_t=\begin{cases}
g_t, &\textrm{if~}g_t(\tilde x_t+x^*)\geq g_tx_t,\\
0, &\textrm{otherwise},
\end{cases}
\end{equation*}
and send $\tilde g_t$ to $\A$. 
\ENDFOR
\end{algorithmic}
\end{algorithm*}

Similar strategies apply to higher-dimensional problems, but here we emphasize the 1D special case due to an additional feature: if the domain $\calX$ has a finite diameter $D$, then the switching cost alone of the combined algorithm has a $\tilde O(D\sqrt{\tau})$ bound on any time interval of length $\tau$. This could be useful when switching costs have high priority \cite{sherman2021lazy,wang2021online} and should be independently bounded. Moreover, it allows the combination of comparator adaptive algorithms \cite{zhang2022adversarial} in settings with long term prediction effects (e.g., switching cost or memory). 

\begin{theorem}\label{thm:constraint}
Let $x^*$ be an arbitrary point in $\calX$. For all $C>0$, Algorithm~\ref{alg:constraint} guarantees
\begin{equation*}
\mathrm{Regret}^\lambda_T(u)\leq \sqrt{(4\lambda G+2G^2) T}\spar{C+\abs{u-x^*}\rpar{\sqrt{4\log\rpar{1+\frac{\abs{u-x^*}}{C}}}+2}},
\end{equation*}
for all $u\in\calX$ and $T\in \N_+$. Moreover, if $\calX$ has a finite diameter $D$, then on any time interval $[T_1:T_2]\subset\N_+$, the same algorithm guarantees 
\begin{equation*}
\sum_{t=T_1}^{T_2-1}\abs{x_t-x_{t+1}}\leq 22\sqrt{T_2-T_1}\spar{2D+C+2D\sqrt{\log(1+DC^{-1})}}.
\end{equation*}
\end{theorem}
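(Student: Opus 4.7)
The theorem bundles two assertions, which I would prove in turn.

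\textbf{Regret bound.} I would apply a standard black-box reduction to Theorem~\ref{thm:main} on the base algorithm $\A$. Two facts suffice: (i) the 1D Euclidean projection onto $\calX$ is 1-Lipschitz, giving $|x_t - x_{t+1}| \le |\tilde x_t - \tilde x_{t+1}|$ and transferring switching costs to $\A$; and (ii) the surrogate inequality $g_t(x_t - u) \le \tilde g_t(\tilde x_t - (u - x^*))$ holds for every $u \in \calX$. Fact (ii) is immediate when $\tilde g_t = g_t$ (just unfold the definition $g_t(\tilde x_t + x^*) \ge g_t x_t$); when $\tilde g_t = 0$, I would combine the defining condition $g_t(\tilde x_t + x^* - x_t) < 0$ with the projection optimality $(u - x_t)(\tilde x_t + x^* - x_t) \le 0$ for $u \in \calX$, forcing $g_t$ and $(u - x_t)$ to have opposite signs, so $g_t(x_t - u) \le 0$. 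Summing (i) and (ii) over $t \in [1,T]$ reduces the augmented regret against $u$ to the augmented regret of $\A$ against $u - x^*$, and Theorem~\ref{thm:main} (applicable since $|\tilde g_t| \le G$) then yields exactly the stated bound.

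\textbf{Per-interval switching cost.} The plan is to establish a deterministic per-step estimate
\begin{equation*}
|x_t - x_{t+1}| \le K/\sqrt{t}, \qquad K = O\!\left(C + D + D\sqrt{\log(1 + D/C)}\right),
\end{equation*}
and then sum via $\sum_{t=T_1}^{T_2-1} t^{-1/2} \le 2(\sqrt{T_2-1} - \sqrt{T_1-1}) \le 2\sqrt{T_2 - T_1}$, absorbing the numerical factors into the final constant 22. The per-step estimate requires a case split on whether clipping is active. In the interior regime $|\tilde x_t| \le D$, non-expansiveness and Lemma~\ref{lemma:switch} give $|x_t - x_{t+1}| \le \bar\nabla_S V_\alpha(t, S_{t-1} + 1) - \bar\nabla_S V_\alpha(t, S_{t-1} - 1)$, which I would bound from the explicit form (\ref{eq:alpha}) and Lemma~\ref{lemma:derivs}. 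The constraint $|\tilde x_t| \le D$ translates (via $\tilde x_t = \bar\nabla_S V_\alpha(t, S_{t-1})$) into $|S_{t-1}|/\sqrt{4\alpha t} \le \mathrm{erfi}^{-1}(D/C) \le 1 + \sqrt{\log(1 + D/C)}$, using the same $\mathrm{erfi}^{-1}$ bound invoked at the end of the proof of Theorem~\ref{thm:main}. This caps the exponential factor $\exp(S_{t-1}^2/(4\alpha t))$ by a polynomial in $1 + D/C$ and yields the desired $K/\sqrt{t}$ estimate. In the exterior regime $|\tilde x_t| > D$ (so $x_t$ is pinned at a boundary of $\calX$), monotonicity (Lemma~\ref{lemma:mono}) combined with $|S_t - S_{t-1}| \le 1$ forces $\tilde x_{t+1}$ to share the sign of $\tilde x_t$, so either both predictions are clipped to the same boundary (giving $|x_t - x_{t+1}| = 0$) or $\tilde x_{t+1}$ has just slid below $D$ in magnitude; in that transition case $|S_{t-1}|$ is within $O(1)$ of the clipping threshold, and the same exponential-bound calculation delivers $K/\sqrt{t}$. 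Very-small-$t$ edge cases are absorbed by the trivial bound $|x_t - x_{t+1}| \le D$.

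The main technical obstacle is bookkeeping rather than conceptual: tracking the exponential factor $\exp(S^2/(4\alpha t))$ under the $\mathrm{erfi}^{-1}(D/C)$ cap, chaining it with the elementary $\sqrt{T_2 - 1} - \sqrt{T_1 - 1} \le \sqrt{T_2 - T_1}$, and carrying the few transition terms carefully, so that the accumulated constants collapse into the specific numerical factor $22$ multiplying $2D + C + 2D\sqrt{\log(1 + DC^{-1})}$ rather than an opaque $\tilde O((C+D)\sqrt{T_2 - T_1})$.
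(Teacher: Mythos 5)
Your plan is sound and would prove the theorem, but the organization of the per-interval switching cost argument differs from the paper's in an instructive way. The paper's Step~1 establishes a \emph{global} concentration bound $|S_t|\leq \sqrt{4\alpha t}\cdot\mathrm{erfi}^{-1}(DC^{-1})+1$ for \emph{all} $t$ by induction, with the exterior case ($|\tilde x_t|>D$) closed via the observation that the surrogate gradient $\tilde g_t$ has the sign of $S_{t-1}$, so $|S_t|\leq\max\{|S_{t-1}|,1\}$; once that invariant is in place the per-step bound applies uniformly. You instead avoid the induction and do a per-step case split: in the exterior regime you argue the switching cost is either exactly zero (both $\tilde x_t$ and $\tilde x_{t+1}$ pinned to the same boundary) or small (the "transition" round). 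That route does go through, but the justification of the transition case needs spelling out: the $O(1)$-proximity of $|S_{t-1}|$ to the threshold should come from applying the interior-regime bound at round $t+1$ (i.e., $|\tilde x_{t+1}|\le D$ forces $|S_t|\leq\sqrt{4\alpha(t+1)}\cdot\mathrm{erfi}^{-1}(DC^{-1})$, then $|S_{t-1}|\leq|S_t|+1$), not merely from $|S_t-S_{t-1}|\le 1$ alone. You should also verify that the "same sign" conclusion from monotonicity actually needs $S_{t-1}$ bounded away from the origin, which is true only for $t$ past some $C,D,\alpha$-dependent threshold; your appeal to the trivial $|x_t-x_{t+1}|\leq D$ bound for small $t$ does cover this, but you need the crude accounting showing $D\cdot t_0=O(C\sqrt{T_2-T_1})$ so those rounds still fit inside the target bound. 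Finally, a small slip in the regret half: when $\tilde g_t=0$, combining $g_t(\tilde x_t+x^*-x_t)<0$ with $(u-x_t)(\tilde x_t+x^*-x_t)\leq 0$ forces $g_t$ and $u-x_t$ to have the \emph{same} sign (both opposite to $\tilde x_t+x^*-x_t$), whence $g_t(x_t-u)\leq 0$; your text says "opposite signs," which would give the wrong inequality, though your stated conclusion is the correct one.
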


Before presenting the proof, let us discuss its technical significance. Typically, the constrained-to-unconstrained reduction is used as a black box. However, the second part of Theorem~\ref{thm:constraint} relies on a \emph{non-black-box} use of this approach -- we characterize how this reduction implicitly controls the unconstrained base algorithm, resulting in the ``concentration'' of its sufficient statistic $S_t$ (i.e., $S_t=O(\sqrt{t})$), as if losses are stochastic. Such an observation could be of independent interest. 

\begin{proof}[Proof of Theorem~\ref{thm:constraint}]
The first part of the theorem directly follows from \cite[Theorem~2]{cutkosky2020parameter} and the contraction property of 1D projections. As for the second part, we divide the proof into five steps. 

\paragraph{Step 1: the ``concentration'' of $S_t$} Without loss of generality, assume $S_{t-1}\geq 0$. Considering the prediction $\tilde x_t=\bar\nabla_S V_\alpha(t,S_{t-1})$ of the unconstrained base algorithm, there are two cases.

\begin{itemize}
\item \textbf{Case 1: $\tilde x_t\leq D$.}\quad Due to convexity,
\begin{equation*}
\tilde x_t=\bar\nabla_S V_{\alpha}(t,S_{t-1})= C\sqrt{\alpha t}\int_{(S_{t-1}-1)/\sqrt{4\alpha t}}^{(S_{t-1}+1)/\sqrt{4\alpha t}}\rpar{\int_0^u\exp(x^2)dx}du\geq C\int_0^{S_{t-1}/\sqrt{4\alpha t}}\exp(x^2)dx. 
\end{equation*}
Similar to the proof of Theorem~\ref{thm:main}, if we define $\erfi(z)=\int_0^z\exp(x^2)dx$, then $S_{t-1}\leq \sqrt{4\alpha t}\cdot \erfi^{-1}(DC^{-1})$. As for the next round, $\abs{S_{t}}\leq S_{t-1}+\abs{g_t}/G\leq \sqrt{4\alpha t}\cdot \erfi^{-1}(DC^{-1})+1$.
\item \textbf{Case 2: $\tilde x_t> D$.}\quad In this case, since $x^*\in\calX$, we have $\tilde x_t+x^*$ larger than the maximum element of $\calX$, leading to $\tilde x_t+x^*>x_t$. Due to the definition of the surrogate loss, $\tilde g_t\geq 0$. Therefore, $\abs{S_t}\leq \max\{\abs{S_{t-1}},\abs{\tilde g_t/G}\}\leq \max\{\abs{S_{t-1}},1\}$.
\end{itemize}

Combining the two cases and their analogous arguments for $S_{t-1}\leq 0$, we can see that for all $t$, $\abs{S_t}\leq \max\left\{\sqrt{4\alpha t}\cdot \erfi^{-1}(DC^{-1})+1, \abs{S_{t-1}}, 1\right\}$. By induction, we obtain for all $t$,
\begin{equation*}
\abs{S_{t}}\leq \sqrt{4\alpha t}\cdot \erfi^{-1}(DC^{-1})+1.
\end{equation*}

\paragraph{Step 2: bounding the switching cost using $S_t$} Still, assume $S_{t-1}\geq 0$ without loss of generality. From Lemma~\ref{lemma:switch}, 
\begin{align*}
\abs{x_t-x_{t+1}}
&\leq \bar\nabla_S V_{\alpha}(t,S_{t-1}+1)-\bar\nabla_S V_{\alpha}(t,S_{t-1}-1)\\
&=C\sqrt{\alpha t}\spar{\int_{S_{t-1}/\sqrt{4\alpha t}}^{(S_{t-1}+2)/\sqrt{4\alpha t}}\rpar{\int_0^u\exp(x^2)dx}du-\int_{(S_{t-1}-2)/\sqrt{4\alpha t}}^{S_{t-1}/\sqrt{4\alpha t}}\rpar{\int_0^u\exp(x^2)dx}du}\\
&\leq C\sqrt{\alpha t}\spar{\frac{2}{\sqrt{4\alpha t}}\int_0^{(S_{t-1}+2)/\sqrt{4\alpha t}}\exp(x^2)dx-\frac{2}{\sqrt{4\alpha t}}\int_0^{(S_{t-1}-2)/\sqrt{4\alpha t}}\exp(x^2)dx}\\
&=C\int_{(S_{t-1}-2)/\sqrt{4\alpha t}}^{(S_{t-1}+2)/\sqrt{4\alpha t}}\exp(x^2)dx\leq \frac{2C}{\sqrt{\alpha t}}\exp\rpar{\frac{(S_{t-1}+2)^2}{4\alpha t}}.
\end{align*}

\paragraph{Step 3: plug in the concentration of $S_t$} Next, we use the upper bound on $S_{t-1}$ to show that $\abs{x_t-x_{t+1}}=O(C t^{-1/2}\exp[(\erfi^{-1}(DC^{-1}))^2])$. To this end, we discuss two cases regarding how the ``concentration'' bound (i.e., $O(\sqrt{t})$) compares to the trivial bound (i.e., $S_t\leq t$). 

\begin{itemize}
\item \textbf{Case 1: $\sqrt{4\alpha t}\cdot \erfi^{-1}(DC^{-1})\geq t$.}\quad In this case, note that $S_{t-1}+1\leq t$ and $\alpha\geq 2$, 
\begin{equation*}
\abs{x_t-x_{t+1}}\leq \frac{2C}{\sqrt{\alpha t}}\exp\rpar{\frac{(S_{t-1}+2)^2}{4\alpha t}}=\frac{2C}{\sqrt{\alpha t}}\exp\rpar{\frac{S^2_{t-1}}{4\alpha t}}\exp\rpar{\frac{4S_{t-1}+4}{4\alpha t}}\leq \frac{2\sqrt{e}C}{\sqrt{\alpha t}}\exp\rpar{\frac{S^2_{t-1}}{4\alpha t}}.
\end{equation*}

Since $\sqrt{4\alpha t}\cdot \erfi^{-1}(DC^{-1})\geq t$, we have $t\leq 4\alpha (\erfi^{-1}(DC^{-1}))^2$. Therefore, 
\begin{equation*}
\exp\rpar{\frac{S^2_{t-1}}{4\alpha t}}\leq \exp\rpar{\frac{t}{4\alpha}}\leq \exp\spar{\rpar{\erfi^{-1}(DC^{-1})}^2},
\end{equation*}
\begin{equation*}
\abs{x_t-x_{t+1}}\leq \frac{2\sqrt{e}C}{\sqrt{\alpha t}}\exp\spar{\rpar{\erfi^{-1}(DC^{-1})}^2}.
\end{equation*}

\item \textbf{Case 2: $\sqrt{4\alpha t}\cdot \erfi^{-1}(DC^{-1})< t$.}\quad Plugging the $O(\sqrt{t})$ bound for $S_{t-1}$ into $\abs{x_t-x_{t+1}}$, 
\begin{align*}
\abs{x_t-x_{t+1}}&\leq \frac{2C}{\sqrt{\alpha t}}\exp\rpar{\frac{(\sqrt{4\alpha (t-1)}\cdot \erfi^{-1}(DC^{-1})+3)^2}{4\alpha t}}\\
&\leq \frac{2C}{\sqrt{\alpha t}}\exp\spar{\rpar{\erfi^{-1}(DC^{-1})}^2}\exp\rpar{\frac{6t+9}{4\alpha t}}\\
&\leq \frac{2e^2C}{\sqrt{\alpha t}}\exp\spar{\rpar{\erfi^{-1}(DC^{-1})}^2}.
\end{align*}
\end{itemize}

Combining the above, we have
\begin{equation*}
\abs{x_t-x_{t+1}}\leq \frac{2e^2C}{\sqrt{\alpha t}}\exp\spar{\rpar{\erfi^{-1}(DC^{-1})}^2}.
\end{equation*}

\paragraph{Step 4: upper-bound $\exp[(\erfi^{-1}(x))^2]$.} Let us consider a related function $\int_0^x\erfi(u)du$. Using integration by parts, 
\begin{align*}
\int_0^x\erfi(u)du&=u\cdot\erfi(u)\bigg|_{u=0}^x-\int_0^xu\exp(u^2)du\\
&=x\cdot \erfi(x)-\frac{1}{2}\exp(x^2)+\frac{1}{2}.
\end{align*}
Plugging in $x=\erfi^{-1}(DC^{-1})$, we have
\begin{align*}
\exp\spar{\rpar{\erfi^{-1}(DC^{-1})}^2}&= 2DC^{-1}\cdot\erfi^{-1}(DC^{-1})+1-2\int_0^{\erfi^{-1}(DC^{-1})}\erfi(u)du\\
&\leq 2DC^{-1}\cdot\erfi^{-1}(DC^{-1})+1. 
\end{align*}
Then, as we did in Theorem~\ref{thm:main}, we plug in $\erfi^{-1}(x)\leq 1+\sqrt{\log(1+x)}$ and obtain
\begin{equation*}
\abs{x_t-x_{t+1}}\leq \frac{11}{\sqrt{t}}\left\{2D\spar{1+\sqrt{\log(1+DC^{-1})}}+C\right\}.
\end{equation*}

\paragraph{Step 5: final bits.} Note that
\begin{equation*}
\sum_{t=T_1}^{T_2-1}\frac{1}{\sqrt{t}}\leq \int_{T_1-1}^{T_2-1}\frac{1}{\sqrt{x}}dx\leq 2\sqrt{T_2-1}-2\sqrt{T_1-1}\leq 2\sqrt{T_2-T_1}.
\end{equation*}
Combining it with our bound on $\abs{x_t-x_{t+1}}$ completes the proof.
\end{proof}

\subsection{Higher dimensional domain}\label{subsection:higherd}

Now we present generalizations of our 1D algorithm to higher dimensional domains. We will primarily consider switching costs measured by the $L_1$ norm. This serves as a nice bridge towards our LEA approach and financial applications. Extensions to other norms, e.g., $L_2$ norm, is sketched at the end. 

Concretely, let the domain $\calX=\R^d$, $\norms{g_t}_\infty\leq G$, and the switching costs are measured by the $L_1$ norm. We run Algorithm~\ref{alg:1d} on each coordinate separately \cite{streeter2012no}, and scale the hyperparameter $C$ by $1/d$. The pseudo-code is presented as Algorithm~\ref{alg:higherd}.

\begin{algorithm*}[ht]
\caption{$d$-dimensional OLO with $L_1$ norm switching costs. \label{alg:higherd}}
\begin{algorithmic}[1]
\REQUIRE A hyperparameter $C>0$ and Algorithm~\ref{alg:1d}.
\STATE For each dimension $i\in[1:d]$, initialize a copy of Algorithm~\ref{alg:1d} as $\A_i$. It uses the hyperparameter $C/d$ and our potential $V_\alpha$, with $\alpha=4\lambda G^{-1}+2$. 
\FOR{$t=1,2,\ldots$}
\STATE For all $i$, query $\A_i$ and assign its prediction to $x_{t,i}$. Define a vector as $x_t=[x_{t,1},\ldots,x_{t,d}]\in\R^d$. 
\STATE Predict $x_t$ and receive a loss gradient $g_t=[g_{t,1},\ldots,g_{t,d}]$. 
\STATE For all $i$, send $g_{t,i}$ to $\A_i$ as a new surrogate loss gradient.
\ENDFOR
\end{algorithmic}
\end{algorithm*}

\begin{theorem}\label{thm:higherd}
For all $C>0$, Algorithm~\ref{alg:higherd} guarantees ($\alpha=4\lambda G^{-1}+2$)
\begin{equation*}
\sum_{t=1}^T\inner{g_t}{x_t-u}+\lambda\sum_{t=1}^{T-1}\norm{x_t-x_{t+1}}_1\leq G\sqrt{\alpha T}\spar{C+\norm{u}_1\rpar{\sqrt{4\log\rpar{1+\frac{\norm{u}_\infty d}{C}}}+2}},
\end{equation*}
for all $u\in\R^d$ and $T\in\N_+$.
\end{theorem}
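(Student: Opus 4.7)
The proof plan is to reduce Theorem~\ref{thm:higherd} directly to the 1D result (Theorem~\ref{thm:main}) by exploiting the fact that both the linear loss and the $L_1$ switching cost \emph{decompose coordinate-wise}. Concretely, by construction of Algorithm~\ref{alg:higherd}, each coordinate $i\in[1:d]$ is handled by an independent copy $\A_i$ of Algorithm~\ref{alg:1d} with hyperparameter $C/d$, Lipschitz constant $G$, and potential $V_\alpha$ with $\alpha = 4\lambda G^{-1}+2$. Since $\norm{g_t}_\infty \le G$, the per-coordinate gradient $g_{t,i}$ satisfies $|g_{t,i}|\le G$, so the hypotheses of Theorem~\ref{thm:main} are met for each $\A_i$.

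Next I would write
\begin{equation*}
\sum_{t=1}^T \inner{g_t}{x_t-u} = \sum_{i=1}^d \sum_{t=1}^T g_{t,i}(x_{t,i}-u_i),
\qquad
\sum_{t=1}^{T-1}\norm{x_t-x_{t+1}}_1 = \sum_{i=1}^d \sum_{t=1}^{T-1}|x_{t,i}-x_{t+1,i}|,
\end{equation*}
so that the augmented regret splits into a sum of $d$ one-dimensional augmented regrets. Applying Theorem~\ref{thm:main} to each $\A_i$ (noting $\sqrt{(4\lambda G+2G^2)T} = G\sqrt{\alpha T}$) then yields
\begin{equation*}
\sum_{t=1}^T g_{t,i}(x_{t,i}-u_i) + \lambda\sum_{t=1}^{T-1}|x_{t,i}-x_{t+1,i}| \le G\sqrt{\alpha T}\left[\frac{C}{d} + |u_i|\left(\sqrt{4\log\rpar{1+\tfrac{d|u_i|}{C}}}+2\right)\right].
\end{equation*}

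Summing over $i$ collects the $C/d$ terms into exactly $C$, and I would then bound the log term uniformly using $|u_i|\le\norm{u}_\infty$, which gives
\begin{equation*}
\sum_{i=1}^d |u_i|\sqrt{4\log\rpar{1+\tfrac{d|u_i|}{C}}} \le \norm{u}_1\sqrt{4\log\rpar{1+\tfrac{d\norm{u}_\infty}{C}}},
\end{equation*}
together with $\sum_i 2|u_i| = 2\norm{u}_1$. Combining these bounds produces exactly the stated regret inequality.

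There is no real obstacle here: the argument is a clean coordinate decomposition made possible by the two design choices that were already highlighted in Section~\ref{subsection:higherd}. The only place where one must be slightly careful is the rescaling of the hyperparameter by $1/d$, which is essential so that the $d$ copies of the $\reg^\lambda_T(0)$ term do not inflate the leading constant past $C$; this is also what introduces the $\norm{u}_\infty d/C$ factor inside the logarithm, a price that is unavoidable when using a generic per-coordinate reduction for $L_1$ switching costs.
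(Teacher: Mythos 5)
Your proposal is correct and matches the paper's proof essentially line for line: the coordinate-wise decomposition of both the linear loss and the $L_1$ switching cost, application of Theorem~\ref{thm:main} per coordinate with hyperparameter $C/d$ (using the identity $\sqrt{(4\lambda G+2G^2)T}=G\sqrt{\alpha T}$), collection of the $d$ copies of $C/d$ into $C$, and the uniform bound $|u_i|\le\norm{u}_\infty$ inside the logarithm. Nothing to add.
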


\begin{proof}[Proof of Theorem~\ref{thm:higherd}]
We simply combine the regret on each coordinate: 
\begin{align*}
\sum_{t=1}^T\inner{g_t}{x_t-u}+\lambda\sum_{t=1}^{T-1}\norm{x_t-x_{t+1}}_1
&=\sum_{i=1}^d\spar{\sum_{t=1}^Tg_{t,i}(x_{t,i}-u_i)+\lambda\sum_{t=1}^{T-1}\abs{x_{t,i}-x_{t+1,i}}}\\
&\leq \sqrt{(4\lambda G+2G^2) T}\sum_{i=1}^d\spar{\frac{C}{d}+\abs{u_i}\rpar{\sqrt{4\log\rpar{1+\frac{\abs{u_i}d}{C}}}+2}}\\
&\leq \sqrt{(4\lambda G+2G^2) T}\spar{C+\norm{u}_1\rpar{\sqrt{4\log\rpar{1+\frac{\norm{u}_\infty d}{C}}}+2}}.\qedhere
\end{align*}
\end{proof}

As for $L_2$ norm switching costs, we can follow the polar decomposition technique from \cite[Section~2.2]{zhang2022adversarial}, which uses our 1D unconstrained OLO algorithm as the base algorithm. The only required modification is that the base algorithm should account for an extra regularization term. Concretely, instead of bounding the augmented regret (\ref{eq:def}), we should bound
\begin{equation*}
\sum_{t=1}^Tg_t(x_t-u)+\lambda\sum_{t=1}^{T-1}\abs{x_t-x_{t+1}}+\frac{\gamma}{\sqrt{t}}\sum_{t=1}^T\abs{x_t},
\end{equation*}
for any given weight $\gamma$. 

To this end, we can consider the Online Convex Optimization problem with switching costs, where the loss functions are defined by $l_t(x)=g_tx+\gamma t^{-1/2}\abs{x}$. Such a loss function is Lipschitz, therefore we can use the OCO-OLO reduction, and run our Algorithm~\ref{alg:1d} on its linearized surrogate. Details are omitted. 

\section{Details on LEA}\label{section:detail_lea}

In this section we present techniques that extend our 1D OLO algorithm to LEA with switching costs. We show that with a streamlined analysis, the general Banach version of the constrained domain reduction \cite{cutkosky2018black} can already convert 1D OLO algorithms to LEA, thus appears to be more general than the specialized techniques \cite{luo2015achieving,orabona2016coin}. Our approach is presented as Algorithm~\ref{alg:lea}. 

\begin{algorithm*}[ht]
\caption{Converting OLO to LEA via the constrained domain reduction. \label{alg:lea}}
\begin{algorithmic}[1]
\REQUIRE A prior $\pi=[\pi_1,\ldots,\pi_d]$ in the relative interior of $\Delta(d)$, and Algorithm~\ref{alg:constraint}. 
\STATE For each dimension $i\in[1:d]$, initialize a copy of Algorithm~\ref{alg:constraint} as $\A_i$. We set $\tilde \lambda=4\lambda$ and $\tilde G=2G$ as the switching cost weight and the Lipschitz constant considered by $A_i$. Moreover, $A_i$ uses the domain $\calX=\R_+$, the offset $x^*=\pi_i$, the hyperparameter $\pi_i$, and our potential $V_\alpha$, where $\alpha=4\tilde \lambda\tilde G^{-1}+2$. 
\FOR{$t=1,2,\ldots$}
\STATE For all $i$, query $\A_i$ and assign its prediction to $w_{t,i}$. Define the weight vector as $w_t=[w_{t,1},\ldots,w_{t,d}]\in\R_+^d$.

\STATE Compute the LEA prediction $x_t=[x_{t,1},\ldots,x_{t,d}]$ from
\begin{equation*}
x_{t,i}=\frac{w_{t,i}+\frac{1}{d}\max\{0,1-\norm{w_t}_1\}}{\max\{\norm{w_t}_1,1\}}.
\end{equation*}
\STATE Predict $x_t$ and receive a loss vector $g_t\in[-G,G]^d$. 
\STATE For all $i$, compute\label{line:surrogate}
\begin{equation*}
z_{t,i}=\begin{cases}
g_{t,i}-\norm{g_t}_\infty,&\textrm{if}~ \norm{w_t}_1<1,\\
g_{t,i},&\textrm{if}~ \norm{w_t}_1=1,\\
g_{t,i}+\norm{g_t}_\infty,&\textrm{if}~ \norm{w_t}_1>1,
\end{cases}
\end{equation*}
and return $z_{t,i}$ to $\A_i$ as a new surrogate loss. 
\ENDFOR
\end{algorithmic}
\end{algorithm*}

\subsection{An auxiliary lemma}

Before presenting the performance guarantee of Algorithm~\ref{alg:lea}, we first prove an auxiliary lemma. 

\begin{lemma}\label{lemma:f_div}
For all $x\geq 0$, 
\begin{equation*}
\abs{x-1}\log(1+\abs{x-1})\leq 2(1-x+x\log x). 
\end{equation*}
\end{lemma}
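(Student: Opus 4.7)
The plan is to split at the point $x=1$, where both sides vanish, and handle each region by elementary calculus. Throughout I write $D(x) := 1 - x + x\log x$ for brevity, so that the RHS of the lemma is $2D(x)$.

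For $x \geq 1$, one has $|x-1|=x-1$ and $1+|x-1|=x$, so the desired inequality becomes $(x-1)\log x \leq 2D(x)$. I would prove this by defining $g(x) := 2D(x) - (x-1)\log x$, checking that $g(1)=g'(1)=0$, and computing $g''(x) = (x-1)/x^2 \geq 0$ on $[1,\infty)$; integrating twice yields $g \geq 0$. Equivalently (and more conceptually), concavity of $\log$ gives the trapezoidal bound $D(x) = \int_1^x \log t\,dt \geq \tfrac{1}{2}(x-1)\log x$ in one line.

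For $0 \leq x \leq 1$, one has $|x-1|=1-x$ and $1+|x-1|=2-x$, so the inequality becomes $(1-x)\log(2-x) \leq 2D(x)$. I would establish this via the two-step chain
\begin{equation*}
(1-x)\log(2-x) \;\leq\; (1-x)^2 \;\leq\; 2D(x).
\end{equation*}
The first inequality is immediate from $\log(1+z) \leq z$ applied with $z=1-x \geq 0$. For the second, set $h(x) := 2D(x) - (1-x)^2$; then $h(1)=0$ and $h'(x) = 2(\log x + 1 - x) \leq 0$ on $(0,1]$ by $\log x \leq x-1$, so $h$ is decreasing down to $0$ at $x=1$, hence $h \geq 0$ on $[0,1]$.

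The only non-mechanical step is spotting the intermediate quantity $(1-x)^2$ in the second case; once one observes that $2D(x) \geq (1-x)^2$ is precisely tight enough to absorb the $\log(2-x)$ loss via the cheap estimate $\log(1+z) \leq z$, both halves of the chain reduce to one-line calculus driven by values and signs of the first two derivatives at $x=1$. I do not anticipate any genuine obstacle beyond this observation.
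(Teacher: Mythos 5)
Your proof is correct. For $x \geq 1$ it is essentially the same as the paper's: both check that the difference of the two sides vanishes to first order at $x=1$ and has a one-signed second derivative (the paper works with $h = \mathrm{LHS} - \mathrm{RHS}$ and shows $h'' = (1-x)/x^2 \leq 0$; you work with the negation $g$ and get $g'' = (x-1)/x^2 \geq 0$ — the same computation). Your trapezoid/concavity remark that $D(x) = \int_1^x \log t\, dt \geq \tfrac12(x-1)\log x$ is a cleaner restatement of the same fact.

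For $0 \leq x \leq 1$, however, your route genuinely diverges from the paper's. The paper differentiates $h(x) = (1-x)\log(2-x) - 2D(x)$ directly twice and verifies $h''(x) = -(x^2-x+2)/\bigl((x-2)^2 x\bigr) < 0$ by sign-checking a quadratic discriminant, then concludes from $h(1)=h'(1)=0$. You instead interpolate through $(1-x)^2$: the bound $(1-x)\log(2-x) \leq (1-x)^2$ falls out of the one-line estimate $\log(1+z) \leq z$, and the remaining inequality $(1-x)^2 \leq 2D(x)$ only needs a first-derivative sign ($h'(x) = 2(\log x + 1 - x) \leq 0$ on $(0,1]$). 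This decomposition buys you two things: it dodges the quotient-rule second derivative and the discriminant check, and it makes visible that the slack in the inequality on $[0,1]$ is exactly what $\log(1+z)\le z$ costs — a more conceptual reading of why the constant $2$ suffices. Both proofs are correct; yours is a bit more elementary and perhaps easier to audit, while the paper's is more uniform across the two regions.
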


\begin{proof}[Proof of Lemma~\ref{lemma:f_div}]
Define $\lhs-\rhs$ as $h(x)$. Clearly, $h(1)=0$. When $x>1$, 
\begin{equation*}
h'(x)=1-\log x-x^{-1}.
\end{equation*}
It equals $0$ when $x=1$, and $h''(x)=(1-x)/x^2$ which is negative for all $x> 1$. Therefore, $h(x)\leq 0$ for all $x\geq 1$. 

As for the case of $x<1$, 
\begin{equation*}
h'(x)=-\log(2-x)-\frac{1-x}{2-x}-2\log x,
\end{equation*}
\begin{equation*}
h''(x)=-\frac{x^2-x+2}{(x-2)^2x}<0, 
\end{equation*}
therefore $h(x)\leq 0$ for all $0\leq x\leq 1$. 
\end{proof}

\subsection{Analysis of Algorithm~\ref{alg:lea}}\label{subsection:lea_analysis}

Next, we present our result on LEA with switching cost. 

\lea*

\begin{proof}[Proof of Theorem~\ref{thm:lea}]
We divide the proof into three steps. 

\paragraph{Step 1} The first step is to show that ($i$) for all $u\in\Delta(d)$, $\inner{g_t}{x_t-u}\leq \inner{z_t}{w_t-u}$; and ($ii$) $\norm{x_t-x_{t+1}}_1\leq O(\norm{w_t-w_{t+1}}_1)$. In this way, we can translate the LEA problem to a $d$-dimensional OLO problem with the loss vector $z_t$, despite not achieving the root KL bound yet. 

To prove the goal ($i$), we consider two cases, $\norm{w_t}_1<1$ and $\norm{w_t}_1>1$ (the case of $\norm{w_t}_1=1$ trivially holds). If $\norm{w_t}_1<1$, we have $x_t=w_t+d^{-1}(1-\norm{w_t}_1)$ and $z_t=g_t-\norm{g_t}_\infty$. 
\begin{equation*}
\inner{g_t}{x_t-u}=\inner{g_t}{w_t-u}+(1-\norm{w_t}_1)\rpar{\sum_{i}g_{t,i}/d},
\end{equation*}
\begin{equation*}
\inner{z_t}{w_t-u}=\inner{g_t}{w_t-u}+(1-\norm{w_t}_1)\norm{g_t}_\infty,
\end{equation*}
therefore $\inner{g_t}{x_t-u}\leq \inner{z_t}{w_t-u}$. As for the case of $\norm{w_t}_1>1$, similarly, $x_t=w_t/\norm{w_t}_1$, $z_t=g_t+\norm{g_t}_\infty$, $\inner{g_t}{x_t-u}=\inner{g_t}{w_t/\norm{w_t}_1-u}$, and $\inner{z_t}{w_t-u}=\inner{g_t}{w_t-u}+\norm{g_t}_\infty(\norm{w_t}_1-1)$. 
\begin{equation*}
\inner{g_t}{x_t-u}-\inner{z_t}{w_t-u}=-\rpar{\inner{g_t}{x_t}+\norm{g_t}_\infty}(\norm{w_t}_1-1)\leq 0.
\end{equation*}

Now consider the goal ($ii$). To avoid cluttered notations, define $a_t=w_t+d^{-1}\max\{0,1-\norm{w_t}_1\}$ and $A_t=\max\{\norm{w_t}_1,1\}$. Note that $A_t=\norm{a_t}_1$.
\begin{align*}
\norm{x_t-x_{t+1}}_1&=\norm{\frac{a_t}{A_t}-\frac{a_{t+1}}{A_{t+1}}}_1\\
&=\norm{\frac{(a_t-a_{t+1})A_{t+1}+a_{t+1}(A_{t+1}-A_t)}{A_tA_{t+1}}}_1\\
&\leq \frac{1}{A_t}\norm{a_t-a_{t+1}}_1+\frac{1}{A_t}(A_{t+1}-A_t)\leq 2\norm{a_t-a_{t+1}}_1.
\end{align*}
\begin{align*}
\norm{a_t-a_{t+1}}_1&=\norm{w_t+d^{-1}\max\{0,1-\norm{w_t}_1\}-w_{t+1}-d^{-1}\max\{0,1-\norm{w_{t+1}}_1\}}_1\\
&\leq \norm{w_t-w_{t+1}}_1+\abs{\max\{0,1-\norm{w_t}_1\}-\max\{0,1-\norm{w_{t+1}}_1\}}\\
&\leq \norm{w_t-w_{t+1}}_1+\abs{\norm{w_t}_1-\norm{w_{t+1}}_1}\leq 2\norm{w_t-w_{t+1}}_1.
\end{align*}
Therefore, $\norm{x_t-x_{t+1}}_1\leq 4\norm{w_t-w_{t+1}}_1$.

\paragraph{Step 2} The second step is to add up the regret bound for each coordinates. Consider the $i$-th coordinate. Note that $\abs{z_{t,i}}\leq 2G$. Using Theorem~\ref{thm:constraint}, for all $u_{1d}\in\R_+$, 
\begin{align*}
&\sum_{t=1}^Tz_{t,i}(w_{t,i}-u_{1d})+\tilde \lambda\sum_{t=1}^{T-1}\abs{w_{t,i}-w_{t+1,i}}\\
\leq~& \sqrt{(4\tilde \lambda \tilde G+2\tilde G^2) T}\spar{\pi_i+\abs{u_{1d}-\pi_i}\rpar{\sqrt{4\log\rpar{1+\frac{\abs{u_{1d}-\pi_i}}{\pi_i}}}+2}}\\
=~&\sqrt{(32 \lambda G+8 G^2) T}\spar{\pi_i+\abs{u_{1d}-\pi_i}\rpar{\sqrt{4\log\rpar{1+\frac{\abs{u_{1d}-\pi_i}}{\pi_i}}}+2}}.
\end{align*}
Then, by summing up all the coordinates, for all $u\in\Delta(d)$, 
\begin{align*}
&\sum_{t=1}^T\inner{g_t}{x_t-u}+\lambda\sum_{t=1}^{T-1}\norm{x_t-x_{t+1}}_1\\
\leq~&\sum_{t=1}^T\inner{z_t}{w_t-u}+4\lambda\sum_{t=1}^{T-1}\norm{w_{t}-w_{t+1}}_1\\
=~&\sum_{i=1}^d\spar{\sum_{t=1}^Tz_{t,i}(w_{t,i}-u_{i})+\tilde \lambda\sum_{t=1}^{T-1}\abs{w_{t,i}-w_{t+1,i}}}\\
\leq~& \sqrt{(32 \lambda G+8 G^2) T}\spar{1+2\norm{u-\pi}_1+2\sum_{i=1}^d\abs{u_{i}-\pi_i}\sqrt{\log\rpar{1+\frac{\abs{u_{i}-\pi_i}}{\pi_i}}}}\\
\leq~& \sqrt{(32 \lambda G+8 G^2) T}\spar{1+2\norm{u-\pi}_1+2\sqrt{\norm{u-\pi}_1}\sqrt{\sum_{i=1}^d\abs{u_{i}-\pi_i}\log\rpar{1+\frac{\abs{u_{i}-\pi_i}}{\pi_i}}}}.\tag{Cauchy-Schwarz}
\end{align*}
Observe that since $u$ and $\pi$ both belong to $\Delta(d)$, $\norm{u-\pi}_1\leq 2$. If we define a function $f$ as
\begin{equation*}
f\defeq\abs{x-1}\log(1+\abs{x-1}),
\end{equation*}
then using the standard definition of \emph{f-divergence}
\begin{equation*}
D_f(u||\pi)\defeq \sum_{i=1}^d\pi_i f\rpar{\frac{u_i}{\pi_i}},
\end{equation*}
we have
\begin{equation*}
\sum_{t=1}^T\inner{g_t}{x_t-u}+\lambda\sum_{t=1}^{T-1}\norm{x_t-x_{t+1}}_1= \spar{\sqrt{\tv(u||\pi)\cdot D_f(u||\pi)}+1}\cdot O\rpar{\sqrt{(\lambda G+G^2)T}}.
\end{equation*}

\paragraph{Step 3} The last step is to upper bound $D_f(u||\pi)$ by $\kl(u||\pi)$. To this end, notice that $\kl(u||\pi)=D_g(u||\pi)$, where
\begin{equation*}
g(x)\defeq 1-x+x\log x. 
\end{equation*}
By Lemma~\ref{lemma:f_div}, $f(x)\leq 2g(x)$ for all $x\geq 0$, therefore $D_f(u||\pi)\leq 2D_g(u||\pi)=2\kl(u||\pi)$. 
\end{proof}

\subsection{Discussion on Algorithm~\ref{alg:lea}}\label{subsection:lea_discussion}

Here are some discussions to conclude our LEA analysis. First, the surrogate loss $z_t$ defined in Line~\ref{line:surrogate} follows exactly the definition in \cite[Algorithm~3]{cutkosky2018black}. We adopt this choice just to show the power of this general reduction technique. However, one could use other choices of $z_t$ and obtain the same guarantee, although the empirical performance could be different. For example, one can use
\begin{equation*}
z_{t,i}=\begin{cases}
g_{t,i}-\max_i g_{t,i},&\textrm{if}~ \norm{w_t}_1<1,\\
g_{t,i},&\textrm{if}~ \norm{w_t}_1=1,\\
g_{t,i}-\min_i g_{t,i},&\textrm{if}~ \norm{w_t}_1>1,
\end{cases}
\end{equation*}
and clearly, the exact same proof still holds. Another possible choice is
\begin{equation*}
z_{t,i}=\begin{cases}
g_{t,i}-\sum_i g_{t,i},&\textrm{if}~ \norm{w_t}_1<1,\\
g_{t,i},&\textrm{if}~ \norm{w_t}_1=1,\\
g_{t,i}-\inner{g_t}{x_t},&\textrm{if}~ \norm{w_t}_1>1.
\end{cases}
\end{equation*}
This is more analogous to the surrogate losses in existing specialized approaches \cite{luo2015achieving,orabona2016coin}.

Also, to justify the improvement of $\sqrt{\tv\cdot\kl}$ over $\sqrt{\kl}$, here is an example. For all $d\geq 3$, define $p,q\in\Delta(d)$ from
\begin{equation*}
p_1=\frac{1}{\sqrt{\log d}},\quad 
q_1=\frac{1}{d\sqrt{\log d}},
\end{equation*}
and
\begin{equation*}
p_i=\frac{1-p_1}{d-1},\quad q_i=\frac{1-q_1}{d-1}, \quad \forall i\in[2:d].
\end{equation*}
Then, 
\begin{equation*}
\tv(p||q)=\frac{1}{2}\spar{\abs{p_1-q_1}+(d-1)\abs{\frac{1-p_1}{d-1}-\frac{1-q_1}{d-1}}}=\abs{p_1-q_1}=\frac{d-1}{d\sqrt{\log d}},
\end{equation*}
\begin{align*}
\kl(p||q)&=p_1\log\frac{p_1}{q_1}+(d-1)\cdot \frac{1-p_1}{d-1}\log\frac{1-p_1}{1-q_1}\\
&=\sqrt{\log d}+\rpar{1-\frac{1}{\sqrt{\log d}}}\log\rpar{1-\frac{d-1}{d\sqrt{\log d}-1}}\\
&\geq \sqrt{\log d}+\log\rpar{1-\frac{d}{d\sqrt{\log d}-1}}=\sqrt{\log d}-o(1).
\end{align*}

Since we also have
\begin{equation*}
\kl(p||q)=\sqrt{\log d}+(1-p_1)\log\frac{1-p_1}{1-q_1}\leq \sqrt{d},
\end{equation*}
we can combine the above and obtain $\tv(p||q)\cdot\kl(p||q)\leq 1$ and $\kl(p||q)\geq \sqrt{\log d}-o(1)$. If our comparator $u$ and prior $\pi$ take the value of $p$ and $q$ respectively, then even without switching costs, Theorem~\ref{thm:lea} saves a $(\log d)^{1/4}$ factor from the existing comparator adaptive bounds. 

\section{Application to portfolio selection}\label{section:financial}

To complement our theoretical results, we present applications to a portfolio selection problem with transaction costs.\footnote{Code is available at \url{https://github.com/zhiyuzz/NeurIPS2022-Adaptive-Switching}.} Online portfolio selection has been studied by multiple communities, resulting in a large amount of literature (see \cite{li2014online,dochow2016online} for general expositions). Here we focus on an \emph{unconstrained} setting, allowing both short selling (i.e., holding negative amount of assets) and margin trading (i.e., borrowing money to buy assets). This is related, but different from classical settings in the literature, as discussed in Appendix~\ref{subsection:financial_comparison}.

\subsection{Problem setting}

We consider a market with $d$ assets and discrete trading period $t\in\N_+$. In the $t$-th round, an algorithm chooses a portfolio vector $x_t=[x_{t,1},\ldots,x_{t,d}]\in\R^d$, where $x_{t,i}$ is the \emph{number of shares} of the $i$-th asset that the algorithm suggests to hold. Compared to the previous round, we need to buy $x_{t,i}-x_{t-1,i}$ shares\footnote{W.l.o.g., assume $x_0=x_1$.} (or sell, if negative), which requires paying a $\lambda\abs{x_{t,i}-x_{t-1,i}}$ transaction cost. Then, the market reveals a number $g_{t,i}\in[-G,G]$, which represents the price change per share (of the $i$-th asset) in this round. This effectively increases the value of our portfolio by $\inner{g_t}{x_t}$.

The considered performance metric is the increased amount of \emph{wealth} on any time horizon $[1:T]\subset\N_+$, and such wealth includes the total value of our portfolio \emph{plus cash}. Our goal is to show that the performance of our algorithm is never much worse than that of any unconstrained \emph{Buy-and-Hold} (BAH) strategy, which picks a portfolio vector $u\in\R^d$ at the beginning and holds that amount throughout the considered time horizon. That is, we aim to upper bound $\sum_{t=1}^T\inner{-g_t}{x_t-u}+\lambda\sum_{t=1}^{T-1}\norm{x_t-x_{t+1}}_1$ for all $u\in\R^d$ and $T\in\N_+$. This is exactly the setting of our high dimensional OLO problem (Appendix~\ref{subsection:higherd}) with flipped gradients, therefore if we use our high dimensional OLO algorithm (Algorithm~\ref{alg:higherd}), then the same theoretical result (Theorem~\ref{thm:higherd}) carries over. 

\subsection{Comparison to the rebalancing setting}\label{subsection:financial_comparison}

The online portfolio selection problem has been studied both empirically and theoretically. Most theoretical works with adversarial guarantees consider the \emph{rebalancing} setting, pioneered by Cover \cite{cover1991universal} and followed by a series of works \cite{cover1996universal,helmbold1998line,kalai2002efficient,orseau2017soft,luo2018efficient,mhammedi2022damped,zimmert2022pushing}. Differences to our setting are discussed as follows. 
\begin{enumerate}
\item First, the rebalancing setting forbids short selling (i.e., $x_{t,i}<0$) and margin trading (i.e., borrowing cash to buy an asset), therefore the decision is modeled as a distribution $p_t\in\Delta(d)$ -- the algorithm redistributes its wealth according to this distribution in each round. In contrast, our setting allows both\footnote{Although we only consider the ideal case with zero interest rate on loans.}, so we call it ``unconstrained''. Similar to the loss-regret trade-off in OLO, allowing margin trading introduces a \emph{risk-return trade-off} in some sense: based on its own risk tolerance, one can trade off the best-case return with the worst-case loss on a Pareto-optimal frontier. 
\item Related to the above, existing works consider \emph{Constant Rebalanced Portfolios} (CRP, i.e., $p_t=p^*\in\Delta(d)$) as the benchmark class, and the goal is to lower bound the \emph{ratio} of the growth rate of the considered algorithm to the growth rate of the benchmark. Here we consider unconstrained \emph{Buy-and-Hold} (BAH) strategies as benchmarks, and we aim at an additive bound on the wealth. There have been discussions on the correct choice of benchmarks, but as suggested by a series of works \cite{cover1991universal,helmbold1998line,blum1999universal}, a major weakness of CRPs is the incorporation of transaction costs: such benchmark strategies lose money due to constant rebalancing in every round, which makes the performance guarantee vacuous in certain cases. In contrast, BAH benchmarks do not suffer from this issue. 

\item Finally, transaction costs can take many forms. Here we consider the special case that charges a fixed price \emph{per share}. This is different from the \emph{proportional transaction cost} in some prior works \cite{blum1999universal,gofer2014higher}, which is proportional to the \emph{total value} of the transaction. 
\end{enumerate}

We also note that our Algorithm~\ref{alg:lea} for LEA with switching cost is essentially a comparator adaptive improvement of the \emph{Exponentiated Gradient} (EG) algorithm adopted in \cite{helmbold1998line}. Therefore, it can be applied to the rebalancing setting, following the same argument there. 

\subsection{Synthetic market}\label{subsection:synthetic}

In this subsection, we present numerical results on synthetic markets. 

Two algorithms are tested, our high dimensional OLO algorithm (Algorithm~\ref{alg:higherd}, i.e., ``ours''), and the baseline from \cite{zhang2022adversarial} (its 1D version is surveyed as Algorithm~\ref{algorithm:1d} in Appendix~\ref{subsection:suboptimal}, and we extend it to high dimensions using the same coordinate-wise construction). Both algorithms require a confidence parameter $C$ -- they are both set to $1$ following the practice of comparator adaptive algorithms \cite{orabona2016coin,chen2022better,zhang2022adversarial}. A detailed justification is provided later.

As for the synthetic market, we let $G=1$, $\lambda=0.1$, and the market contains five assets with different return characteristics. Each $g_{t,i}$ is the summation of a i.i.d. noise, a periodic fluctuation and a constant trend. Specifically, we consider three different market return models. The first is
\begin{equation*}
g_{t,1}=0.4\cdot \mathrm{Uniform}[-1,1]+0.4\sin[(t/500)\cdot \pi]+0.2,
\end{equation*}
\begin{equation*}
g_{t,2}=0.5\cdot \mathrm{Uniform}[-1,1]+0.3\sin[(t/500+1/2)\cdot \pi]+0.2,
\end{equation*}
\begin{equation*}
g_{t,3}=0.6\cdot \mathrm{Uniform}[-1,1]+0.2\sin[(t/500+1)\cdot \pi]+0.2,
\end{equation*}
\begin{equation*}
g_{t,4}=0.7\cdot \mathrm{Uniform}[-1,1]+0.1\sin[(t/500+3/2)\cdot \pi]+0.2,
\end{equation*}
\begin{equation*}
g_{t,5}=0.8\cdot \mathrm{Uniform}[-1,1]+0.2.
\end{equation*}
The second model is
\begin{equation*}
g_{t,1}=0.2\cdot \mathrm{Uniform}[-1,1]+0.4\sin[(t/500)\cdot \pi]+0.4,
\end{equation*}
\begin{equation*}
g_{t,2}=0.3\cdot \mathrm{Uniform}[-1,1]+0.3\sin[(t/500+1/2)\cdot \pi]+0.4,
\end{equation*}
\begin{equation*}
g_{t,3}=0.4\cdot \mathrm{Uniform}[-1,1]+0.2\sin[(t/500+1)\cdot \pi]+0.4,
\end{equation*}
\begin{equation*}
g_{t,4}=0.5\cdot \mathrm{Uniform}[-1,1]+0.1\sin[(t/500+3/2)\cdot \pi]+0.4,
\end{equation*}
\begin{equation*}
g_{t,5}=0.55\cdot \mathrm{Uniform}[-1,1]+0.45.
\end{equation*}
The third model is the same as the second one, except we replace $g_{t,5}$ by
\begin{equation*}
g_{t,5}=0.5\cdot \mathrm{Uniform}[-1,1]+0.5.
\end{equation*}

For each market return model, we test both algorithms in 50 random trials, and the increased wealth $\sum_{\tau=1}^t\inner{g_\tau}{x_\tau}-\lambda\sum_{\tau=1}^{t-1}\norm{x_{\tau}-x_{\tau+1}}_1$ (mean $\pm$ std) is plotted in Figure~\ref{fig:synthetic_market}, higher is better. In all three setting, our algorithm beats the baseline by a considerable margin, due to being a lot less conservative. 

\begin{figure}[ht]
     \centering
     \begin{subfigure}[b]{0.32\textwidth}
         \centering
         \includegraphics[width=\textwidth]{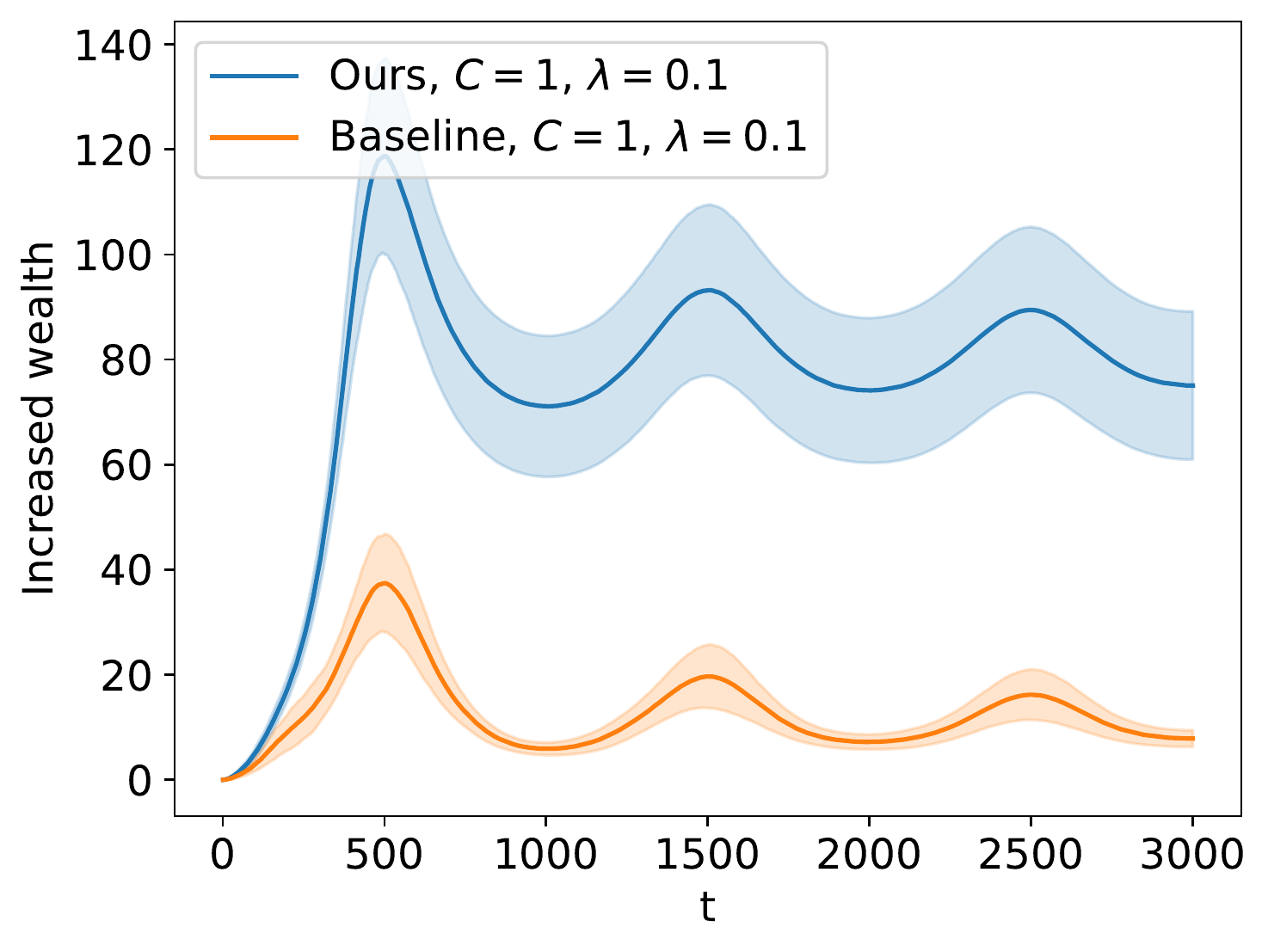}
     \end{subfigure}
     \hfill
     \begin{subfigure}[b]{0.32\textwidth}
         \centering
         \includegraphics[width=\textwidth]{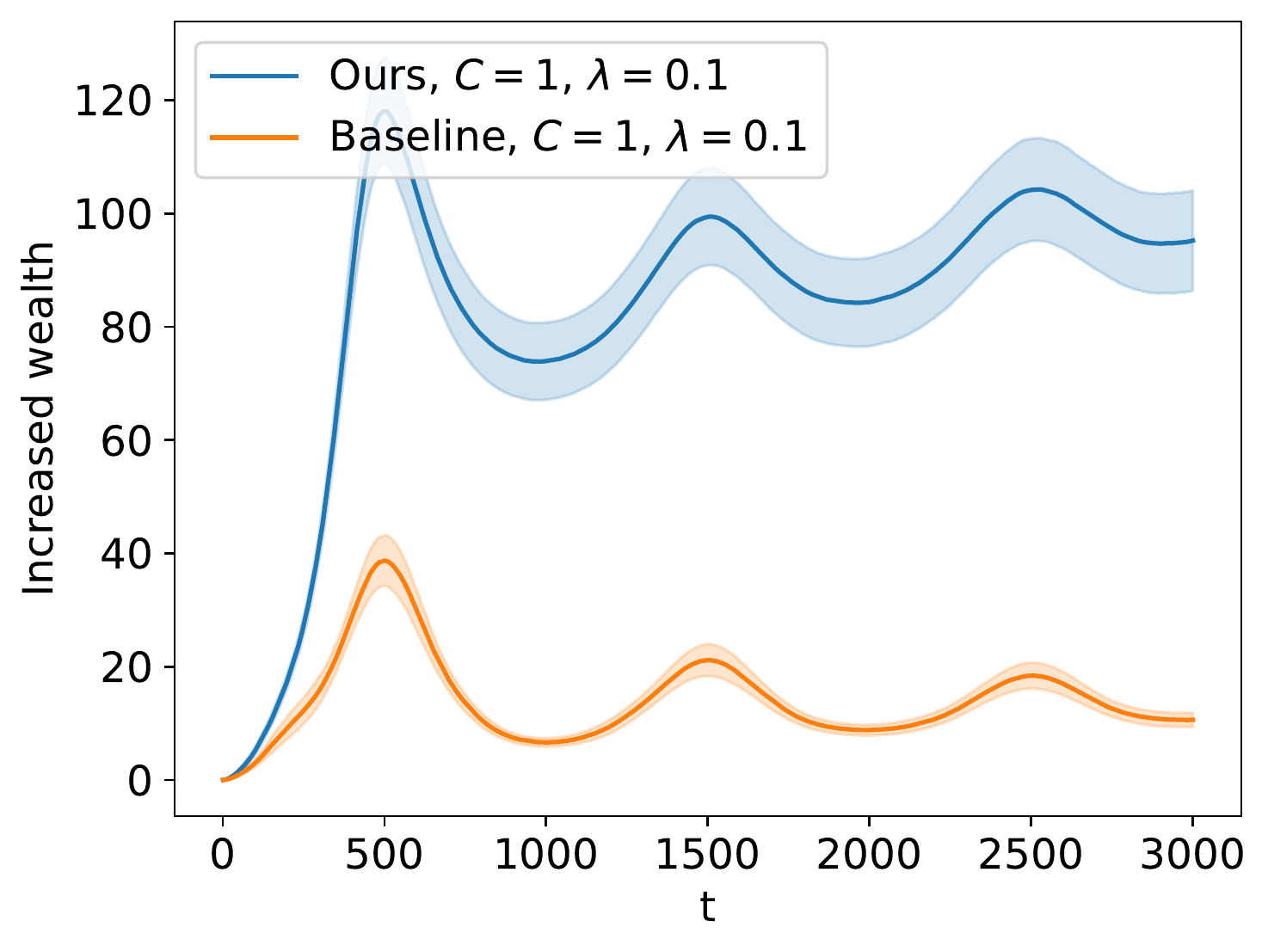}
     \end{subfigure}
     \hfill
     \begin{subfigure}[b]{0.32\textwidth}
         \centering
         \includegraphics[width=\textwidth]{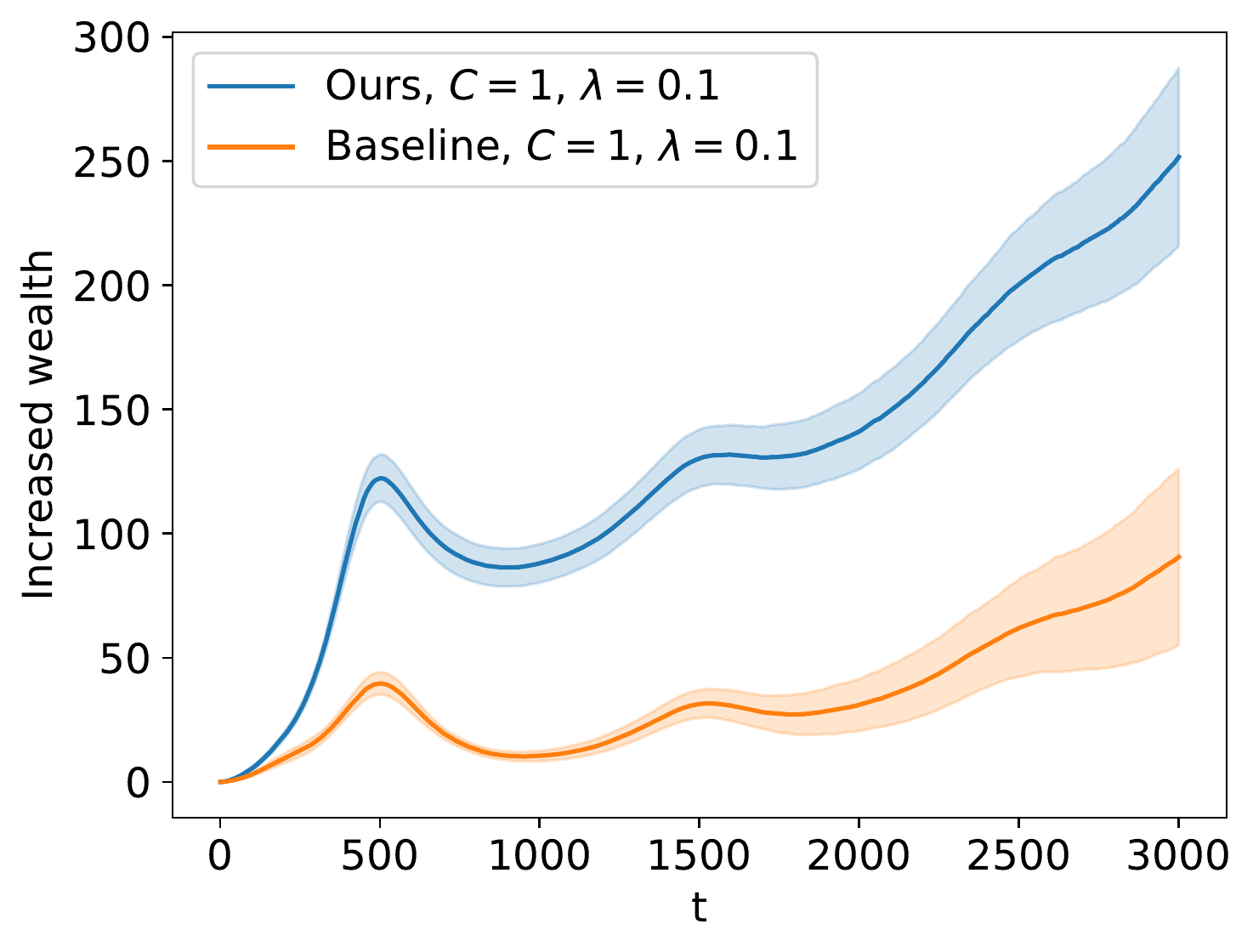}
     \end{subfigure}
    \caption{Synthetic market experiment with different market models. From left to right: the first, the second and the third market model.}
        \label{fig:synthetic_market}
\end{figure}

\paragraph{Discussion on $C$} We remark that setting $C=1$ for both algorithms may create confusion. Let us give it a detailed justification. 

As surveyed in the Introduction, comparator adaptive algorithms are called ``parameter-free'' due to historical reasons. As the name suggests, one may question the existence of \emph{any} hyperparameter in such ``parameter-free'' algorithms. The classical rationale is the following: comparator adaptive regret bounds depend on the hyperparameter $C$ logarithmically, whereas minimax regret bounds depend on the learning rate (and its inverse) linearly. In this regard, comparator adaptive algorithms are provably less sensitive to the correct setup, therefore as a rule of thumb, most practices \cite{orabona2016coin,chen2022better,zhang2022adversarial} simply use $C=1$ without requiring any domain knowledge. Such a default setup removes hyperparameter tuning, which is the most attractive feature of these algorithms in practice. Figure~\ref{fig:synthetic_market} shows the advantage of our algorithm when both algorithms are in this default, parameter-free implementation. 

Nonetheless, for specific tasks like portfolio selection, tuning $C$ can affect the actual performance one cares about (although violating the main purpose of comparator adaptivity). Intuitively, fixing the market, an aggressive trader with a worse strategy could make more profit than a conservative trader with a better strategy. Reflected in our experiment, since the market model does not depend on the invested amount, the baseline with a 5 times larger $C$ simply obtains a 5 times larger increased wealth and beats our algorithm (at certain $t$), cf. Figure~\ref{fig:synthetic_more} (Left). Of course, one can also tune our algorithm with a correspondingly scaled $C$ and beat the baseline again, cf. Figure~\ref{fig:synthetic_more} (Right), just like when both algorithms are in their parameter-free implementation. 

\begin{figure}[ht]
     \centering
     \begin{subfigure}[b]{0.49\textwidth}
         \centering
         \includegraphics[width=0.7\textwidth]{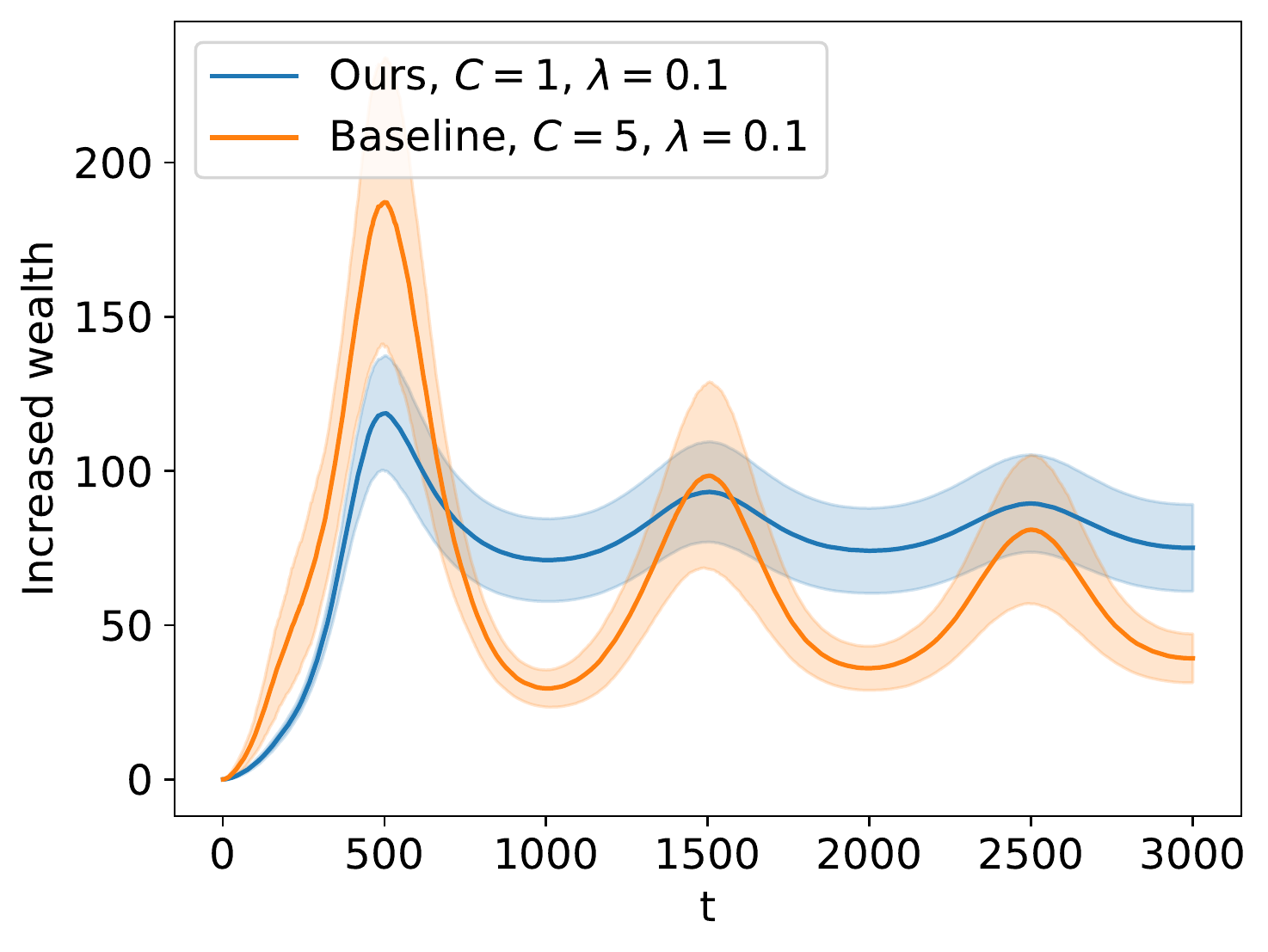}
     \end{subfigure}
     \hfill
     \begin{subfigure}[b]{0.49\textwidth}
         \centering
         \includegraphics[width=0.7\textwidth]{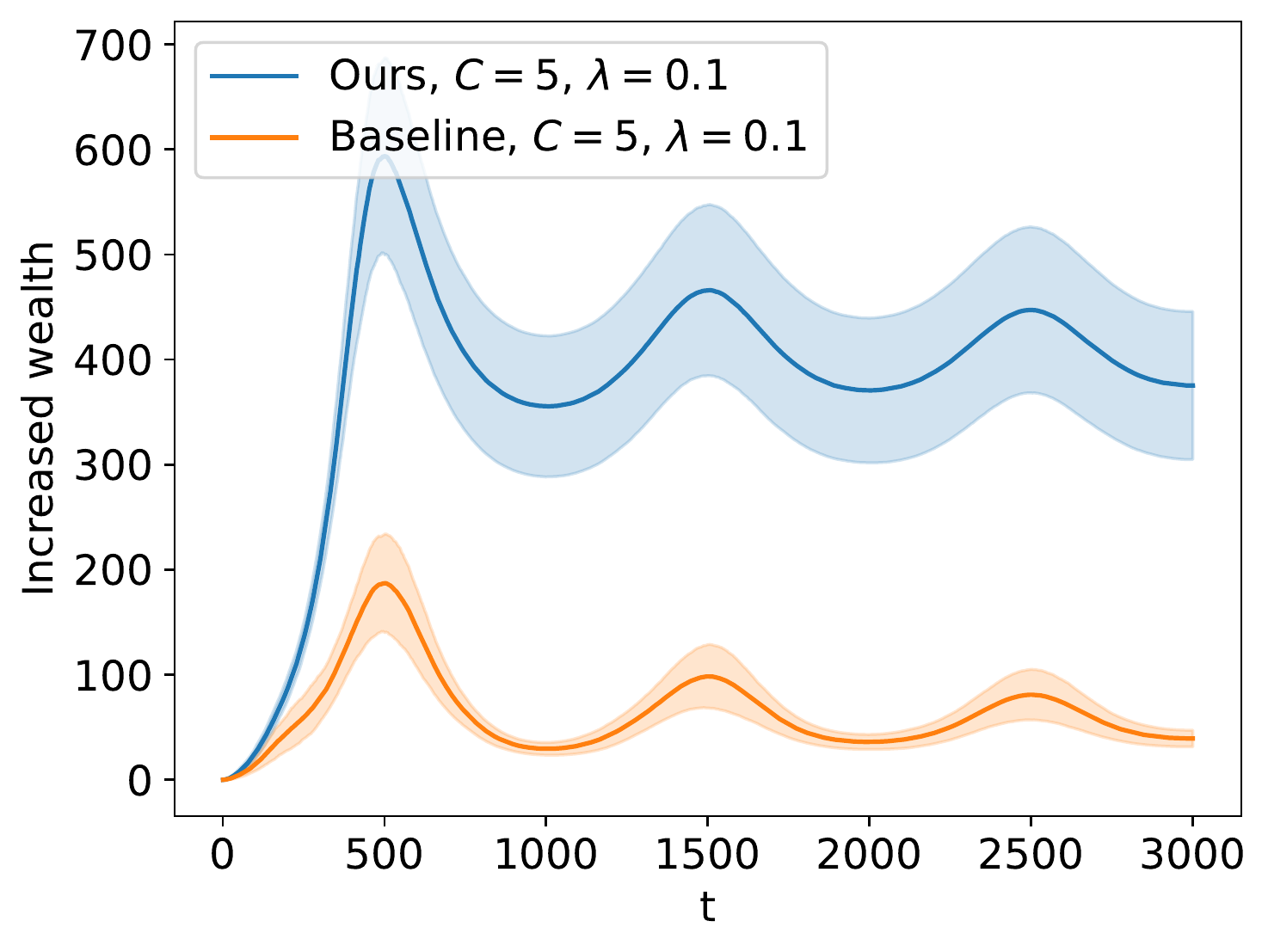}
     \end{subfigure}
    \caption{Synthetic market experiment with tuned $C$; not a parameter-free implementation. Left: only tuning the baseline. Right: tuning both our algorithm and the baseline.}
        \label{fig:synthetic_more}
\end{figure}

Therefore, if tuning $C$ is allowed, then comparing our algorithm to the baseline amounts to comparing two \emph{algorithm classes} both parameterized by $C$. A skeptical reader may wonder if the superior performance of our algorithm in the parameter-free setting is due to the confidence encoding rather than a better algorithm design. That is, is it possible that a baseline with a larger $C$ can consistently outperform our algorithm with $C=1$? We provide evidence against this hypothesis, by increasing $\lambda$ while keeping $C=1$ for our algorithm and $C=5$ for the baseline; results are plotted in Figure~\ref{fig:synthetic_lambda}. It shows that even when the baseline is given an advantage ($C=5$), our algorithm is still better at handling transaction costs due to an improved design. This is aligned with the superiority of our theoretical results. 

\begin{figure}[ht]
     \centering
     \begin{subfigure}[b]{0.32\textwidth}
         \centering
         \includegraphics[width=\textwidth]{Synthetic_tuneC_1.pdf}
     \end{subfigure}
     \hfill
    \begin{subfigure}[b]{0.32\textwidth}
         \centering
         \includegraphics[width=\textwidth]{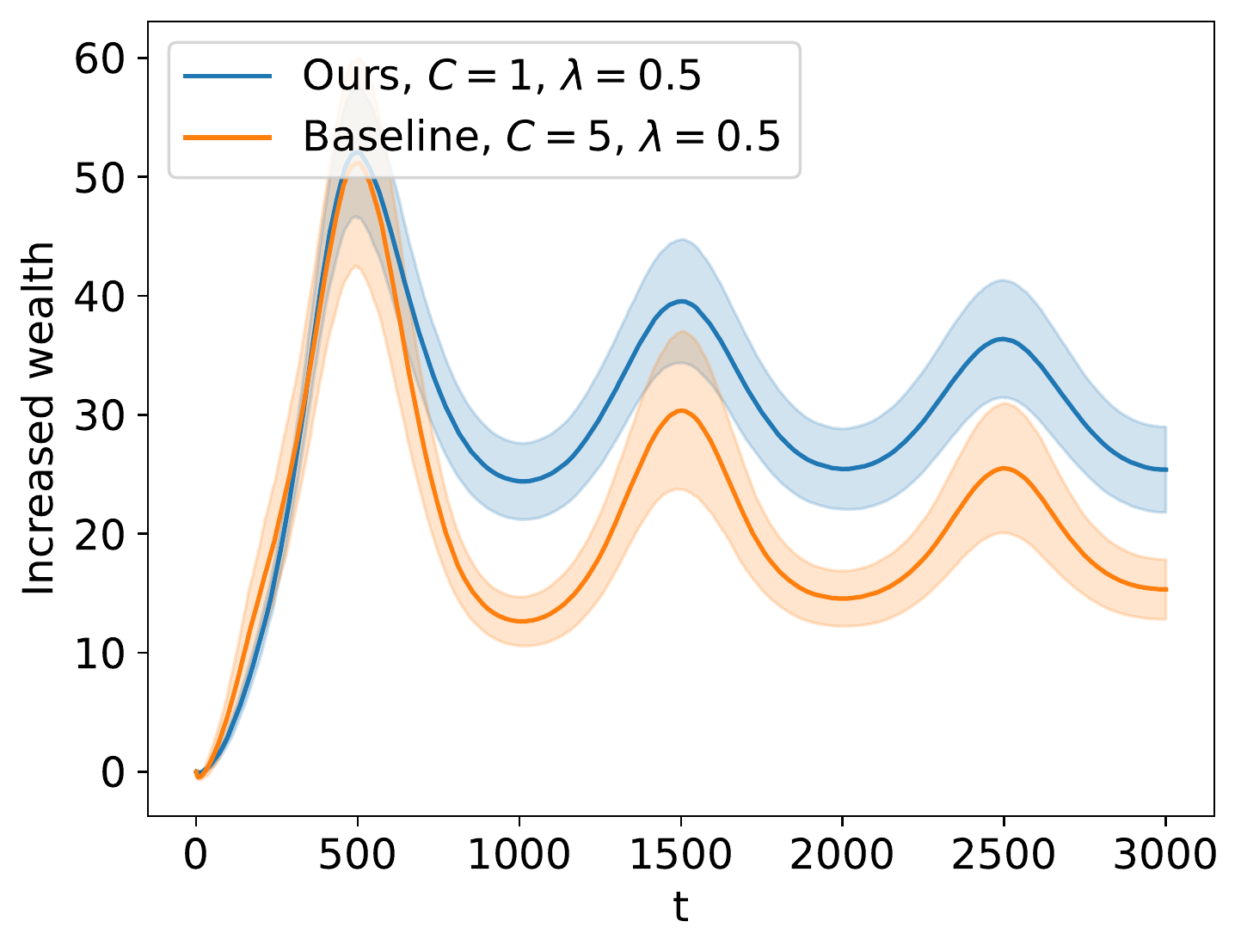}
     \end{subfigure}
     \hfill
     \begin{subfigure}[b]{0.32\textwidth}
         \centering
         \includegraphics[width=\textwidth]{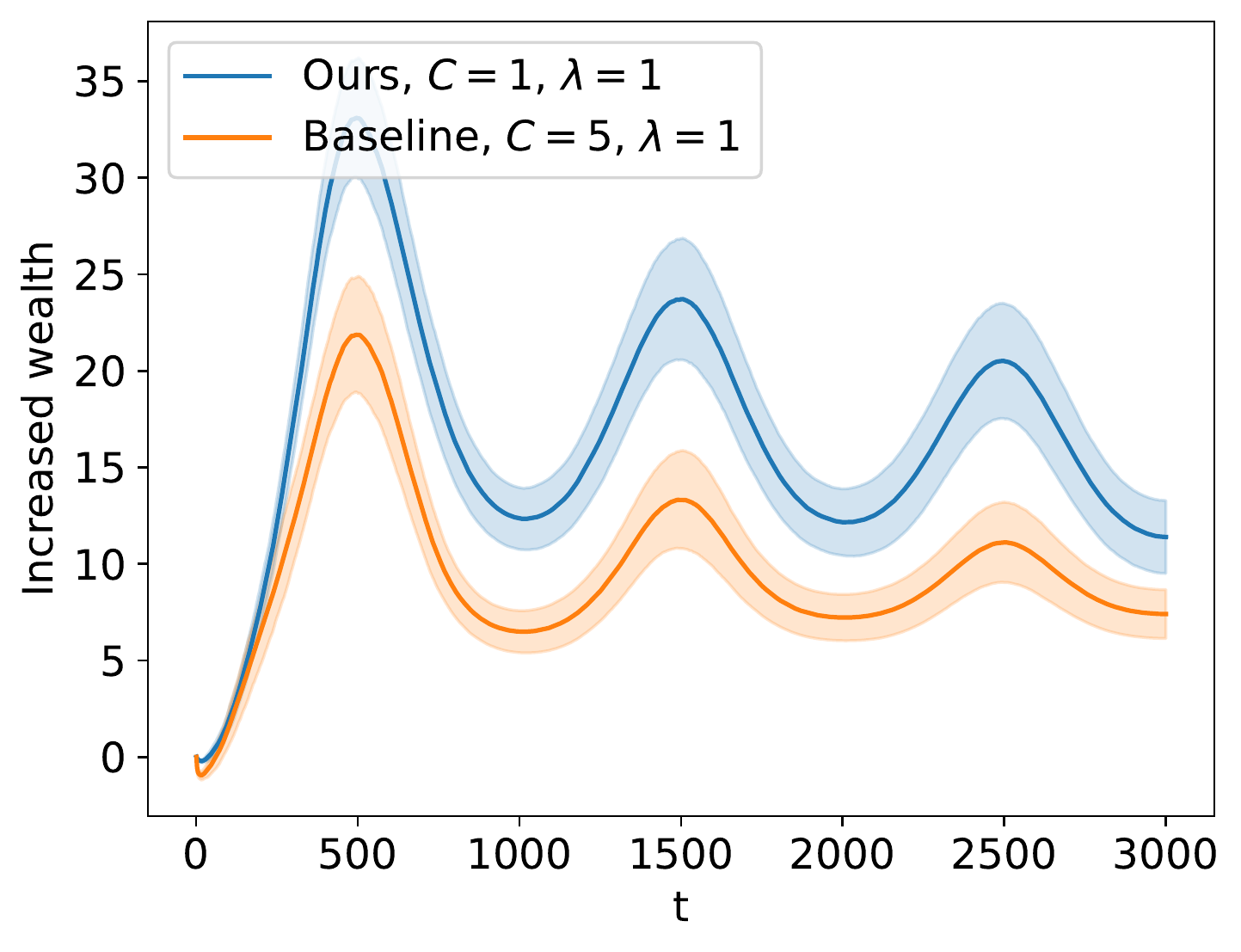}
     \end{subfigure}
        \caption{Synthetic market experiment with increasing $\lambda$. Left: $\lambda=0.1$. Middle: $\lambda=0.5$. Right: $\lambda=1$. The baseline is given an advantage ($C=5$), while our algorithm is in its default parameter-free implementation ($C=1$). It shows our algorithm indeed handles transaction costs better.}
        \label{fig:synthetic_lambda}
\end{figure}

\subsection{Historical stock data}\label{subsection:historical}

Finally, we present some preliminary numerical results on historical US stock data\footnote{US stock price data is publicly available. We retrieved the data from Yahoo Finance website. \url{https://finance.yahoo.com/}}. Eight stocks (Table~\ref{table:1}) are considered on a time period of 5 years (1/1/2013 to 1/1/2018). Our algorithm trades once per day after the market closes, based on the closing price. We take the difference between the closing price on the $(t+1)$-th day and the closing price on the $t$-th day, and define it as the market vector $g_t$. The largest single day price change for any stock is less than \$15, therefore $G$ is set in a posterior manner to 15. We consider a hypothetical broker that charges \$0.1 per share, therefore define $\lambda=0.1$.

\begin{table}[ht]
  \caption{List of considered stocks}
  \label{table:1}
  \centering
  \begin{tabular}{ll}
    \toprule
 Company & Symbol\\    \midrule
 Apple Inc. & AAPL\\
 Berkshire Hathaway Inc. Class B & BRK.B\\
 Meta Platforms Inc. & FB\\
 Johnson \& Johnson & JNJ\\
 JPMorgan Chase \& Co. & JPM\\
 Microsoft Corporation & MSFT\\
 Pfizer Inc. & PFE\\
 Exxon Mobil Corporation & XOM\\
    \bottomrule
  \end{tabular}
\end{table}

Same as the synthetic market experiment, we test our algorithm against the baseline from \cite{zhang2022adversarial}. Our algorithm is in its default parameter-free implementation ($C=1$). However, setting $C=1$ also for the baseline is too conservative, which means the baseline hardly makes any investment, making the comparison less interesting. Therefore we set $C=10$ for the baseline, thus giving it an advantage at the beginning. In this way, the increased wealth of the two algorithms is roughly comparable. 

We plot the results in Figure~\ref{fig:stock}. Specifically, Figure~\ref{fig:stock} (Left) shows the increased wealth (in USD) over the considered time period. Figure~\ref{fig:stock} (Right) shows the cumulative amount of investment (in USD), which is the total net amount of cash the investor uses to buy stocks (i.e., increases when buying, and decreases when selling), plus the transaction costs paid to the broker. Before analyzing this result, we note that such a ``cumulative investment'' only makes sense in our setting, due to a fundamentally different mechanism compared to the rebalancing approach \cite{cover1991universal}: in the latter, the investor is \emph{self-financed}, i.e., it is given a certain budget at the beginning and never adds more money from external sources after that. In contrast, the investor in our setting can add more money at any time it wishes. 

\begin{figure}[ht]
     \centering
     \begin{subfigure}[b]{0.49\textwidth}
         \centering
         \includegraphics[width=0.82\textwidth]{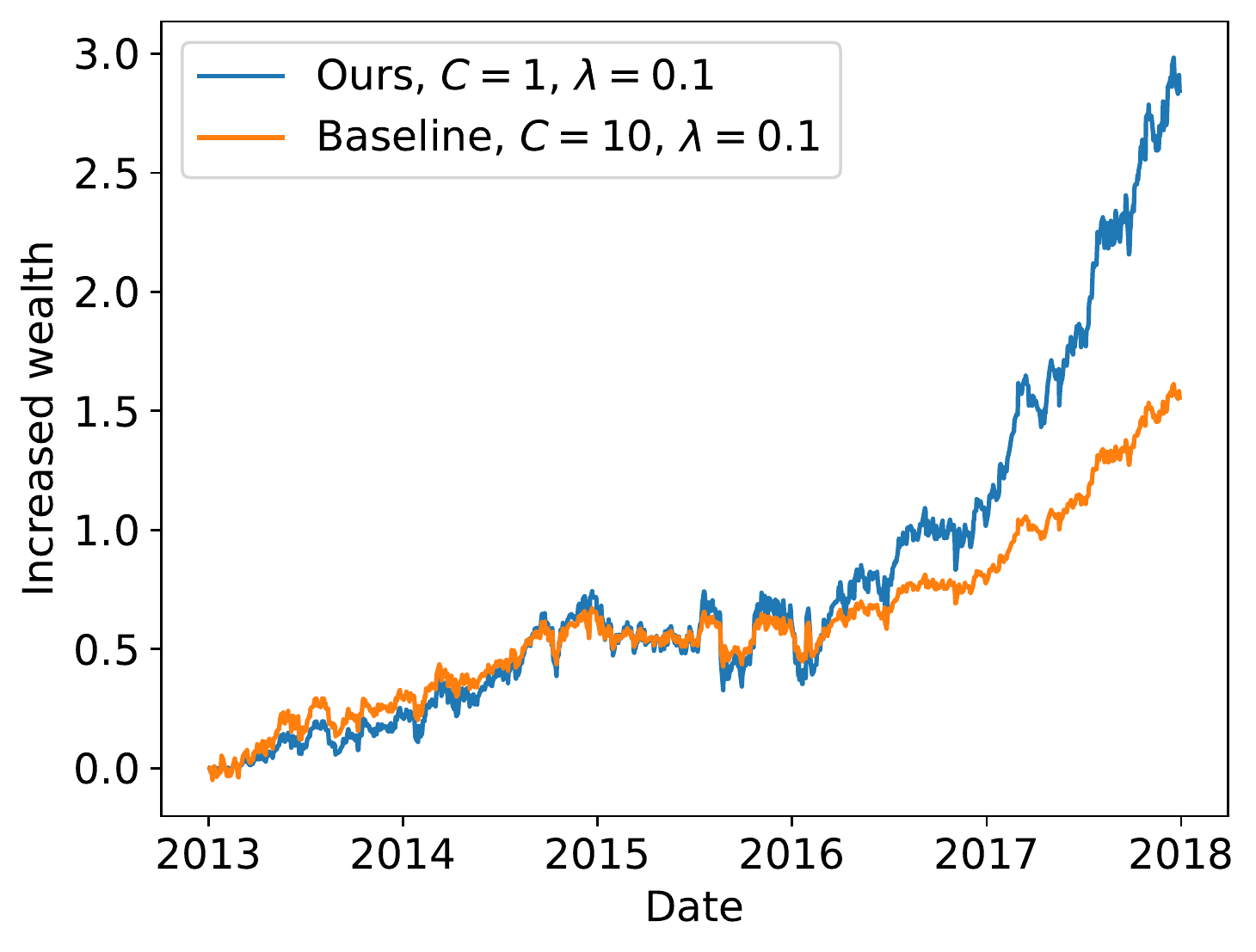}
     \end{subfigure}
     \hfill
     \begin{subfigure}[b]{0.49\textwidth}
         \centering
         \includegraphics[width=0.8\textwidth]{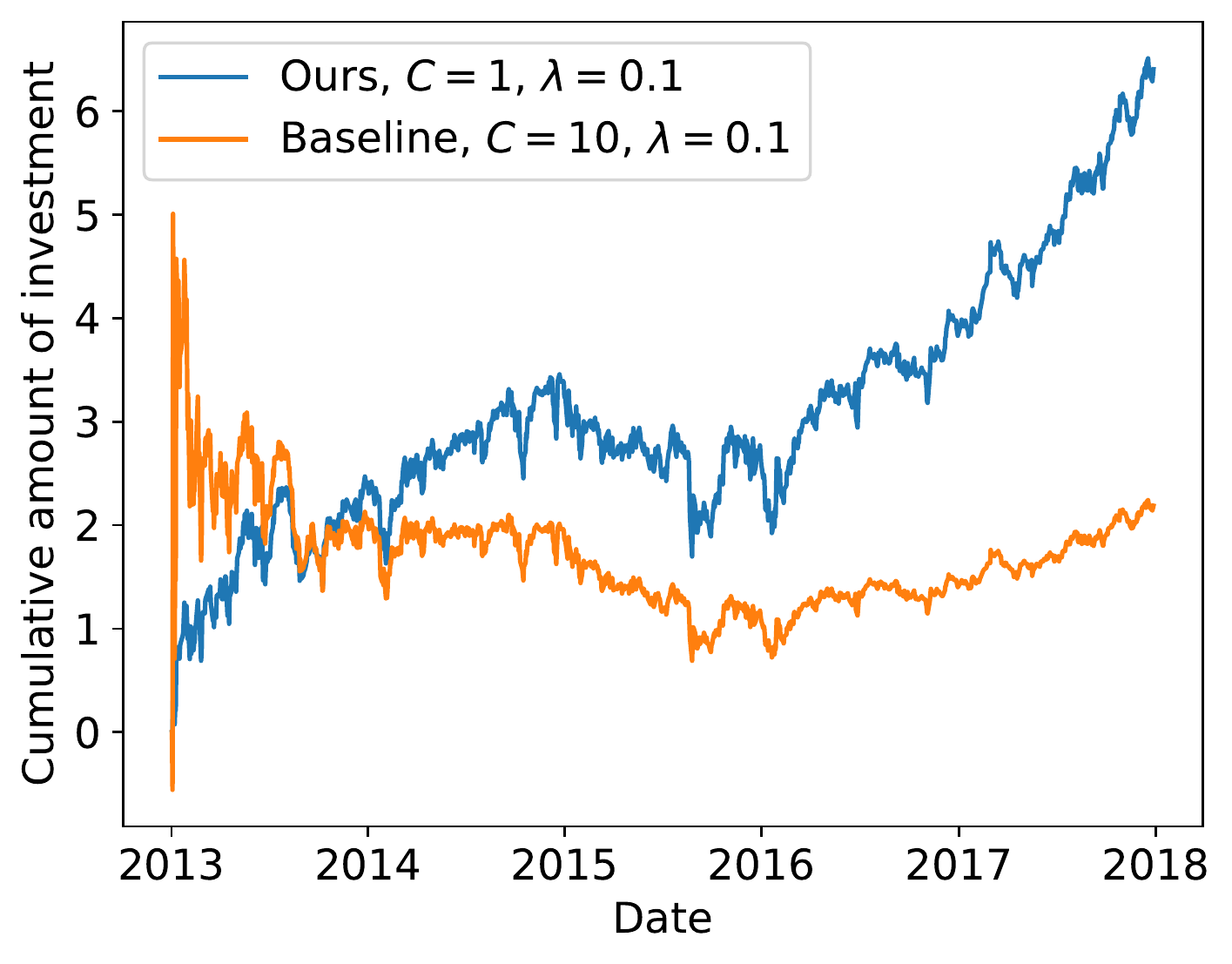}
     \end{subfigure}
    \caption{Experiment on historical US stock data. Left: the increased wealth of the two algorithms. Right: total amount of investment since the start of the experiment (1/1/2013), including the transaction costs paid to the broker. }
        \label{fig:stock}
\end{figure}

From the plot we can see that the baseline is more aggressive at the beginning, due to a much larger $C$. Therefore, it slightly makes more profit during 2013-2014. When the market oscillates and declines in 2015 and 2016, the two algorithms perform roughly the same, while the baseline has a lower risk due to holding a smaller portfolio at the time. However, the major difference starts after mid-2016, when the market grows rapidly. Our algorithm is able to identify this trend and quickly increase the amount of investment. This brings a lot more profit than the baseline, which hardly recovers its confidence from the declining market in the previous year. Such an advantage of our algorithm is partly due to the better control of switching costs, and partly due to a better risk-return trade-off discussed in Appendix~\ref{subsection:conversion}. 

Our experiment also shows a limitation of our unconstrained investment setting. Throughout this five year period, our algorithm invests a total amount of $\sim$\$6.5 (including the transaction costs), and makes a total profit of $\sim$\$3. However, in practice, one typically invests a lot more than this (let's say, \$10,000), and expect a similar rate of return. Our setting does not model such a budget explicitly; instead, it relies on the comparator adaptivity of the trading algorithms to increase the invested amount. Such a process can be slow, especially since we only consider trading once per day. Therefore, to use our algorithm in real trading situations, one has to tune the confidence parameter $C$ to implicitly take his budget and tolerable risk into account. For example, using our algorithm with $C=1000$ would result in investing \$6,500 throughout the five year period, and make a total profit of \$3,000. The connection of this approach to rebalancing could be an interesting direction for future works. 

\end{document}